\documentclass{article}

\usepackage{arxiv}
\usepackage[utf8]{inputenc} 
\usepackage[T1]{fontenc}    
\usepackage{url}            
\usepackage{booktabs}       
\usepackage{amsfonts}       
\usepackage{nicefrac}       
\usepackage{microtype}      
\usepackage{lipsum}
\usepackage{graphicx}
\usepackage{amsmath,amssymb,amsfonts}
\usepackage{algorithmic}
\usepackage{graphicx}
\usepackage{textcomp}
\usepackage{xcolor}
\usepackage{subfigure}
\usepackage[linesnumbered,ruled]{algorithm2e}
\usepackage{filecontents}
\usepackage{array}
\usepackage{wrapfig}
\usepackage{multirow}
\usepackage{tabu}
\usepackage{color,soul}
\usepackage{amsthm}
\usepackage{balance}
\usepackage{times}
\usepackage[hidelinks]{hyperref}
\usepackage[small]{caption}
\urlstyle{same}

\newtheorem{theorem}{Theorem}
\newtheorem{myLemma}{Lemma}
\newtheorem{myProblem}{Problem}

\newtheorem{property}{Property}
\title{Weighted Aggregating Stochastic Gradient Descent for Parallel Deep Learning}

\author{
 Pengzhan Guo \\
  Department of Applied Mathematics and Statistics\\
  Stony Brook University\\
  Stony Brook, USA, 11794 \\
  \texttt{pengzhan.guo@stonybrook.edu} \\
   \And
  Zeyang Ye \\
  Department of Applied Mathematics and Statistics\\
  Stony Brook University\\
  Stony Brook, USA, 11794 \\
  \texttt{zeyang.ye@stonybrook.edu} \\
  \And
  Keli Xiao \\
  College of Business\\
  Stony Brook University\\
  Stony Brook, USA, 11794 \\
  \texttt{keli.xiao@stonybrook.edu} \\
  \And
  Wei Zhu \\
  Department of Applied Mathematics and Statistics\\
  Stony Brook University\\
  Stony Brook, USA, 11794 \\
  \texttt{wei.zhu@stonybrook.edu} \\
}

\begin{document}
\maketitle
\begin{abstract}
This paper investigates the stochastic optimization problem with a focus on developing scalable parallel algorithms for deep learning tasks. 
Our solution involves a reformation of the objective function for stochastic optimization in neural network models, along with a novel parallel strategy, coined weighted aggregating stochastic gradient descent (\textit{WASGD}). 
Following a theoretical analysis on the characteristics of the new objective function, \textit{WASGD} introduces a decentralized weighted aggregating scheme based on the performance of local workers.
Without any center variable, the new method automatically assesses the
importance of local workers and accepts them according to their contributions. Furthermore, we have developed an enhanced version of
the method, \textit{WASGD+}, by (1) considering a designed sample order and (2) applying a more advanced weight evaluating function. 
To validate the new method, we benchmark our schemes against several popular algorithms including the state-of-the-art techniques
(e.g., elastic averaging SGD) in training deep neural networks for classification tasks.
Comprehensive experiments have been conducted on four classic datasets, including the \textit{CIFAR-100}, \textit{CIFAR-10}, \textit{Fashion-MNIST} and \textit{MNIST}. 
The subsequent results suggest the superiority of the \textit{WASGD} scheme in accelerating the training of deep architecture. 
Better still, the enhanced version, \textit{WASGD+}, has been shown to be a significant improvement over its basic version.

\end{abstract}


\maketitle

\section{Introduction}
Deep learning, largely based on multi-layer neural networks resembling the way human brain functions, and proven to be superior at pattern recognition tasks \cite{zhang2019deep} based on big and unstructured data such as voice and face recognitions, is increasingly touted as the mathematics engine of artificial intelligence.
Along with the rapid growth in data size and problem complexity, the design of 
efficient parallel computing algorithms for deep learning has become
increasingly critical for its optimal real time deployment. 
Stochastic gradient descent (SGD) proposed by Robbins and Monro \cite{robbins1951stochastic} replacing the gradient search from based on the entire data to a random subset of the data, is an efficient optimization technique for big data problems. 
Parallel SGD aims to achieve further speedup of the optimization process while maintaining good learning performance.

There are three main challenges in parallel SGD.
First, for application purpose, people needs more efficient parallel algorithm
for faster real time deployment in big data. 
Secondly, the communication burden is still very heavy (e.g., wasting time for waiting straggling workers). 
Lastly, the proposed parallel method should be stable in most cases to enable convergence. 
Zinkevich \textit{et al.} \cite{zinkevich2010parallelized} proposed the first parallel SGD method called \textit{SimuParallel SGD}, a synchronous algorithm that averages the sum of all local workers' parameters in the training process.
For reducing the waiting time of local workers, Chen \textit{et al.} \cite{chen2016revisiting} added backup workers in their method.
Their method partially solves the idle time problem for synchronous algorithm.
Moreover, Recht \textit{et al.} \cite{recht2011hogwild} opens the access of global parameters and allow workers update global parameters as soon as possible.
Collectively, these recent progress still needs further improvement. 
In terms of high efficiency, the methods proposed in \cite{chen2016revisiting} directly ignores some workers' results and thus limited their efficiencies.
Regarding less communication time, the method proposed in \cite{zinkevich2010parallelized} still fall short due to its time being linearly related to the sample size.
In terms of extensive applications, methods from \cite{zinkevich2010parallelized} and \cite{recht2011hogwild} need special requirements (e.g. convex situation,uniform memory access) that are not always attainable.

To address these issues, we have proposed a novel distributed parallel method coined the Weighted Aggregating SGD (\textit{WASGD}) in a preliminary
work of this paper 
\cite{guo2019weighted}. 
Our method enables local workers to accept the aggregated parameters of all workers with evaluation weights.  
Our main idea is as follows.
In the parallel system, we first let each local worker updates its local parameters. Then, the communication among local workers is based on the local performance on loss. 
Finally, the update of local parameters is related to a weighted combination of the parameters of all local workers taken in time and space. 

To sum up, the main contributions of \cite{guo2019weighted} are four-fold.
First, by assigning weights for different workers based on their performances, \textit{WASGD}, a new parallel SGD method, is proposed. 
A series of loss estimation techniques to balance the performance evaluation accuracy and computation efforts have also been developed.
Second, the convergence of \textit{WASGD} is theoretically proven. 
Third, the estimation method in \textit{WASGD} incurs no extra computing time, which contributes to the better speedup of our method. 
Last, \textit{WASGD} is applied to the convolutional neural networks (CNN). The results consistently confirm the superiority of \textit{WASGD}, indicating its strength in being effective in different applications.

However, there are still some issues with the basic \textit{WASGD} method that needs additional considerations.
As suggested by \cite{lopes2017facial}, different orders may lead to different impacts on the final result.
The basic \textit{WASGD} in \cite{guo2019weighted} goes through the samples in a completely random order and fails to consider the effect of sample order.
Also, its original weight function may result in ineffectiveness in handling different weighting strategies.
Moreover, previous work lacks theoretical analysis on the model variance of weighted cases and sufficient discussions on some important parameters, such as the parameter ($\beta$) that controls the changing percentage of local workers.
Here, we focus on addressing these issues. 
This paper differs from \cite{guo2019weighted} and contributes to the literature further in four ways:
\begin{itemize}
     \item We develop an enhanced version of \textit{WASGD}, namely \textit{WASGD+}, by considering the sample order. 
     The enhanced method also introduces a more flexible weight evaluating method, that is capable of computing weights under different strategies.
    \item 
    By formulating a general form of the parallel optimization
    problem for deep learning, we have theoretically enabled
    the stability of our method. We have also discovered the essential
    connection between parallel SGD and mini-batch
    by changing the communication period, which guides
    us in regulating the performance of parallel SGD more
    efficiently.
     \item We perform comprehensive parameter analysis based on the new method.
     We numerically analyze the impact of the parameter $\beta$ and find that the full acceptance of the communication result is not always the optimal choice. 
     We also compare the performances under different weighting strategies and find the condition when weighted case is better than equally weighted case.
    \item  To show the consistency and practical value of the new method, in addition to conducting original experiments, we conduct further test a more complex dataset, \textit{CIFAR-100}.
    We also conduct numerous additional experiments for the selection of weighting strategy and the value of $\beta$.
    All results suggest the superority of \textit{WASGD+} and its stability in handling different deep learning applications.
\end{itemize}
The rest of the paper is organized as follows. In Section~\ref{sec:preliminary}, we introduce the overall backgrounds, including the preliminary definitions and the problem formalization. Section~\ref{sec:methodology} presents the update rule for \textit{WASGD+} based on the modified objective function, along with our weight estimation, weight evaluating function and sample order generation methods.
Following that, we provide the convergence and variance analysis for \textit{WASGD+}  under our experiment settings in Section~\ref{sec:conv} and Section~\ref{sec:varana}.
In Section~\ref{sec:ex}, we show numerous experiment results on multiple datasets and detailed analysis on \textit{CIFAT-10} and \textit{CIFAR-100}.
After that, we summarize additional related work in Section~\ref{sec:relatedwork} and finally conclude in
Section~\ref{sec:conclusion}.
\section{Preliminaries and Problem Statement}
\label{sec:preliminary}
We consider minimizing the function $F(x)$ in a parallel environment    \cite{bertsekas1989parallel} with \textit{p} workers. 
The stochastic optimization problem can be formulated as follows.
\begin{equation}
   \min_{x} F(x):= \min_{x} E\left[f(x,\xi)\right],
\end{equation}
where $x$ represents model parameters and $\xi$ is a random variable that follows a probability distribution $Q$ over $\Omega$ such that $F(x)=\int_\Omega f(x,\xi)Q(\xi)d\xi$. 
In the parallel environment, as discussed in \cite{ye2018unified}, one needs a nuanced formulation of the objective function. 
We found a good starting point in the Elastical Averaging SGD (\textit{EASGD}) method \cite{zhang2015deep} where it reformulated the objective function with $p$ workers as:
\begin{equation}\label{eqn:2}
  \min_{x^1,x^2...x^p,\tilde{x}}\sum_{i=1}^{p}\left(E\left[f(x^i,\xi^i)\right]+\frac{\lambda}{2}\left\|x^i-\tilde{x}\right\|^2\right),
\end{equation}
where $x^i$ is the variable (parameters) of local worker $i$; $\tilde{x}$ represents the parameters for the center variable, which is stored by the master processor that manages the communication.
Related update rules are as follows.

\begin{equation}\label{eqn:3}
x^{i}_{t+1}=x^i_t-\eta g^i_t \left( x^i_t \right) - \alpha\left( x^i_t-\tilde{x} \right),
\end{equation}
and
\begin{equation}\label{eqn:4}
\tilde{x}_{t+1}=(1-p\alpha)\tilde{x}+\alpha{\sum_{i=1}^{p}x^i_t},
\end{equation}
where $\eta$ is the learning rate; $\alpha=\eta\lambda$ represents the moving rate.
During the communication, \textit{EASGD} lets local workers only communicate with the center variable, and the moving rate $\alpha$ is preferred to be small leading to more exploration efforts of local workers. 
This setting can result in a serious issue in the searching process.

For the main method in \cite{zhang2015deep}, assuming that the communication
order of local workers denotes their indices and the communication
period is $\tau$. 
After finishing communicating with all the workers, the modified parameters of the master can be written as:
\begin{equation}\label{eqn:upeasgd}
 \tilde{x}_{t+p}=(1-p\alpha)^p \tilde{x}_t+\sum _{i=1}^p \alpha (1-p\alpha)^{p-i}x^i_t. 
\end{equation}
Since $1-p\alpha<1$, Equation \eqref{eqn:upeasgd} indicates that the maximum weight can only be assigned to $\tilde{x}_t$ or the slowest worker $x_t^p$.

If we assume that the maximum weight was given to the master's previous parameters $\tilde{x}_t$, by applying Bernoulli's inequality \cite{bullen2013handbook}, we get $(1-p\alpha)^p \geq 1-p^2\alpha$. Then we have:
\begin{equation}\label{eqn:condeasgd}
    \begin{split}
        &
        (1-p\alpha)^p > \alpha
        \Leftarrow 1-p^2\alpha>\alpha \Leftarrow \alpha < \frac{1}{1+p^2}.
    \end{split}
\end{equation}
Equation \eqref{eqn:condeasgd} provides a necessary condition when the biggest weight was given to master's previous parameters.
Due to the preference of small $\alpha$, the center master $\tilde{x}$ has higher possibility to be allocated the maximum weight.
Given  that $\tilde{x}$ initialized at the beginning of the algorithm and a small $\alpha$, the center variable must have a high chance to keep bad parameters for local workers, which will kill a large amount of time to fix in the algorithm.

To address the unreasonable assignment issue, we propose to remove $\tilde{x}$ and introduce a new term which we call \textit{weighted aggregation} in the quadratic penalty term \cite{nocedal2006numerical} to generate a more healthy allocation, which shares the similar communication strategy in \cite{ye2018multi, ye2018applying}. 
The combination term aggregates the parameters of all local workers and assigns larger weights to better performed local workers.
Such settings can make sure all workers search around the current optimization, and enable local workers to do many explorations at the same time. 
Thus, we propose a new format of the stochastic optimization as follows.
\begin{myProblem}[The Parallel Stochastic Optimization Problem]

\begin{equation}\label{eqn:5}
  \min_{x^1,x^2...x^p}\sum_{i=1}^{p}\left(E\left[f(x^i,\xi^i)\right]+\frac{\lambda}{2}\left\|x^i-\sum_{j=1}^{p}\frac{w^{j}x^{j}}{\sum_{k=1}^{p}w^k}\right\|^2\right),
\end{equation}
where ${w^j}$ denotes the weight for $j^{th}$ worker; ${x^i}$ represents the parameters of local worker $i$; and $\xi^i$ follows the same distribution $Q$.
\end{myProblem}

Note that we consider all processors can sample the entire dataset. 
The problem of the equivalence of our objective and the original ones is studied in the literature and is known as the \textit{augmentability} or the \textit{global variable consensus} problem \cite{boyd2011distributed,hestenes1975optimization}. 
The penalty term $\lambda$ is expected to control all local workers search near the current optimal parameters. 

\section{Methodology}
\label{sec:methodology}
This section introduces the update rule of \textit{WASGD} and then discuss its enhanced version \textit{WASGD+}.
 \begin{figure}[t]
\centering
\includegraphics[width=8cm,height=5cm]{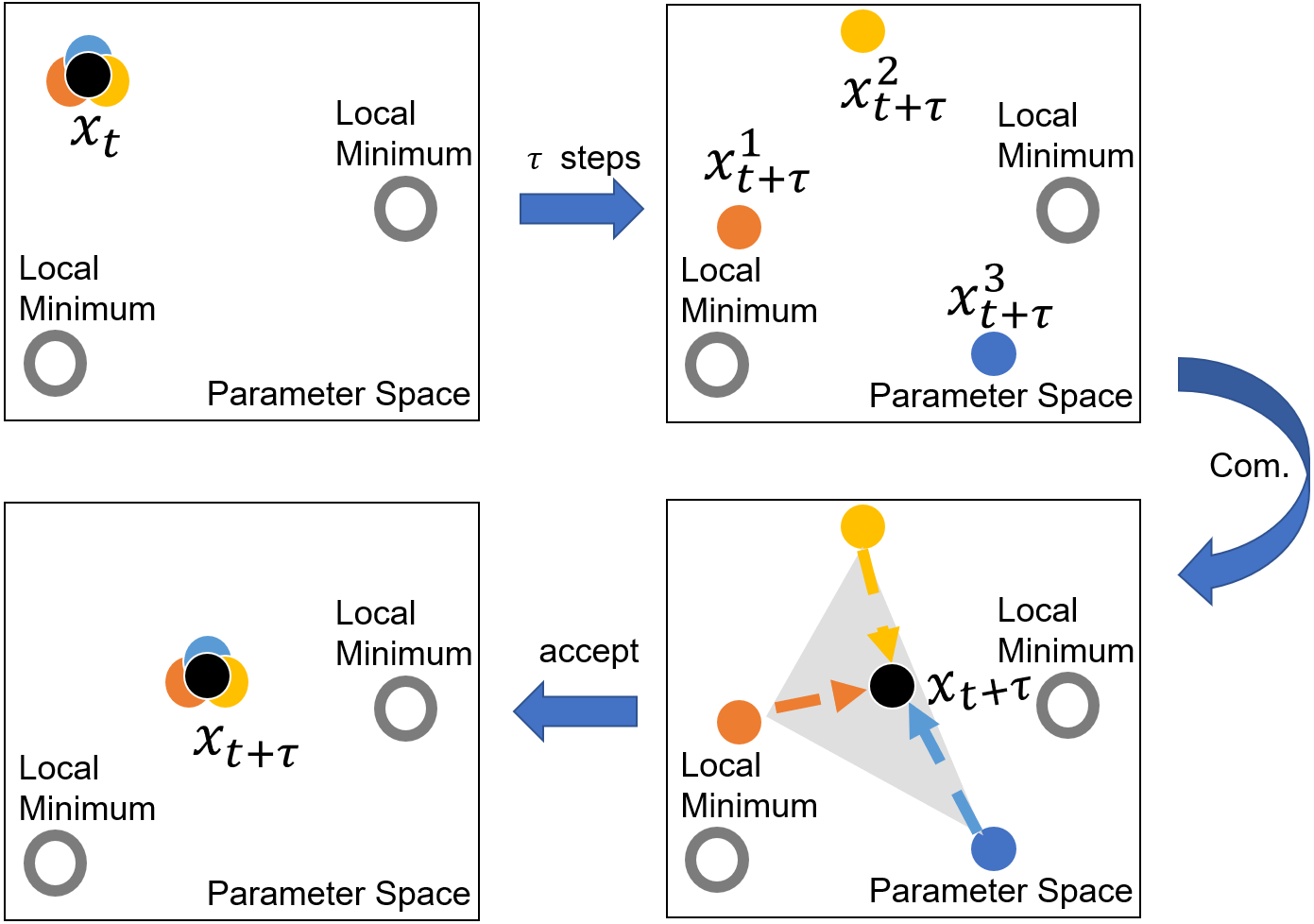}
\caption{One Communication Period for \textit{WASGD+} when $\beta \neq 1$}\label{fig:communication}
\end{figure}
\subsection{Update Rule for WASGD}

Given $w^j$ the weight of the local variable $x^j$, let $\theta^j=\frac{w^j}{\sum_{i=1}^{p}w^i}$, Equation \eqref{eqn:5} can be rewritten as:

\begin{equation} \label{eqn:updateRule}
\min_{x^1,...x^p}\sum_{i=1}^{p}\left(E[f(x^i,\xi^i)]+\frac{\lambda}{2}\left\|x^i-{\sum_{j=1}^{p}\theta^j x^{j}}\right\|^2\right).
\end{equation}
If we take the stochastic gradient descent with respect to $x^i$ , then we get:
\begin{equation}\label{eqn:6}
x^{i}_{t+1}=x^i_t-\eta g^i_t \left( x^i_t \right) - \eta\lambda \left( x^i_t-{\sum_{j=1}^{p}\theta^j x^{j}_t} \right),
\end{equation}
where $\eta$ is the learning rate , if we denote $\beta=\lambda\eta$, then Equation \eqref{eqn:6} becomes to:
\begin{equation}\label{eqn:7}
x^{i}_{t+1}=x^i_t(1-\beta) + \beta  {\sum_{j=1}^{p}\theta^j x^{j}_t}-\eta g^i_t \left( x^i_t \right),
\end{equation}
where $\beta \in [0,1]$. $\beta=0$ indicates a full rejection of the aggregation results, so there is no communication between each local worker.
$\beta=1$ indicates the system fully accepts the aggregation results.
If $\beta \in (0,1)$, each local worker negotiates itself with the combination result. 
The bigger $\beta$, the more compromise of the local worker. 
Since deep neural networks usually have numerous local minima \cite{choromanska2015loss,reddi2016stochastic}, which have similar values of loss, a suitable $\beta$ can help us improve the convergence rate.
We will show the selection of $\beta$ with more details in \autoref{sec:ex}.


As shown in \autoref{fig:communication}, suppose we have three local workers and only need to reach one of the local minima, as any of them would lead to a good enough loss function value.
First, two local workers are attracted by the local minimum on the right, and one local worker is attracted by the left local minimum.
After $\tau$ iterations, the three local workers stop and communicate with each other.
Then, since the blue worker and the yellow worker both move toward the local minimum on the right and have good performance, the orange worker leaves the left local minimum after the communication.
They will restart at new positions closer to the local minimum on the right.

Based on Equation \eqref{eqn:7}, the basic \textit{WASGD} was proposed in the preliminary work, and
detailed analysis can be found in \cite{guo2019weighted}. 
The main algorithm or \textit{WASGD} can be found in the Appendix.
Here, we focus on discussing the enhanced version of the method, \textit{WASGD+}, which will introduce improvements on the weight evaluating function and the sample order strategy.
\begin{algorithm}[t]
\SetAlgoLined
\SetKwInOut{Input}{Input}
 \Input{learning rate $\eta$, decision choice $\beta$, estimation number $m$, divided order part number $n$, divided communication part number $c$, communication period $\tau \in N$, M samples in datasets $D$, initial parameters $x^i=x$, $p$ local workers.}
 
 B $\leftarrow$ RecordIndex($D, m,c,\tau$)\;
 Scores, Seed $\leftarrow$ zeros($n, 1$)\;
 \While{The stopping criteria is not met}{
 \For{$l$ = 0,1,...,$n-1$}{
 $h^i,score \leftarrow 0$\;
 $A,$ Seed[$l$]=OrderGen(Scores[$l$], Seed[$l$], $\frac{M}{n}$);\\
 \For {k =$1,...,\frac{M}{n}$}{
  $x\leftarrow x^i$\;
 \If{ $k \in B$}{
        $h^i \leftarrow h^i+loss(x^i,D\left[l\frac{M}{n}+A[k]\right]);$\\
       
        }
  
  \If{ \textit{k} divides $\tau$ }
  {
    Send $(h^i,x_k^i,i)$ to $p-1$ workers\;
    Wait for$(h^j,x_k^j,j)$ from $p-1$ workers\;
    Arrange $h$ and $x$ based on received index\;
    $\theta^i \leftarrow \frac{e^{-\tilde{a}h^i/\sum_{j=1}^p h^j}}{\sum_{g=1}^p e^{-\tilde{a}h^g/\sum_{j=1}^p h^j}}$\;
    $x^i \leftarrow x^i(1-\beta)+\beta{\sum_{j=1}^{p}\theta^j x^{j}}$\;
    $score+\leftarrow$ Judge($h^1,...h^p,i,p$)\;
    $h^i \leftarrow 0$ \;
  }
    
    $x^i \leftarrow x^i-\eta g^i_{ite}(x,D\left[l\frac{M}{n}+A[k]\right])$\;
      }
   Scores[$l$]=$score$
 }
 }
 \caption{Synchronous WASGD+: Processing by worker \textit{i}}
 \label{alg:1}
\end{algorithm}
\subsection{Weight Evaluating function}
Based on \cite{bottou1991stochastic} and \cite{rere2015simulated}, we learn that SGD shares many properties with Simulated Annealing (SA).
The core of the SA is controlled by the Boltzmann distribution that determines the possibility of accepting updates and guiding the result.
Inspired by the fact that the Boltzmann distribution is very successful in SA,
we also apply it to our communication process.

During the communication, the most important part is how to
weight the results; naturally, we assume that the weight should be
inversely proportional to the loss energy, which in turn means that if the loss
of the $i^{th}$ worker is small, then we should strengthen the weight of
this worker, and vice versa.
Since different tasks have different requirements for weight \cite{yeh1999task} , the optimal weight function, subsequently, can not
always be the same \cite{zhang2019iterative}.
The weight evaluating function for \textit{WASGD} is $\frac{1}{h}$ which can hardly meet such optimality requirement.
We propose a new weight evaluating function based on the Boltzmann distribution \cite{wolf20005}:
\begin{equation}
    w=e^{-\tilde{a}h} (\tilde{a} \in \mathbf{R}),
\end{equation}
where $h$ is the loss energy.
If $h \rightarrow 0$, then $e^{-\tilde{a}h} \rightarrow 1$, the weight of each sample is equivalent to $\frac{1}{p}$.
In order to avoid this, we normalize the energy before applying.
The weight function should be defined as: 
\begin{equation} \label{eqn:weight}
w^i = e^{-\tilde{a}h'^i},
\end{equation}
where $h'^i=\frac{h^i}{\sum_{j=1}^p h^j}$.
Then the normalized weight of $i^{th}$ worker should be:
\begin{equation} \label{eqn:weight1}
    \theta^i = \frac{e^{-\tilde{a}h'^i}}{\sum_{k=1}^p e^{-\tilde{a}h'^k}},
\end{equation}
where $p$ is the number of local workers.
Equation \eqref{eqn:weight1} shares the same form as Boltzmann distribution and we can have the following property:
\begin{property}\label{property1}
If $\tilde{a} \rightarrow \infty$, the weighting strategy is equivalent to broadcast the best performance case  while $\tilde{a} \rightarrow 0$, the weighting strategy is equivalent to equally weighted case.  
\end{property}
\begin{proof}
Given $p$ workers, then the weight for $i^{th}$ worker is:
\begin{equation} \label{eqn:pro1}
    \begin{split}
        & \theta^i = \frac{e^{-\tilde{a}h'^i}}{\sum_{k=1}^p e^{-\tilde{a}h'^k}}\\
        & = \frac{1}{1+\sum_{k=1,k \neq i}^p e^{-\tilde{a}(h'^k-h'^i)}}.
    \end{split}
\end{equation}
Assuming that $\tilde{a} \rightarrow \infty$ and $i^{th}$ worker has the best performance, then for any other worker $j$, we know that $h'^i-h'^j<0$.
Based on this inequality, we can conclude that:
\begin{equation} \label{eqn:pro2}
    e^{-\tilde{a}(h'^j-h'^i)} \rightarrow \infty.
\end{equation}
From Equation \eqref{eqn:pro2}, we know that at least on term of denominator in Equation \eqref{eqn:pro1} is approaching $\infty$.
Thus, we know the weight of $k^{th}$ work is:
\begin{equation}
    \theta ^k = \frac{1}{\infty}\approx 0 (k \neq i).
\end{equation}
For the $i^{th}$ worker which has the smallest loss energy, since $h^k-h^i\geq0 (k \in p)$, we can conclude that:
\begin{equation} \label{eqn:pro3}
    \begin{cases}
    e^{-\tilde{a}(h'^k-h'^i)} \rightarrow 0 (k \neq i)\\
    \theta ^i=\frac{1}{1+(p-1)\cdot 0} \approx 1
    \end{cases},
\end{equation}
which means that the best performance's local worker will dominate the communication result as mentioned in \cite{zhang2019parallel}.

If $\tilde{a} \rightarrow 0$, then for $i^{th}$ worker, $e^{-\tilde{a}(h'^k-h'^i)}=1$.
Based on Equation \eqref{eqn:pro1}, we can find that:
\begin{equation}
    \begin{split}
        \theta^i & = \frac{1}{1+\sum_{k=1,k \neq i}^p 1}
         = \frac{1}{p},
    \end{split}
\end{equation}
which is equivalent to equally weighted case.
\end{proof}
Property \autoref{property1} shows that the new weight evaluating function is more flexible and can satisfied most cases as required.
\begin{property}\label{property2}
If we set equally weighted case as baseline and try different values of $\tilde{a}$. 
As $\tilde{a} \rightarrow \infty$, the performance of weighted case must be worse than the baseline,  while as $\tilde{a} \rightarrow 0$, the weighted case will approach the performance of the baseline. 
\end{property}
The proof of Property \autoref{property2} is provided in the Appendix.
Property \autoref{property2} provides the tendency of extreme case in terms of $\tilde{a}$ and we can find the proper value of $\tilde{a}$ more efficiently based on such tendency.
As \cite{chu1999parallel} had successfully applied the Boltzmann distribution in the mixing of states of parallel SA, we also apply it as a weight evaluating function in the parallel SGD case.
\begin{algorithm}[h!]
\SetAlgoLined
\textbf{Function 1}: RecordIndex($D, m,c,\tau$):\\
B$\leftarrow \emptyset;$\\
\For{i = 0,1,...c}{
\For{j=0,1,...$\frac{m}{c}$}{
B.append($(i+1) \frac{\tau}{c} - j - 1$)\;
}
}
Output:B\;

\textbf{Function 2}: OrderGen(total-score, old-seed, $l$):\\
\eIf{total-score $\leq$ $-1$}
{
seed $\leftarrow$ old-seed\;
}{
seed $\leftarrow$ generate a new random seed\;
}
Generate a random sequence $A$ of length $l$ with $seed$; \\
Output: A, seed;

\textbf{Function 3}:Judge($h^1,...h^p,i,p$):\\
$ave\leftarrow \frac{\sum_{j=1}^p h^j}{p}$\;
$stdv \leftarrow \sqrt{\frac{1}{p-1}\sum_{j=1}^p (h^j-ave)^2};$\\
$score \leftarrow \frac{h^i-ave}{stdv};$\\
Output: score;
 \caption{Main Functions}
 \label{alg:2}
\end{algorithm}
\begin{figure*}[t]
\centering
\subfigure[]{\includegraphics[width=0.22\linewidth]{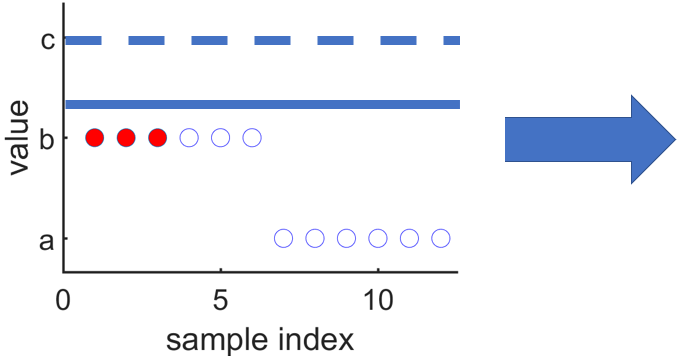}}
\subfigure[]{\includegraphics[width=0.22\linewidth]{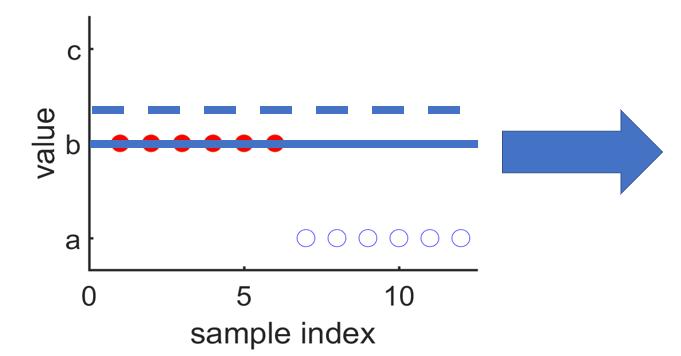}}
\subfigure[]{\includegraphics[width=0.22\linewidth]{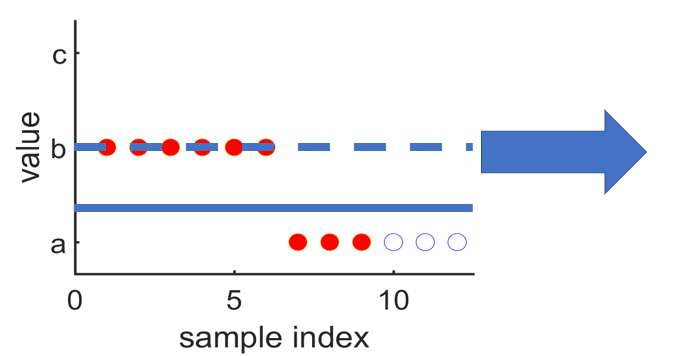}}
\subfigure[]{\includegraphics[width=0.22\linewidth]{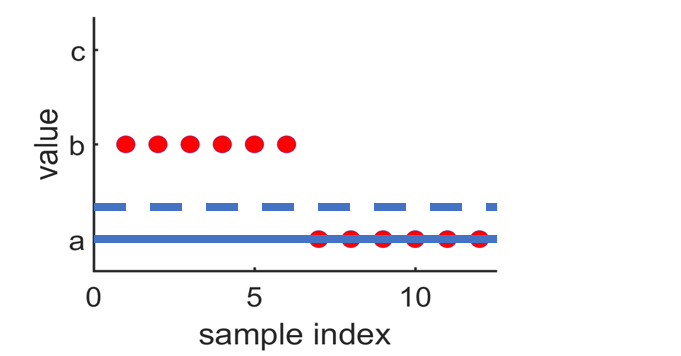}}\\
\subfigure[]{\includegraphics[width=0.22\linewidth]{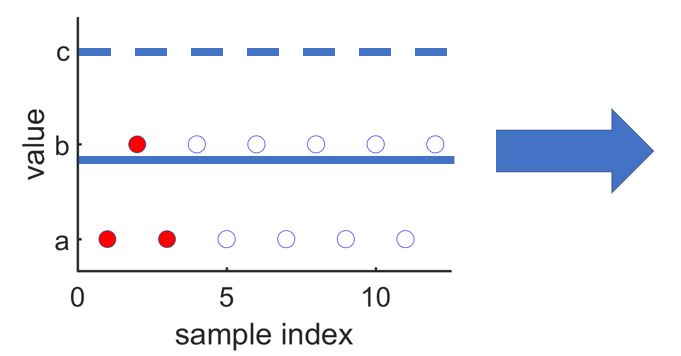}}
\subfigure[]{\includegraphics[width=0.22\linewidth]{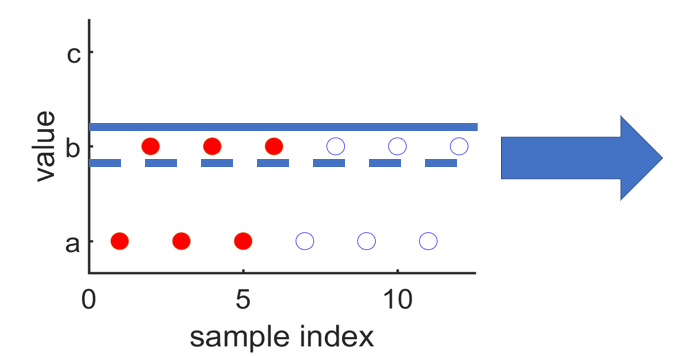}}
\subfigure[]{\includegraphics[width=0.22\linewidth]{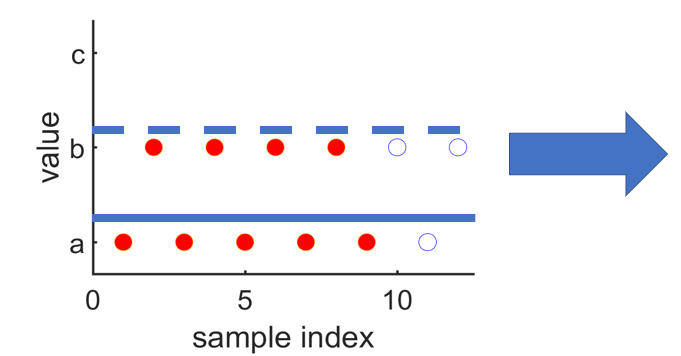}}
\subfigure[]{\includegraphics[width=0.22\linewidth]{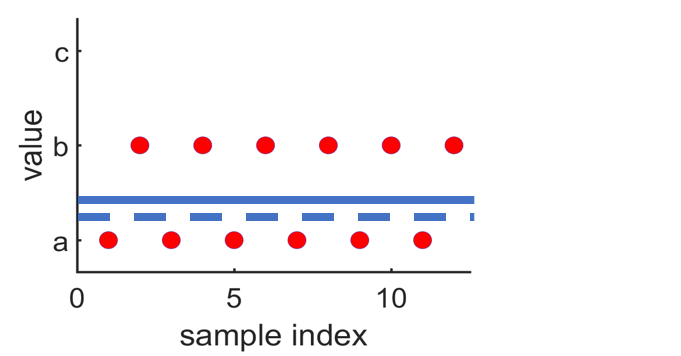}}
\caption{Results of different sample orders.}\label{fig:sampleorder}
\end{figure*}
\subsection{Weight Accuracy Estimation}
As stated above, let $F$ denote the objective function, the normalized value
of the loss energy during communication about the $i^{th}$ worker should be
calculated as:
\begin{equation}
h'^i_{\tau} = \frac{\sum_{j=1}^{N}F(x_{\tau-1}^i,\xi^{j})}{\sum_{k=1}^p \sum_{j=1}^{N}F(x_{\tau-1}^k,\xi^{j})},
\end{equation}
where $\tau$ denotes the communication period, $p$ is the number of workers, and $N$ is the size of training samples. 
Based on Equation \eqref{eqn:weight}, the weight of $i^{th}$ worker is defined as::
\begin{equation} \label{eqn:gpz1}
\theta^{i}_{true} =\frac{e^{-\tilde{a}\frac{\sum_{j=1}^{N}F(x_{\tau-1}^i,\xi^{j})}{\sum_{k=1}^p \sum_{j=1}^{N}F(x_{\tau-1}^k,\xi^{j})}}}{\sum_{l=1}^p e^{-\tilde{a}\frac{\sum_{j=1}^{N}F(x_{\tau-1}^l,\xi^{j})}{\sum_{k=1}^p \sum_{j=1}^{N}F(x_{\tau-1}^k,\xi^{j})}}},
\end{equation}
This step means we need to go through the whole training dataset in order to get the weight.
The time for computing this process is depending on the size of dataset.
If the sample size is very small, then it will not take too much time.
For example, in \textit{CIFAR-10}, the sample size is equal to half of a hundred thousand, computing these samples is very expensive. 

We can use $m$ samples to estimate the weight, then Equation \eqref{eqn:gpz1} becomes:
\begin{equation} \label{eqn:gpz2}
\theta^{i}_{true} \approx \theta^{i}  
=\frac{e^{-\tilde{a}\frac{\sum_{j=1}^{m}F(x_{\tau-1}^i,\xi^{d_j})}{\sum_{k=1}^p \sum_{j=1}^{m}F(x_{\tau-1}^k,\xi^{d_j})}}}{\sum_{l=1}^p e^{-\tilde{a}\frac{\sum_{j=1}^{m}F(x_{\tau-1}^l,\xi^{d_j})}{\sum_{k=1}^p \sum_{j=1}^{m}F(x_{\tau-1}^k,\xi^{d_j})}}}
(m \leq N),
\end{equation}
where $d_j$ denotes the index of samples.
Equation \eqref{eqn:gpz2} will spend nearly $\frac{m}{N}$ time of Equation \eqref{eqn:gpz1}.
However, it still needs extra computing time.
The loss function for classification has been defined as:
\begin{equation}
F^i(x)=-log \frac{e^{s_j}}{\sum_k e^{s_k}},   
\end{equation}
where $s_k \in z(x^i,\xi^i)$, $z$ is the given function, $j$  denotes the index of label, $k \in [0, n_1]$, and $n_1$ is the number of label's type. 
During the first step of back propagation, we can get
\begin{equation}
s_j= z_j= \sum_{k=1}^{N'} \left(W_{jk}z'_k({x^i,\xi^i})+b_{jk}\right),    
\end{equation}
where $N'$ is the size of hidden layers, $z'_k$ denotes function of previous result.   
If we want to update the value of $W_{jk}$, based on the chain rule, we will have
\begin{equation}\label{eqn:chainrule}
\frac{\partial F_i}{\partial W_{jk}}=\frac{\partial F_i}{\partial z_j}\frac{\partial z_j}{\partial W_{jk}}.   
\end{equation}

From Equation \eqref{eqn:chainrule}, we know that calculating $\frac{\partial F_i}{\partial z_j}$ needs the value of $\frac{e^{s_j}}{\sum_k e^{s_k}}$, which means the current loss energy of $i^{th}$ sample does not need extra computing time.
If we can use the loss energy during the back propagation, then the weight evaluation process does not need extra computing time, we reformulate Equation \eqref{eqn:gpz2} as : 
\begin{equation} \label{eqn:gpz3}
\begin{split}
  &\theta^i 
  \approx \frac{e^{-\tilde{a}\frac{\sum_{j=1}^{m}F(x_{\tau-j}^i,\xi^{d_j})}{\sum_{k=1}^p \sum_{j=1}^{m}F(x_{\tau-j}^k,\xi^{d_j})}}}{\sum_{l=1}^p e^{-\tilde{a}\frac{\sum_{j=1}^{m}F(x_{\tau-j}^l,\xi^{d_j})}{\sum_{k=1}^p \sum_{j=1}^{m}F(x_{\tau-j}^k,\xi^{d_j})}}}.
\end{split}
\end{equation}
Based on Equation \eqref{eqn:gpz3}, the loss should be recorded at the same time and space for all workers, this will restrict the exploration for local workers.
Allowing more explorations can lead to the improved performance \cite{zhang2015deep}.
It is unfair if we record the loss at the same space due to the fact that different workers will go through different training steps.
So we only record the loss at the same time, in order to avoid outlier, following \cite{xie2020joint}, we assign the record data with a novel assignment distribution before communication.
\begin{equation} \label{eqn:gpz4}
\begin{split}
  &\theta^i \approx \frac{e^{-\tilde{a}\frac{\sum_{j=1}^{m}F(x_{e_j}^i,\xi^{d_j})}{\sum_{k=1}^p \sum_{j=1}^{m}F(x_{e_j}^k,\xi^{d_j})}}}{\sum_{l=1}^p e^{-\tilde{a}\frac{\sum_{j=1}^{m}F(x_{e_j}^l,\xi^{d_j})}{\sum_{k=1}^p \sum_{j=1}^{m}F(x_{e_j}^k,\xi^{d_j})}}},
\end{split}
\end{equation}
where $e^j$ denotes the index of last $\frac{m}{c}$ steps of each $\frac{\tau}{c}$ iterations.
The error for Equation \eqref{eqn:gpz4} has been defined as:
\begin{equation}\label{err1}
    error=\sum_{i=1}^p |\theta^i-\theta^i_{true}|
\end{equation}
The range for the error is between zero to two.
We can also apply this estimation method on \textit{Multiplicative Weight Update Method}\cite{dwork2014algorithmic} which also needs the weight to update the possibility of selection.
\subsection{Sample Order}
The correct weighting of samples can speed up the convergence rate  \cite{needell2014stochastic}.
However, this process needs to find the right distribution of
the samples causing extra analysis time.
The right weight of the samples can be reflected by the best order of the samples that an SGD goes through.

In SGD, each sample will guide the solution to a direction (gradient).
The effect of samples in the same directions
of the current sample will be reinforced and the effect of samples
in the opposite directions of the current sample will be
diminished.
Therefore, the order of the samples for SGD affects
the solution quality in one epoch, which in turn will influence the
overall convergence rate.
This is illustrated by the example below.

As shown in \autoref{fig:sampleorder}, we are given 12 samples where half of them have the value $b$ and the other half have the value $a$. 
We want to determine the value of $d$ in the function $y=d$ such that it has the smallest least square error among these samples.
To emphasize the importance of the order of the sample, we assign each sample an index.
This index denotes the order of going through these 12 samples when we run SGD.
The solid red points indicate used samples and the blue empty points refer to the unused samples.
The dashed blue line represents the previous solution while the solid
blue line corresponds to the current solution obtained by the SGD
steps based on the solid samples.
\autoref{fig:sampleorder} (a), (b), (c) and (d) show the fitting procedure with one sample order.
The initial solution starts at $y=c$, in (a) and (b) the solutions are modified by six SGD steps based on the six samples with the value $b$.
The solution will arrive at a position which will infinitely approach $y = b$ from the above.
Then SGD improves the current solution by going through the rest six samples with the value $a$.
Finally, the solution will be close to $y=a$.
This is an inappropriate solution as we know that the optimal solution is $y=(a+b)/2$.
\autoref{fig:sampleorder} (c), (d), (e) and (f) show the procedure with another sample order where the samples with value $a$ are separated by the samples with value $b$.
The current solution will be optimized for both $a$ and $b$ for every two SGD steps and will be closer to $y=(a+b)/2$.

When we are training among large dataset, it has high possibility to select some samples with the same label without grouping the samples in advance.
The time of sample grouping is proportional to its size.
Even we group the samples, for a dataset with $N$ samples, the number of sample orders is $(N!-c)$, where $c$ denotes a few known bad orders.
Such process is equivalent to select some data in a dataset which can maximally differentiate these data from others, which is proved NP-hard \cite{liu2012differentiating}.
It is nearly impossible to try all these orders and find the best sample order even in a parallel environment.
However, in parallelization, we can still obtain a relatively better sample orders by comparisons.

In each communication procedure, we assign a score to each
local worker based on its performance.
To determine the score, we first find the average performance, which is the mean of the losses from all the workers at the current step.
Furthermore, we calculate the fluctuation, that is, the standard deviation of these losses.
Then, we normalize each local worker’s current loss by subtracting
the average performance and dividing by the fluctuation and defining it
as the score.
Assume that the losses for all the workers at the same step follow a normal distribution.
Then, the score follows a standard normal distribution with mean zero and standard deviation one.
By empirical rule, if this score is less than minus one, its performance is better than $84\%$ of the local workers.
We consider the sample order of this local worker relatively better and save it for the next epoch.


For some bad orders, they have some wayward parts during the
searching process.
The bad performance is caused by the other parts' unsuitable order.
In order to generate relatively better sample orders, we can retain these special parts and shuffle the other parts.
As suggested by many previous studies (e.g., \cite{jin2013shared}), learning from different views in the training process should enhance the model performance.
Thus, different orders of workers guarantees that the aggregation result will be improved under different positions of local workers.
This method has a higher possibility to generate a better order than randomly shuffling all the samples.
\subsection{WASGD+}
Following \cite{watcharapichat2016ako}, we introduce a distributed deep learning system without a parameter server.
We use $e^{-\tilde{a}h'^i}$ to evaluate the weight of the $i^{th}$ worker.
The detailed procedure is demonstrated in Algorithm \ref{alg:1}.
Each local worker is only responsible for updating its own parameters and going through the samples in different orders. 
Note that the iteration will increase by one after each update, and the local workers will wait when the iterations are the multiples of communication period $\tau$. 
When iterations is divisible by $\tau$, the worker will hold the search until it receives an aggregation result based on all workers. 
During communication, the estimated
loss will be used to generate score and weight for the performance of the current parameters.
After fixed iterations, each local worker will compare its score with the judgment. 
If the score satisfied the judgment, the current order will be preserved, and otherwise replaced by a new one.

The choice of asynchronous or synchronous algorithm of our method is based on the communication period and the time difference for computing each sample. 
If the time difference for computing each sample is large and the communication period is small, the total computing time for local worker has higher possibility to vary a lot from each other. 
Then we choose asynchronous algorithm. 
If the time difference in computing each sample is large and communication period is large, the possibility that computing time varies substantially from each other has been decreased, then we use the synchronous algorithm. 
If the time difference is small which means each local worker completes the work almost at the same time, then we use the synchronous algorithm.

Since the time difference for computing samples in \textit{MNIST}, \textit{Fashion-MNIST}, \textit{CIFAR-10} and \textit{CIFAR-100} is small, we use the synchronous version.
The algorithm will be terminated when the required loss is reached.
We also provide the asynchronous version of our method in the Appendix.
\section{Theoretical Analysis}
In this section, we will discuss the convergence and stability of our method.
\subsection{Convergence Analysis}
\label{sec:conv}
Recall that Equation \eqref{eqn:updateRule} is the update rule for the parameter vector $x$ and we set $\beta$ to be one.
Then the solutions from all the workers of \textit{WASGD} or \textit{WASGD+} are the same in every $\tau$ steps.
We reformulate the update rule as:
\begin{equation} \label{eqn:update1}
x_{t+\tau} = \sum_{i=1}^{p} \left[ \theta^i_t  \sum_{j=t}^{\tau-1}  \left[ x^{i}_j -  \eta g^i_j \left( x^i_j \right) \right] \right].
\end{equation}
As shown above, the convergence of \textit{WASGD+} is equivalent to the convergence of the sequence of the solutions $\{x_{l\tau}\}^\infty_{l=1}$.
Let $S_{t, k}^i=(s_{t, l}^i)^k_{l=1}$ be a sequence of sample index at time step $t$. We can define $h_{t, k}^i$ and $H_{t, k}^i$ for the $i^{th}$ worker as:

\begin{equation} \label{eqn:update2}
h_{t, k}^i(x|s_{t, k}^i)=
x - \eta g_{s^i_{t, k}} (x)
\end{equation}
and
\begin{equation} \label{eqn:update3}
H_{t, k}^i(x|S_{t, k}^i)=
\begin{cases}
h_{t,1}^i(x|s_1^i), & k = 1\\
h_{t, k}^i(\cdot|s_{t, k}^i) \circ H_{t, k-1}^i(x|S_{t, k-1}^i), & k > 1
\end{cases}
\end{equation}
Then, Equation \eqref{eqn:update1} can be rewritten as:
\begin{equation} \label{eqn:update4}
x_{t+\tau}= \phi (x_t)= \sum_{i=1}^{p} \left[ \theta^i_t \cdot H_{t, \tau}^i(x_t|S_{t, \tau}^i) \right].
\end{equation}

Assume that $F(x)$ is convex and Lipschitz continuous.
We first introduce the lemma below: 
\begin{myLemma}\label{lemma:1}
$\forall i, t, k$, when the learning rate in $h_{t, k}^i$ is low, let $(P(R, m), W_z)$ be the space of distributions, where $(R, m)$ is a Rondon space, $P(R, m)$ is the set of all distributions over $(R,m)$, and $W_p$ is the Wasserstein distance between two distributions. If we define $\varphi_{t, k}^i$ by applying $h_{t, k}^i$ pointwise to R, $\varphi_{t, k}^i$ is a contraction mapping.
\end{myLemma}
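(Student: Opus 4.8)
The plan is to recognize $\varphi_{t, k}^i$ as the pushforward (image–measure) operator induced by the point map $h_{t, k}^i$ of \eqref{eqn:update2}, and then to reduce the contraction claim on distributions to a single Lipschitz estimate on that base map. For $\mu,\nu\in P(R,m)$ we have $\varphi_{t, k}^i\mu=(h_{t, k}^i)_{\#}\mu$, the image of $\mu$ under $h_{t, k}^i$. Hence the desired bound $W_p(\varphi_{t, k}^i\mu,\varphi_{t, k}^i\nu)\le L\,W_p(\mu,\nu)$ with $L<1$ will follow at once if I can show that $h_{t, k}^i$ is itself an $L$-Lipschitz contraction on $R$. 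Throughout I abbreviate $g=g_{s_{t, k}^i}$ and $h=h_{t, k}^i$.

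First I would establish the transfer lemma: if $h\colon R\to R$ is $L$-Lipschitz, then $W_p(h_{\#}\mu,h_{\#}\nu)\le L\,W_p(\mu,\nu)$. Since $(R,m)$ is a Radon space, an optimal coupling $\pi$ of $(\mu,\nu)$ exists; pushing it forward yields the coupling $(h\times h)_{\#}\pi$ of $(h_{\#}\mu,h_{\#}\nu)$, and therefore
\[
W_p^p(h_{\#}\mu,h_{\#}\nu)\le\int_{R\times R}\|h(x)-h(y)\|^p\,d\pi(x,y)\le L^p\int_{R\times R}\|x-y\|^p\,d\pi(x,y)=L^p\,W_p^p(\mu,\nu).
\]
This is the routine part of the argument and uses only the existence of optimal couplings together with the Lipschitz bound.

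The crux is the second step: showing that the gradient-step map $h(x)=x-\eta g(x)$ is a strict contraction once the learning rate $\eta$ is small. Using convexity of $F$ together with smoothness (an $\ell$-Lipschitz gradient) I would invoke co-coercivity, $\langle g(x)-g(y),x-y\rangle\ge\tfrac{1}{\ell}\|g(x)-g(y)\|^2$, and expand
\[
\|h(x)-h(y)\|^2=\|x-y\|^2-2\eta\langle g(x)-g(y),x-y\rangle+\eta^2\|g(x)-g(y)\|^2,
\]
so that
\[
\|h(x)-h(y)\|^2\le\|x-y\|^2-\eta\Big(\tfrac{2}{\ell}-\eta\Big)\|g(x)-g(y)\|^2\le\|x-y\|^2
\]
for every $0<\eta<2/\ell$. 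Combining this with the transfer lemma then gives the stated conclusion.

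The hard part will be upgrading the non-expansiveness just obtained (constant $\le 1$) to a genuine contraction constant $L<1$, which is what the term \emph{contraction mapping} strictly requires. With convexity and Lipschitz continuity alone the best one gets is $L=1$; the strict inequality needs a positive curvature modulus. I would therefore exploit the strong convexity $\mu>0$ that is naturally present in our setting (the quadratic penalty in \eqref{eqn:updateRule}, and the local behaviour near a minimizer), which upgrades the estimate to $\|h(x)-h(y)\|\le\max\{|1-\eta\mu|,|1-\eta\ell|\}\,\|x-y\|$. The role of the \emph{low learning rate} hypothesis is precisely to keep $0<\eta<2/\ell$, so that both factors lie strictly below one and $L=\max\{|1-\eta\mu|,|1-\eta\ell|\}<1$. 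The main obstacle to watch, then, is reconciling this strict bound with the weaker assumptions stated in the lemma, and making explicit the smoothness/strong-convexity conditions under which $L<1$ genuinely holds.
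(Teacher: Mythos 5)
Your proposal is correct and follows essentially the same route as the paper, which simply defers to Zinkevich et al.\ and describes exactly your two-step structure: first show the base map $h_{t,k}^i(x)=x-\eta g(x)$ is a contraction for small $\eta$, then transfer this to the induced pushforward on $(P(R,m),W_p)$ via optimal couplings. Your added observation that convexity plus Lipschitz continuity alone yield only non-expansiveness, and that the strict constant $L<1$ requires the strong convexity supplied by the quadratic penalty (as in the cited reference's regularized objective), is accurate and in fact fills a hypothesis gap that the paper's one-line citation glosses over.
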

\begin{proof}
This lemma can be obtained in \cite{zinkevich2010parallelized}.
It first shows that $h_{t, k}^i$ is a contraction mapping and then proves that the induced mapping $\varphi_{t, k}^i$ is a contraction mapping over $(P(R, m), W_p)$. 
\end{proof}
By the connection between $h_{t, k}^i$ and $\varphi_{t, k}^i$, we can study $\Phi$.
\cite{zinkevich2010parallelized} discussed the situation that $\Phi_t$ is the linear combination of $\varphi_{t, k}^i$, which in our case is when $\tau=1$.
We extend its proof to $\tau>1$ where the composite functions exist.
\begin{theorem}\label{thm:1}
Let $\Phi$ be the mapping in $(P(R, m), W_p)$ induced by $\phi$ in Equation \eqref{eqn:update4} by applying $\phi$ pointwise to R. Then $\Phi$ is a contraction mapping.
\end{theorem}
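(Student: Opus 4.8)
The plan is to build $\Phi$ up from the single-step induced maps $\varphi_{t,k}^i$ supplied by Lemma~\ref{lemma:1}, in two stages that mirror the two-level structure of $\phi$: first the inner composition $H_{t,\tau}^i$ over the $\tau$ local SGD steps, then the outer weighted aggregation $\sum_i \theta_t^i(\cdot)$. The case $\tau=1$ is exactly the linear-combination situation already handled by \cite{zinkevich2010parallelized}; the genuinely new content is to absorb the composition of $\tau$ steps and then reuse the aggregation argument, so that the whole construction stays inside $(P(R,m),W_p)$.

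First I would treat the composition. Because the sample indices $s_{t,1}^i,\dots,s_{t,\tau}^i$ are drawn independently across the $\tau$ steps, applying the random maps sequentially to a distribution coincides with composing the single-step induced maps: the map induced by $H_{t,\tau}^i$ equals $\varphi_{t,\tau}^i\circ\cdots\circ\varphi_{t,1}^i$. Call this $\Psi_t^i$. By Lemma~\ref{lemma:1} each $\varphi_{t,k}^i$ is a contraction on $(P(R,m),W_p)$ with some factor $c_{t,k}^i<1$, and a finite composition of contractions is again a contraction, so $\Psi_t^i$ contracts with factor $C_t^i=\prod_{k=1}^{\tau}c_{t,k}^i<1$.

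Next I would handle the outer aggregation $\Phi=\sum_{i=1}^p\theta_t^i\Psi_t^i$. Here the subtlety is that $\phi(x)=\sum_i\theta_t^i H_{t,\tau}^i(x)$ produces a weighted \emph{sum} of random outputs rather than a mixture, so the mixture-convexity of $W_p$ does not apply directly and an explicit coupling is needed. Given $\mu,\nu$, I would take an optimal coupling $(X,X')$ achieving $W_p(\mu,\nu)$ and feed both coordinates through the \emph{same} sample sequences $S_{t,\tau}^i$ (independent across workers $i$, since all $p$ workers start from the common point when $\beta=1$); this yields an admissible coupling of $\Phi(\mu)$ and $\Phi(\nu)$. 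Applying Minkowski's inequality to the weighted sum and then the pathwise contraction of each $H_{t,\tau}^i$ (the deterministic contraction established in the first part of Lemma~\ref{lemma:1}, composed $\tau$ times) gives
\begin{equation}
W_p(\Phi(\mu),\Phi(\nu)) \le \sum_{i=1}^p \theta_t^i\, C_t^i\, W_p(\mu,\nu).
\end{equation}
Since $\sum_i\theta_t^i=1$ with $\theta_t^i\ge 0$ and each $C_t^i<1$, the prefactor $\sum_i\theta_t^i C_t^i$ is a convex combination of numbers strictly below one and is therefore $<1$, so $\Phi$ is a contraction.

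The main obstacle is the aggregation step rather than the composition. Controlling $W_p$ of a weighted sum of dependent random variables forces the explicit shared-input, shared-sample coupling above, and the bound then relies on the pathwise (not merely distributional) contraction of $H_{t,\tau}^i$ together with Minkowski's inequality; one also has to verify carefully that sharing samples between the two coordinates while keeping them independent across workers really does define a valid coupling of $\Phi(\mu)$ and $\Phi(\nu)$. The composition step, by contrast, is routine once the independence of samples across steps is used to identify the map induced by $H_{t,\tau}^i$ with the composition of the $\varphi_{t,k}^i$.
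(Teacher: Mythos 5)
Your proof is correct and follows essentially the same route as the paper: compose the $\tau$ per-step contractions of Lemma~\ref{lemma:1} within each worker to obtain the factor $\prod_{k=1}^{\tau} c_{t,k}^i<1$, then pass the weights $\theta_t^i$ (summing to one) through the aggregation to get the overall Lipschitz constant $\sum_{i=1}^{p}\theta_t^i\prod_{k=1}^{\tau}c_{t,k}^i<1$. The only difference is one of rigor at the aggregation step: where the paper writes Equation~\eqref{eqn:thm_1} as an equality that treats $W_p$ as if it distributed over a weighted sum of random outputs, you correctly note that this needs justification and supply the shared-sample optimal coupling together with Minkowski's inequality and the pathwise contraction of each $H_{t,\tau}^i$, which is precisely the argument underlying the $\tau=1$ case in \cite{zinkevich2010parallelized} that the paper invokes.
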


\begin{proof}
$\forall R_1, R_2 \in R$, by the definition of $\Phi$, we have

\begin{equation} \label{eqn:thm_1}
\begin{split}
    & W_p(\Phi(R_1), \Phi(R_2))  \\ 
    & = \sum_{i=1}^{p} \left[ \theta^i_t \cdot \varphi_{t, \tau}^i \circ \varphi_{t, \tau-1}^i \circ ... \circ \varphi_{t, 1}^i (W_p(R_1, R_2)) \right].
\end{split}
\end{equation}
Since $\varphi_{t, k}^i$ is a contraction mapping when $1 \leq k \leq \tau$, there exists a Lipschitz constant $c_{t, k}^i < 1$ such that 
\begin{equation} \label{eqn:thm_2}
\begin{split}
    & \text{RHS of Equation \eqref{eqn:thm_1}}   \\ 
    & \leq \sum_{i=1}^{p} \left[ \theta^i_t \cdot c_{t, 1}^i \cdot \varphi_{t, \tau}^i \circ ... \circ \varphi_{t, 2}^i (W_p(R_1, R_2)) \right].
\end{split}
\end{equation}
By induction, we have:
\begin{equation}
\begin{split}
    & \text{RHS of Equation \eqref{eqn:thm_2}}   \\ 
    & \leq \sum_{i=1}^{p} \left[ \theta^i_t \cdot \prod_{k=1}^{\tau} c_{t, k}^i  (W_p(R_1, R_2)) \right] \\
    & \leq \sum_{i=1}^{p} \left[ \theta^i_t \cdot \prod_{k=1}^{\tau} c_{t, k}^i   \right] \cdot W_p(R_1, R_2).
\end{split}
\end{equation}
Since $c_{t, k}^i < 1$ and $\sum_{i=1}^{p} \theta^i_t = 1$, $\Phi$ is a contraction mapping in $(P(R, m), W_p)$.
\end{proof}
By Theorem \ref{thm:1}, due to the properties of contraction mapping, the sequence $\{x_{l\tau}\}^\infty_{l=1}$ will converge in the exponential rate.
\subsection{Variance Analysis}
\label{sec:varana}
Based on \cite{zhang2016parallel}, we propose a general version of variance under different weighted strategy about the following function: $F(x)=\frac{1}{2}cx^2$.
By using gradient samples of the form $g(x)=cx-\tilde{b}x-\tilde{h}$
where $\tilde{b} , \tilde{h}$ are random variables with mean zero and variance $ \sigma_{\tilde{b}}^2, \sigma_{\tilde{h}}^2$.
$p$ local workers will choose to communicate with each other with the probability $\zeta$ after each step.
We can define the variance of our method as shown in Lemma 2 below.
\begin{myLemma}\label{lemma:2}
Given $\omega, \delta$, where $\omega=\sum_{i=1}^p(\theta^i)^2, \delta=\frac{\zeta}{(1-\zeta)\eta(2c-\eta c^2)}$, $\eta$ is learning rate and $\zeta$ is communication probability,
the asymptotic variance of weighted aggregating SGD is
\begin{equation}
    \begin{split}
    & lim_{t \rightarrow \infty} Var\left(\sum_{i=1}^p {\theta^i} x^i_t\right)\\
    & = \eta\sigma_{\tilde{h}}^2 \omega \left(2 c -\eta c^2-\eta\sigma_{\tilde{b}}^2\frac{1+ \delta \omega}{1+\delta}\right)^{-1}.
    \end{split}
\end{equation}
\end{myLemma}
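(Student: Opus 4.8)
The plan is to track the evolution of the second moments of the per-worker iterates under the randomized communication model and to solve the resulting fixed-point equations in the limit. Concretely, I would model one step as follows: every worker first performs the SGD update $x^i_{t+1/2} = x^i_t - \eta g^i_t(x^i_t) = (1-\eta c + \eta\tilde{b}^i_t)x^i_t + \eta\tilde{h}^i_t$, and then, with probability $\zeta$, all workers are replaced by the weighted aggregate $\sum_k \theta^k x^k_{t+1/2}$ (the $\beta=1$ communication of Equation \eqref{eqn:7}), while with probability $1-\zeta$ each keeps its local value. Writing $a = 1-\eta c$ and using that $\tilde{b}^i_t,\tilde{h}^i_t$ are mean-zero and independent across workers and of the current iterate, the key elementary identities are $E[(a+\eta\tilde{b}^i_t)^2] = a^2 + \eta^2\sigma_{\tilde{b}}^2$ and $1 - a^2 = \eta(2c-\eta c^2)$; the latter is what eventually produces the factor $2c - \eta c^2$ in the statement.

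By the symmetry of the dynamics across workers I would collapse the full covariance matrix into two scalars, the common second moment $U_t = E[(x^i_t)^2]$ and the common cross-moment $V_t = E[x^i_t x^j_t]$ for $i \neq j$. Since every update map is a strict contraction in the mean for small $\eta$, the means decay to zero, so in the limit the quantity of interest satisfies $\mathrm{Var}(\sum_i\theta^i x^i_t) = \omega U_t + (1-\omega)V_t =: T_t$, using $\omega = \sum_i(\theta^i)^2$ and $\sum_{i\neq j}\theta^i\theta^j = 1-\omega$. I would then write the one-step recursions for $U_t$ and $V_t$ by conditioning on the Bernoulli$(\zeta)$ communication event, check that the associated $2\times 2$ linear system is stable (spectral radius below one for small $\eta$) so that the limit exists, and pass to the stationary equations.

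The core of the argument is the algebra of the stationary system. Taking the $\omega$-weighted combination of the two stationary equations, the $\zeta T$ terms cancel and leave the clean relation $T(1-a^2) = \omega\eta^2\sigma_{\tilde{b}}^2 U + \omega\eta^2\sigma_{\tilde{h}}^2$; subtracting them gives a second relation for $D = U - V$, namely $D\,[1-(1-\zeta)a^2] = (1-\zeta)(\eta^2\sigma_{\tilde{b}}^2 U + \eta^2\sigma_{\tilde{h}}^2)$, while the definition supplies $U = T + (1-\omega)D$. Eliminating $U$ and $D$ in favor of $T$ reduces everything to a single linear equation for $T$. The decisive simplification, which I expect to be the main obstacle, is to collapse the resulting rational expression into the stated form: from $a^2 = 1 - \eta(2c-\eta c^2)$ one gets $1-(1-\zeta)a^2 = \zeta + (1-\zeta)\eta(2c-\eta c^2)$, whence $\frac{(1-\zeta)\eta(2c-\eta c^2)}{1-(1-\zeta)a^2} = \frac{1}{1+\delta}$ exactly, with $\delta$ as defined. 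Substituting this identity turns the coefficient of $\eta\sigma_{\tilde{b}}^2$ into $\omega + (1-\omega)\frac{1}{1+\delta} = \frac{1+\delta\omega}{1+\delta}$, and the equation then solves to $T = \eta\sigma_{\tilde{h}}^2\omega\left(2c-\eta c^2 - \eta\sigma_{\tilde{b}}^2\frac{1+\delta\omega}{1+\delta}\right)^{-1}$, as claimed. The bookkeeping needed to isolate $T$ and the recognition of the $\delta$ identity are where the care lies; the remaining manipulations are routine.
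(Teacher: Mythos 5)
Your proposal is correct and follows essentially the same route as the paper: the same quadratic model with multiplicative and additive mean-zero noise, the same Bernoulli-$\zeta$ communication with $\beta=1$, a reduction by worker symmetry to a two-dimensional stationary linear system, and the same key identity $1-(1-\zeta)(1-\eta c)^2=\zeta+(1-\zeta)\eta(2c-\eta c^2)$ that is hidden inside the paper's matrix inversion. The only differences are cosmetic: you parameterize the system by the per-worker second moment and cross-moment $(U,V)$ and solve by elimination via $D=U-V$, whereas the paper directly tracks the aggregate's second moment $Q=\omega U+(1-\omega)V$ together with $P=U$ and inverts the resulting $2\times 2$ matrix.
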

\begin{proof}
The update rule of $i^{th}$ worker is:
\begin{equation}
    x^i_{t+1}=(1-\eta c)x^i_t+\eta (\tilde{b}x^i_t+\tilde{h}).
\end{equation}
Assume we initialize $x^i_0$ such that $E[x^i_0]=0$ which implies $E[x^i_t]=0$.
Due to the independence of each update, we can denote the variance of communication result as:
\begin{equation}
    Q_{t}=E\left[\left(\sum_{i=1}^p \theta^i x^i_t\right)^2\right].
\end{equation}
Then $Q_{t+1}$ can be shown as:
\begin{equation}\label{var1}
\begin{split}
    & Q_{t+1}=E\left[\left(\sum_{i=1}^p \theta^i x^i_{t+1}\right)^2\right]\\
    & = (1-\eta c)^2 Q_{t}+\eta^2  \sigma_{\tilde{b}}^2 \sum_{i=1}^p (\theta^i)^2 E\left[( x^i_t)^2\right] + \eta^2 \sigma_{\tilde{h}}^2 \sum_{i=1}^p(\theta^i)^2.\\
\end{split}
\end{equation} 
For any local worker, we have that :
\begin{equation}
    \begin{split}
        & E[(x_{t+1}^{i})^2] 
        = (1-\eta c)^2 E\left[(x^i_{t})^2\right]+ \eta^2 \sigma_{\tilde{b}}^2 E\left[(x^i_t)^2\right] +  \eta^2 \sigma_{\tilde{h}}^2.
    \end{split}
\end{equation}
Assuming that $E[(x^i_t)^2]$ is independent of $i$, let $P_t=E[(x^i_t)^2]$, we can conclude that:
\begin{equation} \label{Var2}
    \begin{cases}
    P_{t+1}=(1-\eta c)^2 P_t+ \eta^2 \sigma_{\tilde{b}}^2  P_t  +  \eta^2 \sigma_{\tilde{h}}^2\\
    Q_{t+1}=(1-\eta c)^2 Q_{t}+ \eta^2 \sigma_{\tilde{b}}^2 P_t \omega+  \eta^2 \sigma_{\tilde{h}}^2\omega,
    \end{cases},
\end{equation}
where $\omega=\sum_{i=1}^p(\theta^i)^2$.
Equation \eqref{Var2} represents the variance and expectation without communication.
If workers communicate with each other, then the update rule is:
\begin{equation} 
    x^i_{t+1}=\sum_{i=1}^p \theta^i x^i_{t}.
\end{equation}
By the definition, we can get that:
\begin{equation} 
    \begin{split}
        Q_{t+1} & = E\left[(\sum_{i=1}^p \theta^i x^i_{t+1})^2\right]\\
        & = E\left[(x^i_{t+1})^2\right] = P_{t+1}\\
        & = E\left[(\sum_{i=1}^p \theta^i x^i_{t})^2\right] = Q_t.\\
    \end{split}
\end{equation}
With the probability $\zeta$ and law of total expectation, replacing $\sum_{i=1}^p(\theta^i)^2$ by $\omega$, 
in order to find a steady-state, we need to solve the linear equation:
\begin{equation}\label{Var4}
\begin{split}
  & \left[
    \begin{array}{ccc}
    Q  \\
    P  \\
\end{array}
\right]\\
    &= (1-\zeta)
    \left[
        \begin{array}{ccc}
        (1-\eta c)^2Q+\eta^2 \sigma_{\tilde{b}}^2 \omega P+  \eta^2 \sigma_{\tilde{h}}^2\omega \\
        (1-\eta c)^2 P+ \eta^2 \sigma_{\tilde{b}}^2   P +  \eta^2 \sigma_{\tilde{h}}^2 \\
    \end{array}
    \right]
    +
    \zeta
    \left[
        \begin{array}{ccc}
        Q \\
        Q  \\
    \end{array}
    \right].
\end{split}
\end{equation}
Let $\rho=(1-(1-\eta c)^2)$ and $\delta \rho=\frac{\zeta}{1-\zeta}$. 
Since the determinant of the $2 \times 2$ matrix is $\Delta=\rho^2\omega^{-1}(1+\delta)-\rho \eta^2  \sigma_{\tilde{b}}^2 (\omega^{-1}+\delta)$,we can know the value of Q is:
\begin{equation}\label{eqn:finvar}
\begin{split}
    Q & =\left[
    \begin{array}{ccc}
    1 & 0
\end{array}
\right] 
\left[
    \begin{array}{ccc}
    \rho \omega^{-1} &  - \eta^2 \sigma_{\tilde{b}}^2 \\
    -\delta \rho &    (1+\delta)\rho- \eta^2 \sigma_{\tilde{b}}^2\\
\end{array}
\right]^{-1}
\left[
    \begin{array}{ccc}
     \eta^2 \sigma_{\tilde{h}}^2\\
     \eta^2 \sigma_{\tilde{h}}^2 \\
\end{array}
\right]\\
    & =
\left[
    \begin{array}{ccc}
    1 & 0
\end{array}
\right]
\frac{1}{\Delta}
\left[
    \begin{array}{ccc}
    (1+\delta)\rho- \eta^2 \sigma_{\tilde{b}}^2 &   \eta^2 \sigma_{\tilde{b}}^2 \\
    \delta \rho &    \rho \omega^{-1}\\
\end{array}
\right]
\left[
    \begin{array}{ccc}
     \eta^2 \sigma_{\tilde{h}}^2\\
    \eta^2 \sigma_{\tilde{h}}^2 \\
\end{array}
\right]\\
        &=\eta\sigma_{\tilde{h}}^2\omega \left(2 c -\eta c^2-\eta\sigma_{\tilde{b}}^2\frac{1+ \delta \omega}{1+\delta}\right)^{-1}.
    \end{split}
\end{equation}
\end{proof}
The above Lemma shows that the variance of weighted aggregating SGD can be determined and controlled based on Equation \eqref{eqn:finvar}. 
A predetermined variance will result in the stability of our method.
\begin{myLemma}\label{lem3}
On a large data, given p workers, if communication probability $\zeta=1$, equally weighted case is equivalent to mini-batch gradient descent with the same learning rate $\eta$.
\end{myLemma}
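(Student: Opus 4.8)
The plan is to specialize the weighted aggregation update to the equally weighted, full-communication regime and then follow a single synchronized iterate through one step. First I would set $\theta^i = 1/p$ for every worker (the equally weighted case of Property~\ref{property1}, obtained as $\tilde a \to 0$) and take $\zeta = 1$, so that the aggregation step $x^i \leftarrow \sum_{j=1}^p \theta^j x^j$ used in the variance analysis is applied after every gradient step. The crucial structural observation is that this aggregation \emph{synchronizes} the workers: once it is performed, every worker holds the common value $\frac{1}{p}\sum_{j=1}^p x^j$, so after the first communication all workers share a single iterate, which I denote $y_t$, and they remain synchronized at every subsequent step.

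Next I would compute the effect of one full step on $y_t$. Starting from the common value $y_t$, each worker performs its own stochastic gradient step on its own sample,
\begin{equation}
x^i_{t+1/2} = y_t - \eta g^i_t(y_t),
\end{equation}
and the immediately following aggregation (which occurs because $\zeta = 1$) averages these intermediate iterates:
\begin{equation}
y_{t+1} = \frac{1}{p}\sum_{i=1}^p x^i_{t+1/2} = y_t - \eta \cdot \frac{1}{p}\sum_{i=1}^p g^i_t(y_t).
\end{equation}
Thus the synchronized sequence $\{y_t\}$ obeys a deterministic recursion driven by the average of $p$ stochastic gradients, all evaluated at the \emph{same} point $y_t$.

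I would then identify $\frac{1}{p}\sum_{i=1}^p g^i_t(y_t)$ as a mini-batch gradient of batch size $p$: each $g^i_t(y_t)$ is the gradient of the loss at $y_t$ on the single sample drawn by worker $i$, so their average is exactly the mini-batch gradient estimate over those $p$ samples. Hence the recursion for $y_t$ coincides term by term with the mini-batch gradient descent update at learning rate $\eta$, which is the assertion of the lemma.

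The step I expect to be the main obstacle is not the algebra above but justifying the hypothesis ``on a large data.'' Because the $p$ workers sample independently over the full dataset, two workers could in principle draw the same example, whereas a textbook mini-batch consists of $p$ distinct points; on a large dataset, however, the probability of such a collision is negligible, so the $p$ independently drawn samples constitute a valid size-$p$ mini-batch. Making this correspondence precise---rather than the one-step computation, which is routine once synchronization is recognized---is the only delicate point.
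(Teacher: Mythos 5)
Your proposal is correct and follows essentially the same route as the paper: you synchronize the workers via the equally weighted aggregation, compute the one-step recursion $y_{t+1}=y_t-\frac{\eta}{p}\sum_{i=1}^p g^i_t(y_t)$, and identify the averaged gradient as a size-$p$ mini-batch gradient. Your closing remark about collisions is exactly how the paper uses the ``large data'' hypothesis (it notes that the probability that $g^i_t(x_t)=g^j_t(x_t)$ is nearly zero when $N \gg p$), so nothing is missing.
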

\begin{proof}
Given the update process of one local worker at current step:
\begin{equation}
x^{i}_{t+1}=x_t-\eta g^i_t \left( x_t \right),
\end{equation}
where $x_t=\frac{1}{p} \sum_{i=1}^p x^i_{t}$.
Then the communication result can be shown as:
\begin{equation}
    \begin{split}
        x_{t+1} 
        & =\frac{1}{p} \sum_{i=1}^p x^i_{t+1}\\
        &= \frac{1}{p} \sum_{i=1}^p \left(x_t-\eta g^i_t \left( x_t \right)\right)\\
        & = x_t-\frac{\eta}{p} \sum_{i=1}^p  g^i_t(x_t).
    \end{split}
\end{equation}
Since the sample size $N$ is large enough, $N \gg p$.
The probability that $g^i_t(x_t)=g^j_t(x_t)$ is nearly zero.
\end{proof}
As suggested by \cite{shang2018vr}, mini-batching can effectively reduce the variance of stochastic gradient estimates. 
Our Lemma \autoref{lem3} further serves as a boundary condition regarding the performance of parallel stochastic gradient descent.
\begin{figure}[t]
  \centering
  \subfigure[p=8]{\includegraphics[width=0.25\linewidth]{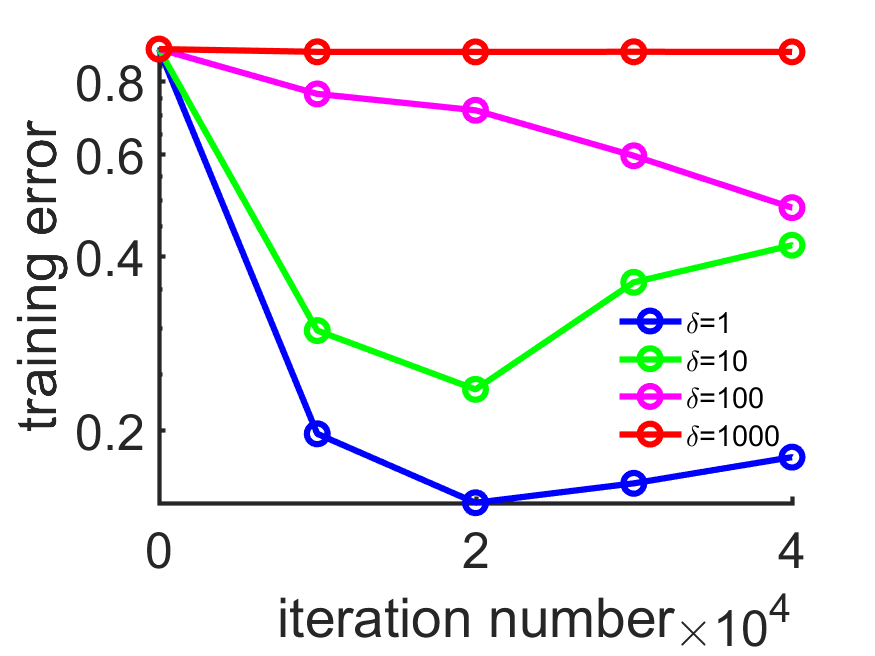}}
  \subfigure[p=8]{\includegraphics[width=0.25\linewidth]{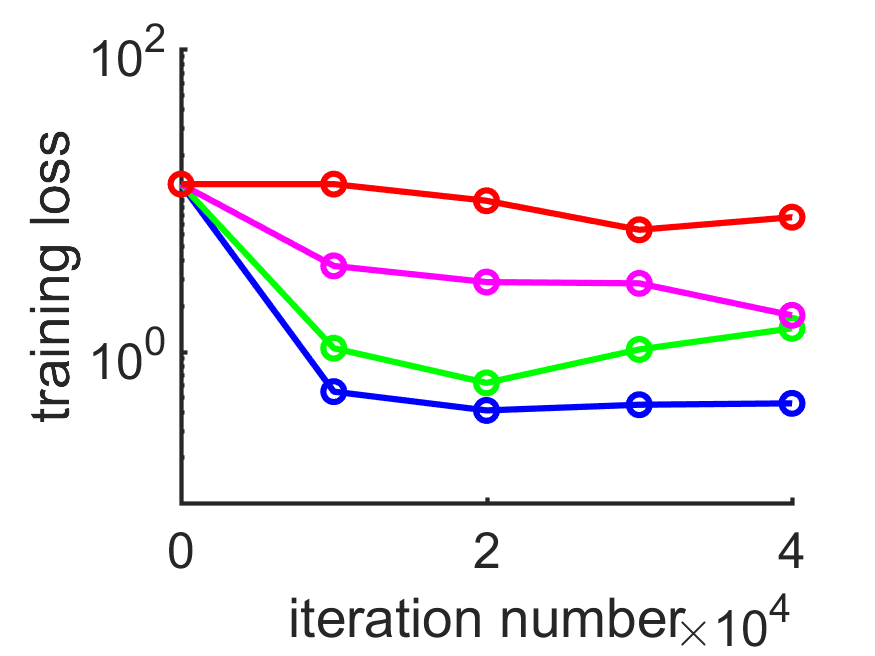}}\\
  \subfigure[p=4]{\includegraphics[width=0.25\linewidth]{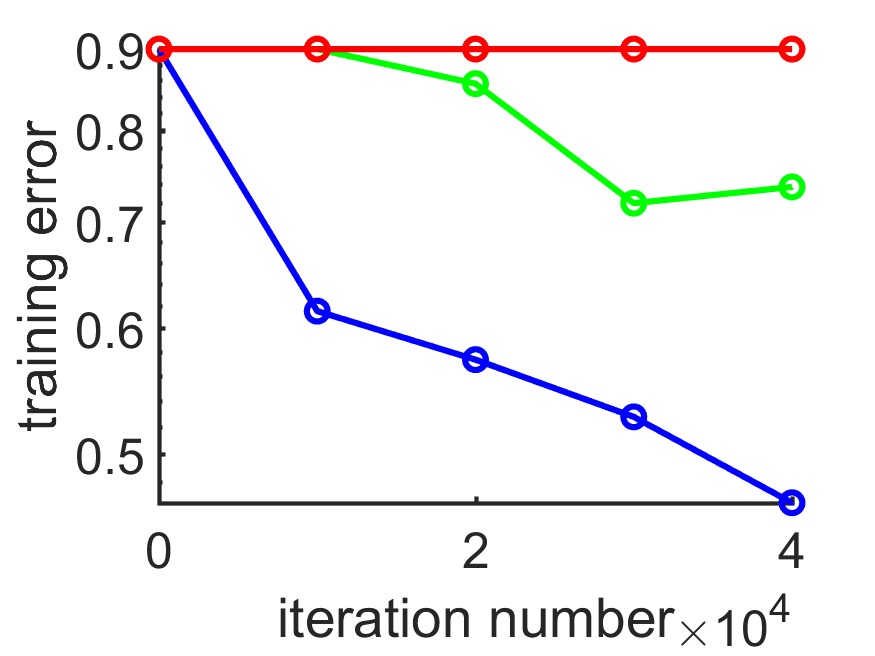}}
  \subfigure[p=4]{\includegraphics[width=0.25\linewidth]{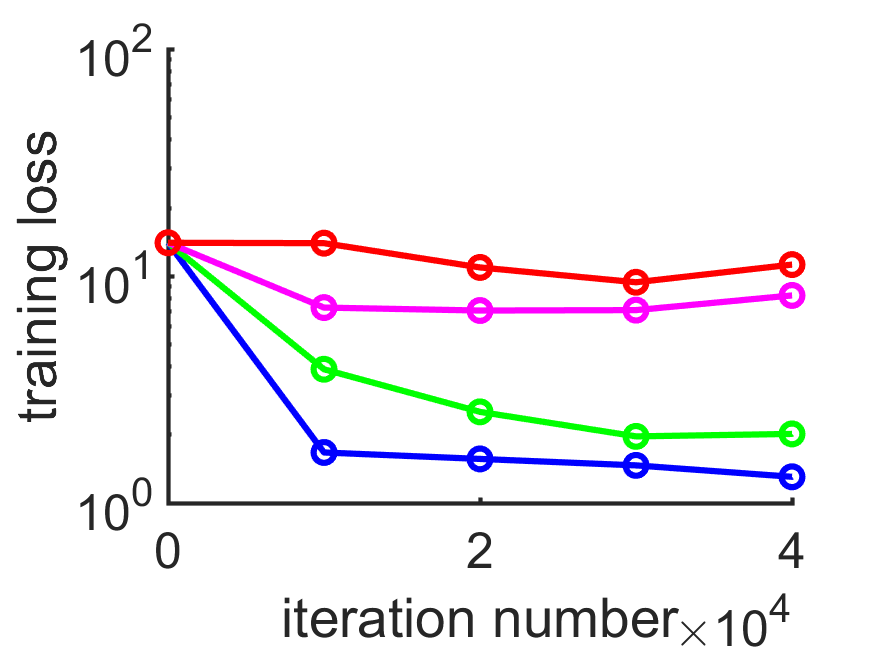}}
  \caption{Order effect. (a) and (b) represent the results on \textit{Fashion-MNIST}, (c) and (d) show the results on \textit{CIFAR-10}.} \label{fig:order effect}
\end{figure}

\section{Experiments}

\label{sec:ex}
We evaluate our method by applying it to the convolutional neural network (CNN) for specific classification tasks.
\subsection{Order effect}
Sample order is a key factor in influencing the aggregation performance.
A proper sample order can help improve the performance of local worker and modify the communication result. 
We numerically prove the effect of sample order by performing 
certain tests \textit{Fashion-MNIST} and \textit{CIFAR-10} datasets with $\delta=1,10,100,1000$ where $\delta$ denotes the numbers of continuously gone-through samples with the same label.

As shown in \autoref{fig:order effect}, if $\delta=1000$ which means that the local worker was given only one type of label during each communication period, the improvement in accuracy and loss can be largely ignored.
For $\delta=100$, although it sometimes achieves better performance compared with $\delta=1000$, the convergence rate is still very slow.
For $\delta=1,10$, their performance is better than $\delta=100,1000$. 
As \textit{CIFAR-10} is more complex than \textit{Fashion-MNIST}, the difference between $\delta=1$ and $\delta=10$ on the training error is becoming quite enlarged.
Based on such phenomenon, we can learn that the more complex the dataset, the more important the sample order.

Although $\delta=1$ achieves the best result among the other three situations, it is still inefficient to conclude that $\delta=1$ is the best sample order among all the orders.
Even we can find the best sample order for one dataset, it might not be suitable for the other datasets. 
Given that it is hard to try all the possible orders, our parallel
method can help us approach the optimal sample order as soon as possible.
\subsection{Experiment settings}
In this section, we will discuss the general experiment settings for the datasets and the detailed settings for different benchmarks.

\subsubsection{General settings}
Now we show the experiments on four datasets\footnote{ MNIST (http://yann.lecun.com/exdb/mnist/), Fashion-MNIST (https://\\arxiv.org/abs/1708.07747), CIFAR-10 and CIFAR-100 (https://www.cs.toronto\\.edu/~kriz/cifar.html)}.
For the \textit{CIFAR-10} and \textit{CIFAR-100} datasets, we implement CNN with eight convolution layers and four fully connected layers.
Following \cite{zhang2015deep}, given \textit{C} the fully-connected convolutional operator, \textit{M} the max pooling operator, \textit{D} the linear operator with dropout rate being one, and \textit{F} the linear operator with softmax output, our CNN structure can be described as (3,32)\textit{C}(64,32)
\textit{M}(64,16) \textit{C}(128,16) \textit{M}(128,8) \textit{C}(256,8) \textit{M}(256,4) \textit{C}(512,4)
\textit{M}(512,2) \textit{D}(128,1) \textit{D}(256,1) \textit{D}(512,1) \textit{D}(1024,1) \textit{F}(10,1).
For the \textit{MNIST} and \textit{Fashion-MNIST} datasets, we implement the following 6-layer CNN:
(1,28) \textit{C}(16,24) \textit{M}(16,12)
\textit{C}(32,8)
\textit{M}(32,4).

We set the constant learning rate for all experiments, $\eta=0.001$ for \textit{CIFAR-10, CIFAR-100} and $\eta=0.01$ for \textit{Fashion-MNIST, MNIST}.
We check the cross entropy loss of the whole training examples every 10,000 iterations. 
The running time includes two parts: computation and communication. 
For \textit{CIFAR-10} and \textit{CIFAR-100}, we test the algorithm performance under different numbers of Tesla K80 ($p=2,4,8$).
For \textit{Fashion-MNIST} and \textit{MNIST}, we implement the experiments on different numbers of CPUs ($p=4,8,16)$.
In order to better present the tendency of performance on error, we take the logarithm of the results.
Due to space limitation, we show the detailed experiments on \textit{CIFAR-10} and \textit{CIFAR-100} and the main results about accuracy on \textit{Fashion-MNIST} and \textit{MNIST}.
To obtain reliable and consistent experimental results, all experiments are conducted under TensorFlow \cite{abadi2016tensorflow}.
\subsubsection{Benchmark settings}
We compared our \textit{WASGD+} with five parallel methods and one sequential method:

\begin{itemize}

\item SGD \cite{ruder2016overview}. The standard sequential SGD with constant learning rate $\eta$.

\item \textit{SimuParallelSGD (\textit{SPSGD})} \cite{zinkevich2010parallelized}. The algorithm divides the data into \textit{p} parts, and each local worker only gets one part. After several iterations they average the sum of all parameters. 

\item \textit{Elastical Averaging SGD (EASGD)} \cite{zhang2015deep}. It is an efficient asynchronous optimization method. Local workers only communicate with the center variable. During the communication, the center variable changes itself based on the parameters of local workers and also updates the parameters of the local workers. We empirically set the communication period $\tau=50$. Based on \cite{zhang2015deep}, we set $\alpha=\frac{0.9}{p}$ for \textit{CIFAR-10,CIFAR-100} and $\alpha=\frac{0.009}{p}$ for \textit{Fashion-MNIST}, \textit{MNIST}. 

\item \textit{Original Multiplicative Weight Update Method (\textit{OMWU})} \cite{dwork2014algorithmic}. This is a classical method published in 1957.
The weights of workers' performance influence their chosen
possibilities in the next $\tau$ steps. As long as the iterations are large enough, it can find the best performance of local workers.
We compute the total loss at the last step before communication to evaluate the weight and the communication period {$\tau=1000$}.

\item \textit{Modified Multiplicative Weight Update Method (\textit{MMWU})}. We modify the process of calculating the weight. 
The weight in the original work was based on the performance of the
entire training dataset. 
We replaced it with the proposed estimation approach and the $\tau=1000$.

\item \textit{Weighted Aggregating SGD (\textit{WASGD})} \cite{guo2019weighted}. This is a synchronous
version of the parallel method that we have published in 2019.
The weights of the workers are determined by the inverse of their loss energies. 
The communication result is aggregating based on the weights of local workers and is totally accepted, so $\beta=1$.
For achieving the same estimation accuracy as \textit{WASGD+}, we set $m=150$, and the communication period {$\tau=1000$}.
\end{itemize}

\subsection{Parameter analysis in \textit{WASGD+}}
Now, we discuss the impact of parameter selections on the performance of the proposed method.

\subsubsection{$\tilde{a}$ analysis}

\begin{figure*}[h]
    \center
  \subfigure[p=8]{\includegraphics[width=0.22\linewidth]{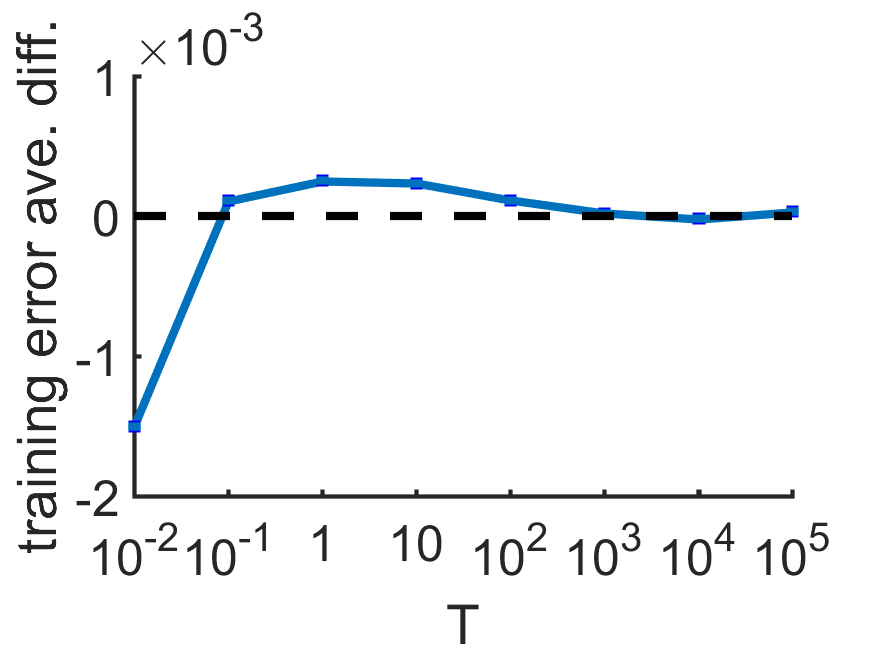}}
   \subfigure[p=8]{\includegraphics[width=0.22\linewidth]{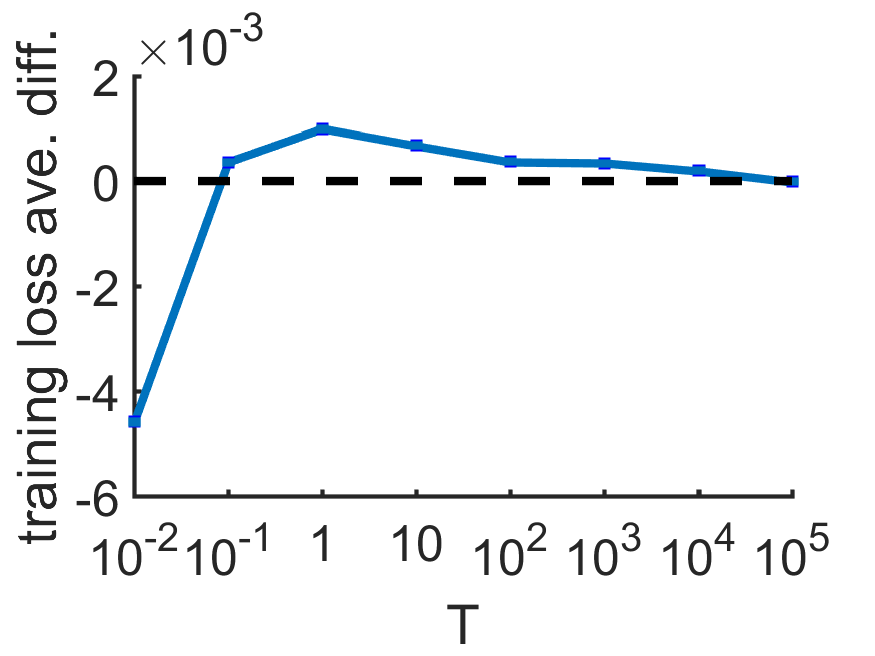}}
  \subfigure[p=8]{\includegraphics[width=0.22\linewidth]{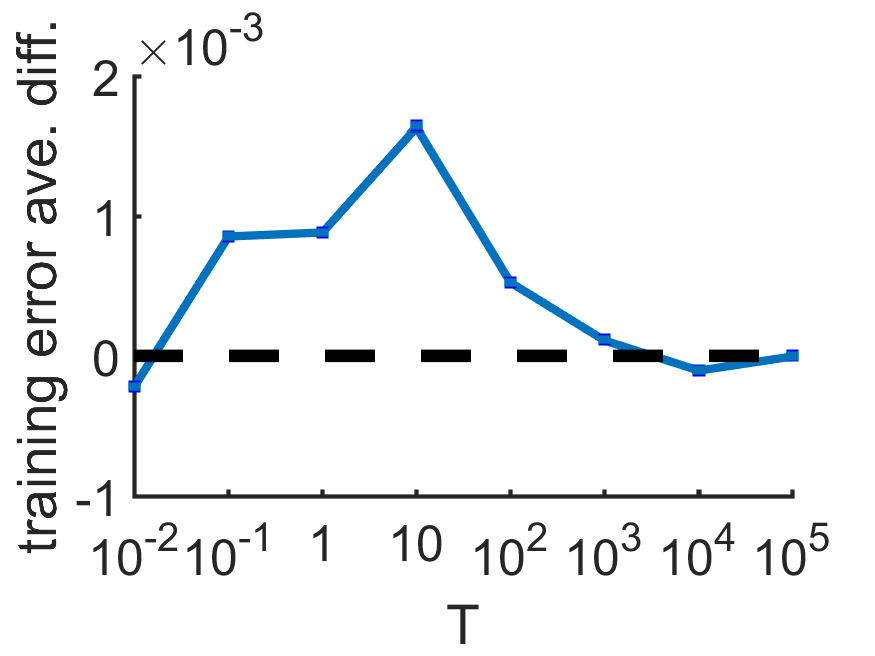}}
  \subfigure[p=8]{\includegraphics[width=0.22\linewidth]{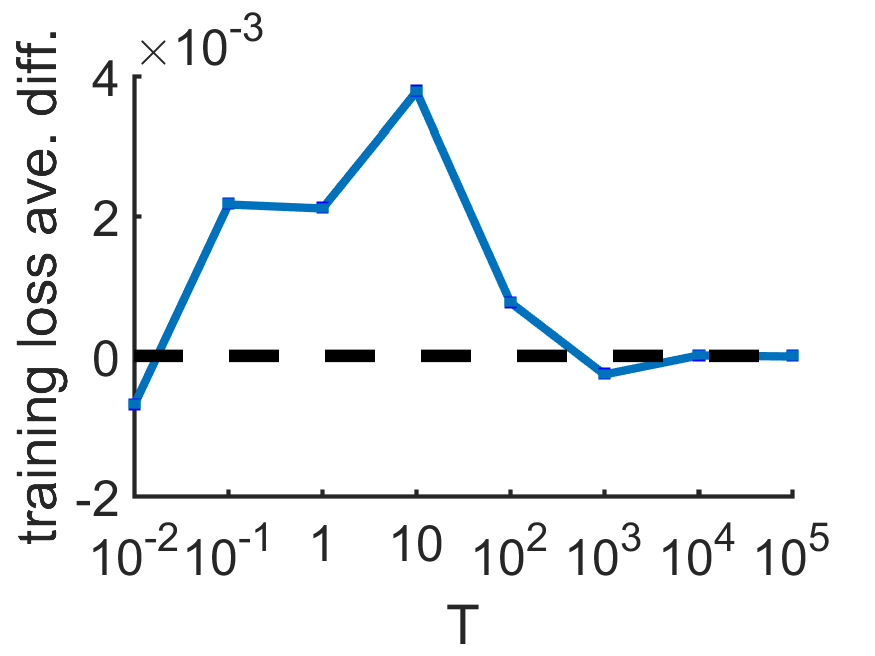}}\\
  \subfigure[p=4]{\includegraphics[width=0.22\linewidth]{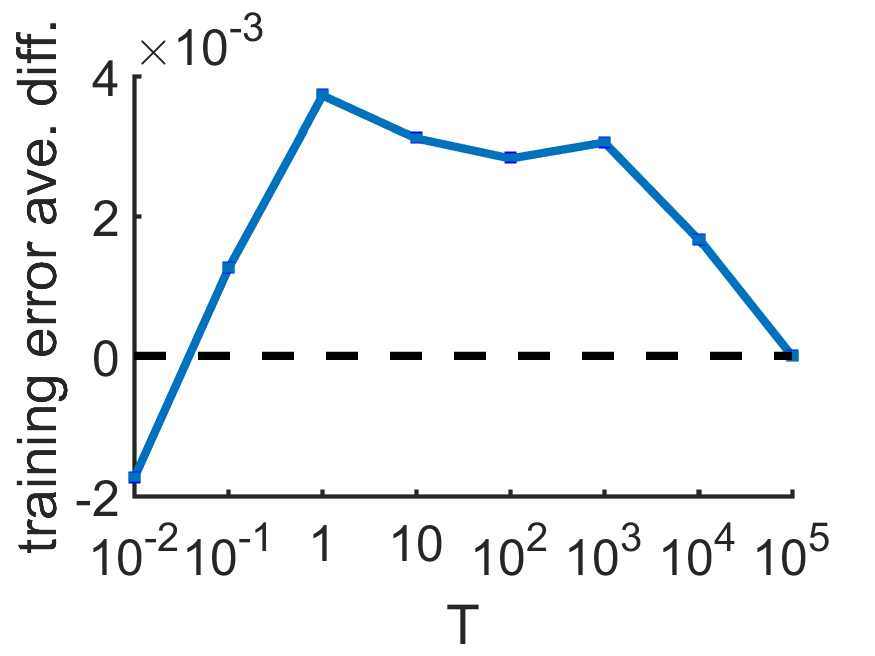}}
  \subfigure[p=4]{\includegraphics[width=0.22\linewidth]{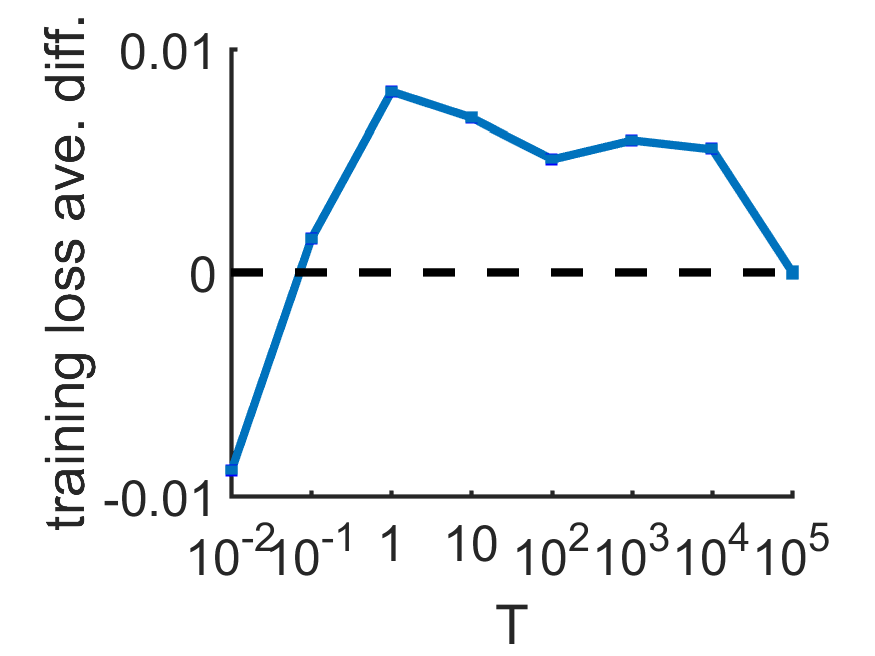}}
  \subfigure[p=4]{\includegraphics[width=0.22\linewidth]{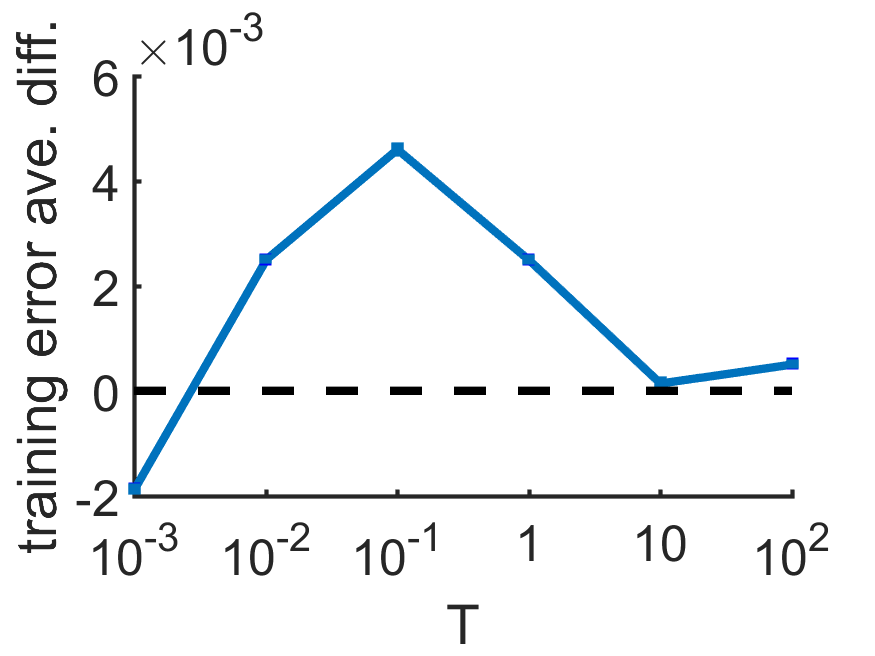}}
  \subfigure[p=4]{\includegraphics[width=0.22\linewidth]{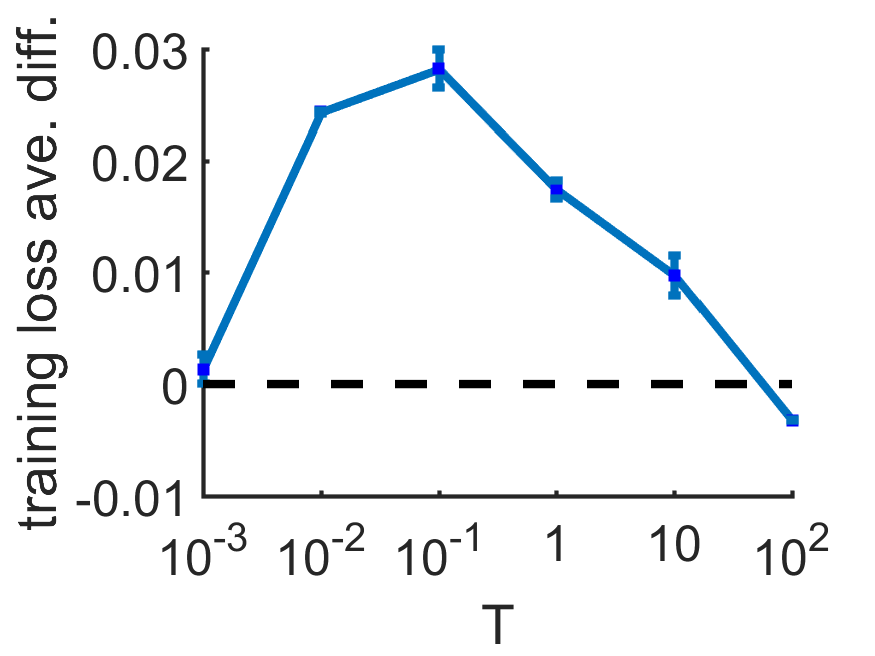}}\\
  \caption{Investigating the impact of $T$ where $T=\frac{1}{\tilde{a}}$. (a) and (b) represent the results on \textit{MNIST}, (c) and (d) represent the results on \textit{Fashion-MNIST}, (e) and (f) show the results on \textit{CIFAR-10}, (g) and (h) show the results on \textit{CIFAR-100}.} \label{fig:differentT}
\end{figure*}

Different values of $\tilde{a}$ denote different weighting strategies.
We try different values of $T$ where $T=\frac{1}{\tilde{a}}$ compared with the equally weighted case.
When $T \rightarrow 0$, it is equivalent to the sequential case, the result must be worse than the equally weighted one \cite{zinkevich2010parallelized}.
When $T \rightarrow \infty$, it means the weight value of each worker is almost equal to each other, the result will be the same as the equally weighted one.

In order to avoid outliers, we conduct five experiments with one epoch.
The point in \autoref{fig:differentT} is calculated by:
\begin{equation}\label{weightcomp}
    \frac{1}{5}\sum_{i=1}^5 \frac{\sum_{j=1}^N (\overline{v^j(jud)}-v_i^j(cur))}{N},
\end{equation}
where $N$ denotes the number of records in one experiment, $\overline{v^j(jud)}$ means the average baseline value of $j^{th}$ record and $v_i^j(cur)$ means the value of $j^{th}$ record for experiment $i$ .
We also plot the error bar of each point which is equal to the variance of $(\sum_{j=1}^N \frac{\left(\overline{ v^j(jud)}-v_1^j(cur)\right)}{N}, \dots , \sum_{j=1}^N \frac{\left(\overline{v^j(jud)}-
v_5^j(cur)\right)}{N})$
As the error bar is very small, this confirms the stability of our experiments.

As shown in \autoref{fig:differentT}, for \textit{MNIST} and \textit{CIFAR-10} datasets, the optimal $T$ is equal to one. 
For \textit{Fashion-MNIST} dataset, the optimal $T$ is 10.
Regarding \textit{CIFAR-100} dataset, the proper choice of $T$ is $10^{-1}$.

Due to the decreasing property of the curve compared to the baseline when $T \rightarrow 0$, we can also estimate the scope of the change
point that distinguishes the performance between the weighted case
and the baseline from \autoref{fig:differentT}.
For \textit{CIFAR-10,Fashion-MNIST} and \textit{MNIST}, when $T \leq 0.1$, the results begin to be worse than the baseline.
So the range of change point for these three datasets is around  $10^{-1}$.
As to \textit{CIFAR-100} dataset, the scope of change point is around $10^{-2}$.
We learn that as the complexity of the dataset increases, the range of change point approaches to zero.
\subsubsection{$\beta$ exploration}
$\beta$ determines how much we accept from the communication result.
When $\beta = 0$, it means reject the communication result which is equivalent to the sequential one and must be the worst case and $\beta=1$ denotes totally accepting the communication result.
In order to find the optimal $\beta$, we set $\beta=1$ as the baseline case and compare to other values of $\beta$.
To be fair, the experiments have been conducted five times with one epoch.
We use the same formula as Equation \eqref{weightcomp} to calculate the point in \autoref{fig:differentbeta}.
We also plot the error bar of each point in the same way above.

As shown in \autoref{fig:differentbeta}, for \textit{MNIST} and \textit{CIFAR-10} datasets, the optimal $\beta$ is equal to 0.9. 
For \textit{CIFAR-100}, the proper choice is 0.8 and for \textit{Fashion-MNIST}, the optimal $\beta$ is 0.7.
As $\beta$ decreases, the results on all four datasets become worse than the baseline case.
Since the error bar is very small, the stability of our experiments is guaranteed.
\begin{figure*}[h]
  \center
  \subfigure[p=8]{\includegraphics[width=0.22\linewidth]{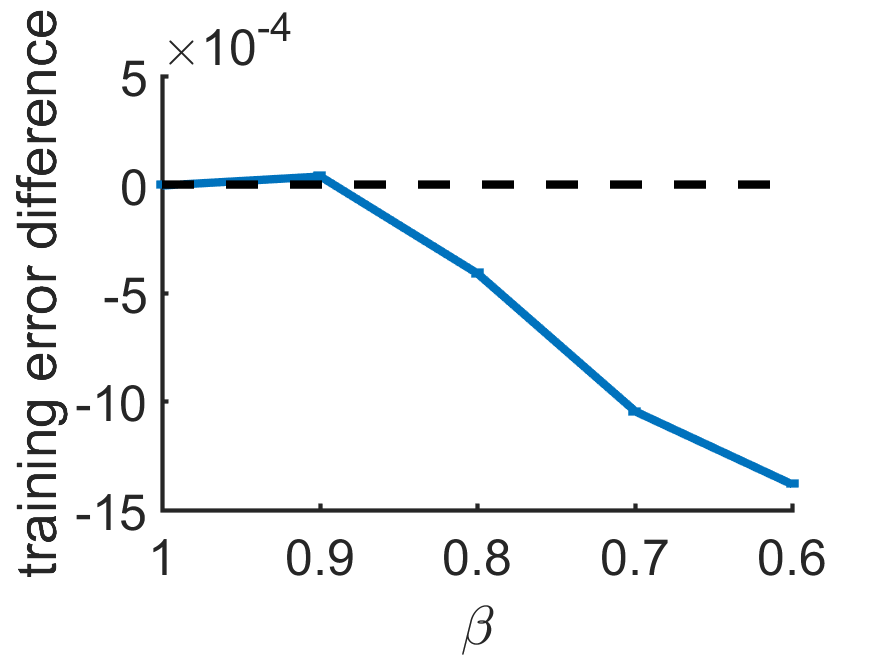}}
  \subfigure[p=8]{\includegraphics[width=0.22\linewidth]{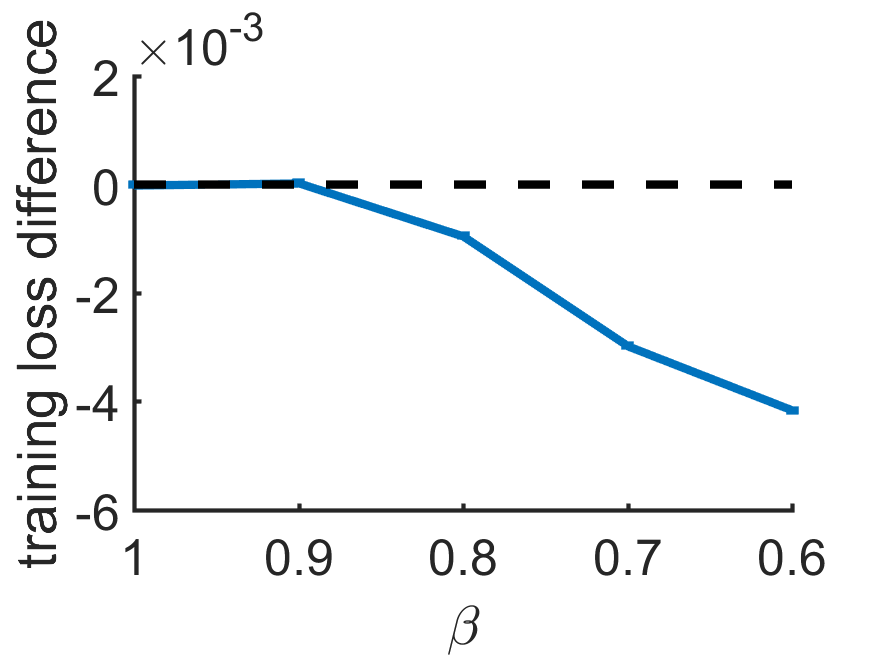}}
  \subfigure[p=8]{\includegraphics[width=0.22\linewidth]{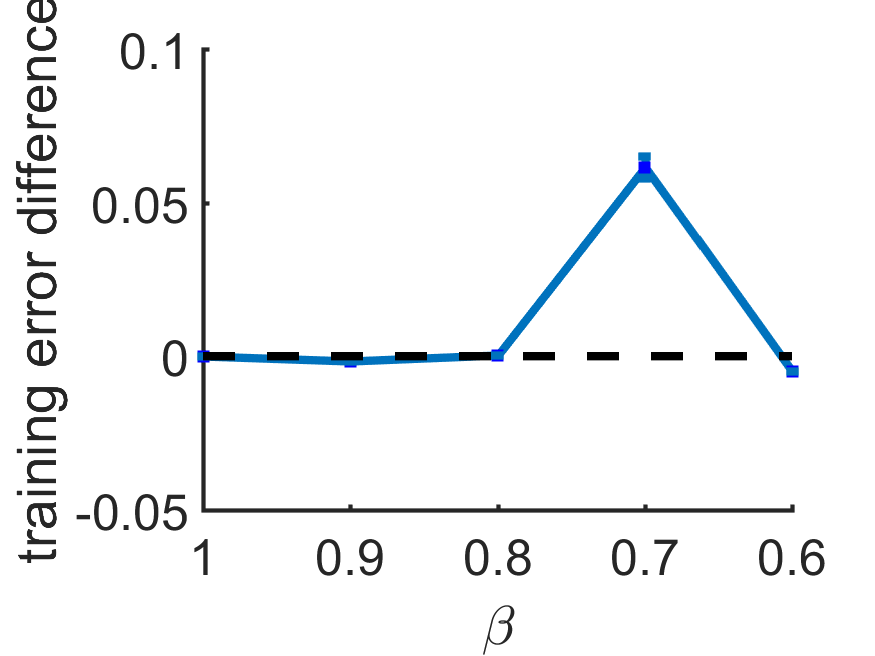}}
  \subfigure[p=8]{\includegraphics[width=0.22\linewidth]{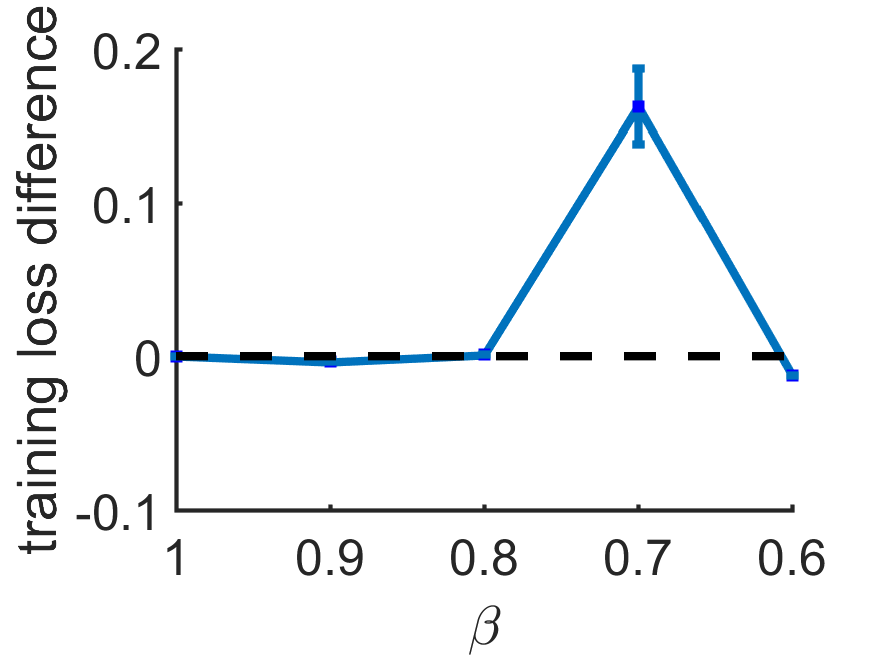}}\\
  \subfigure[p=4]{\includegraphics[width=0.22\linewidth]{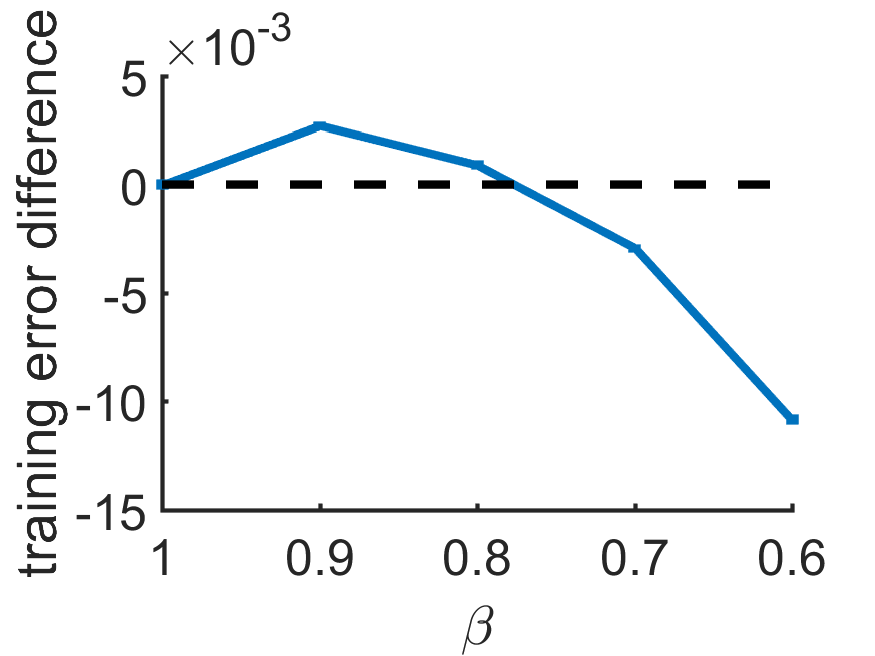}}
  \subfigure[p=4]{\includegraphics[width=0.22\linewidth]{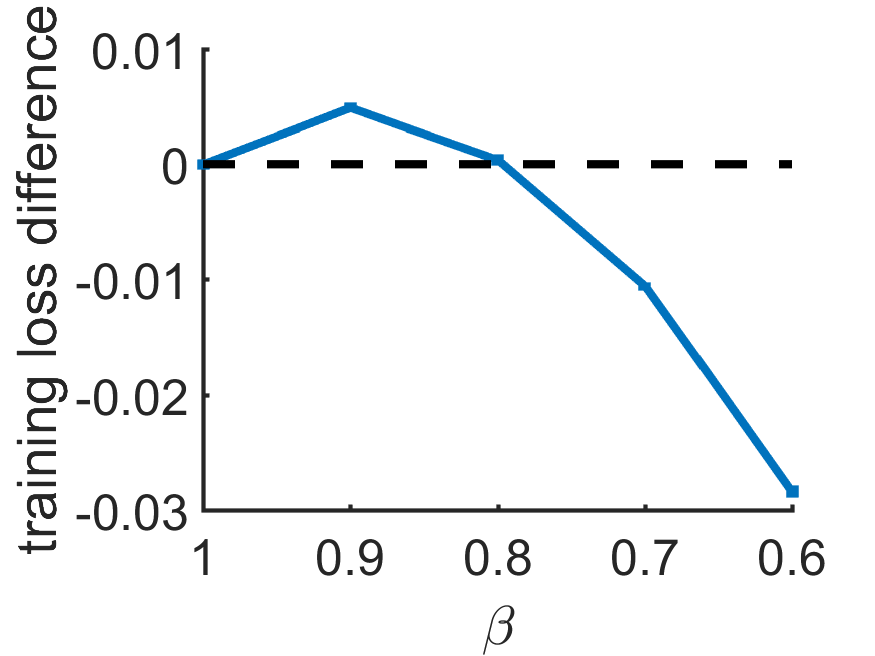}}
  \subfigure[p=4]{\includegraphics[width=0.22\linewidth]{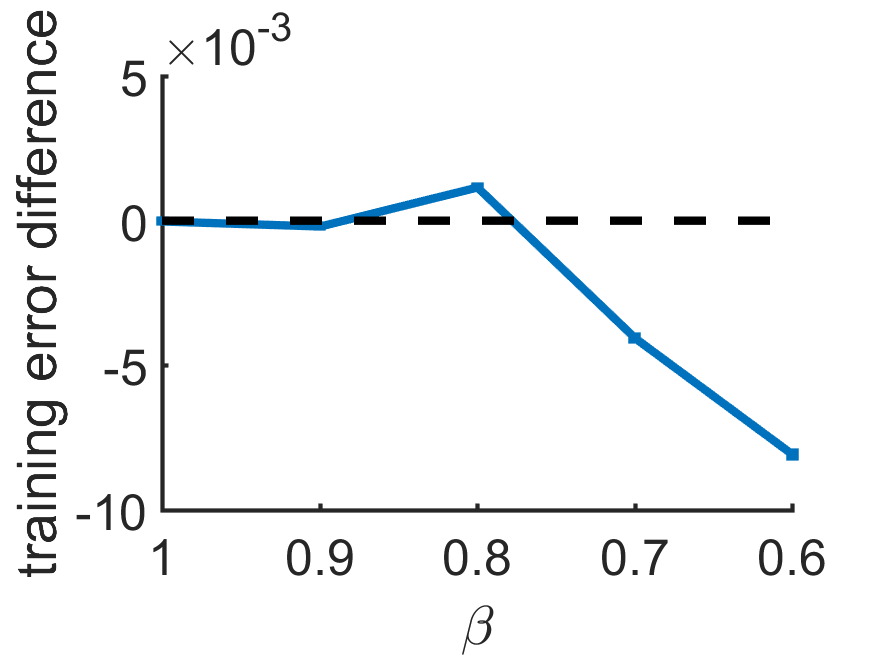}}
  \subfigure[p=4]{\includegraphics[width=0.22\linewidth]{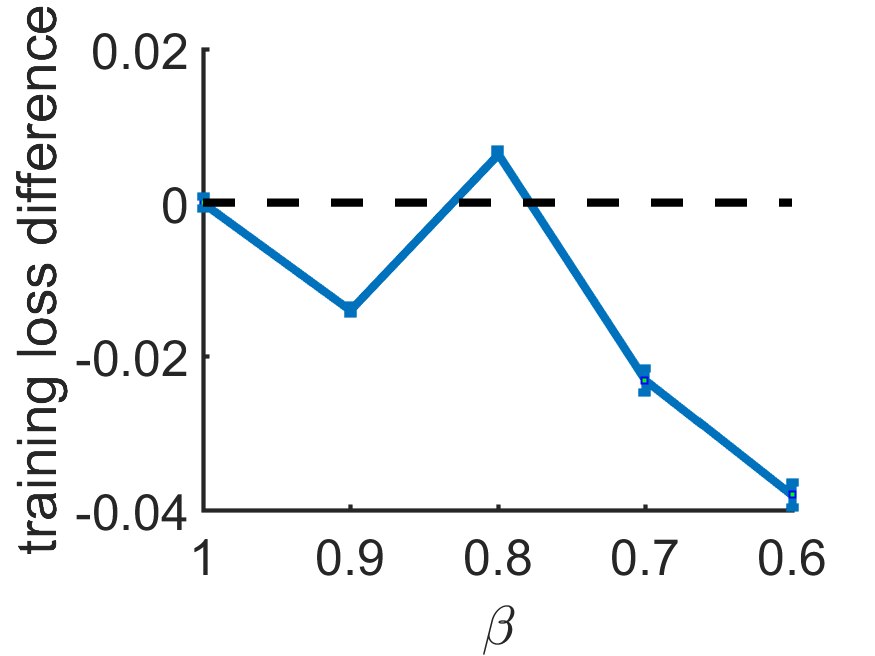}}\\
  \caption{Investigating the impact of $\beta$. (a) and (b) represent the results on \textit{MNIST}, (c) and (d) represent the results on \textit{Fashion-MNIST}, (e) and (f) show the results on \textit{CIFAR-10}, (g) and (h) show the results on \textit{CIFAR-100}.} \label{fig:differentbeta}
\end{figure*}
\subsubsection{Choice of estimation sample size $m$}
Estimation sample size $m$ is also an important parameter that needs to be adjusted.
If $m$ is small, local workers will get more freedom to explore, but the accuracy of the estimated weight will be affected.
If $m$ is large, the accuracy of the estimated weight can be guaranteed; however, the efficiency of parallelization will be weakened.
So we need to balance the trade off between the accuracy of estimation and the efficiency of parallel searching.

We have tried different values of $m$ and found the best choice. 
As shown in \autoref{fig:CIFARes}, for $m=1, 10$, although they allow more explorations than $m=100,1000$, their accuracy of estimation is low and unstable.
\begin{figure}[t]
  \centering
  \subfigure[p=2]{\includegraphics[width=0.25\linewidth]{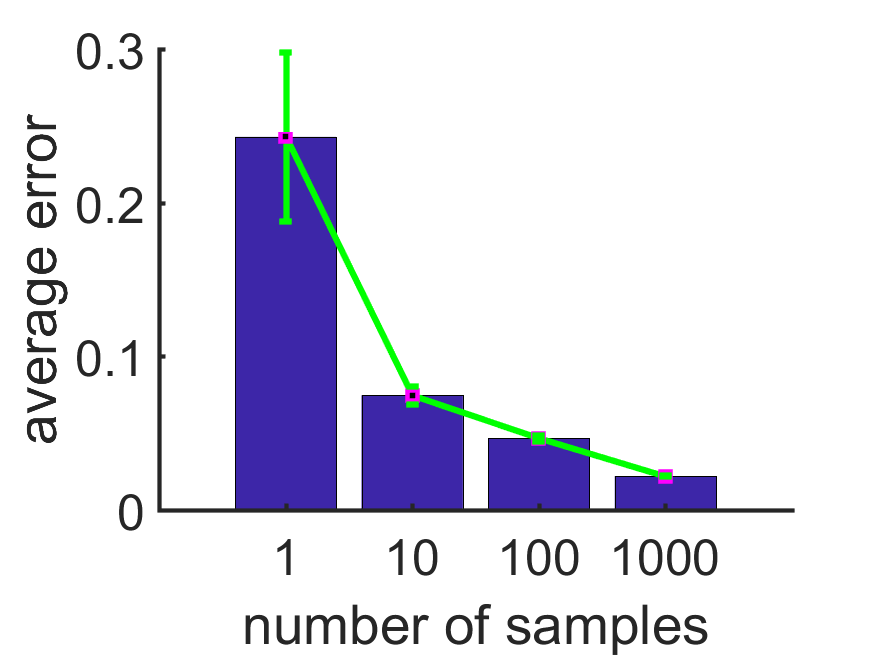}}
  \subfigure[p=4]{\includegraphics[width=0.25\linewidth]{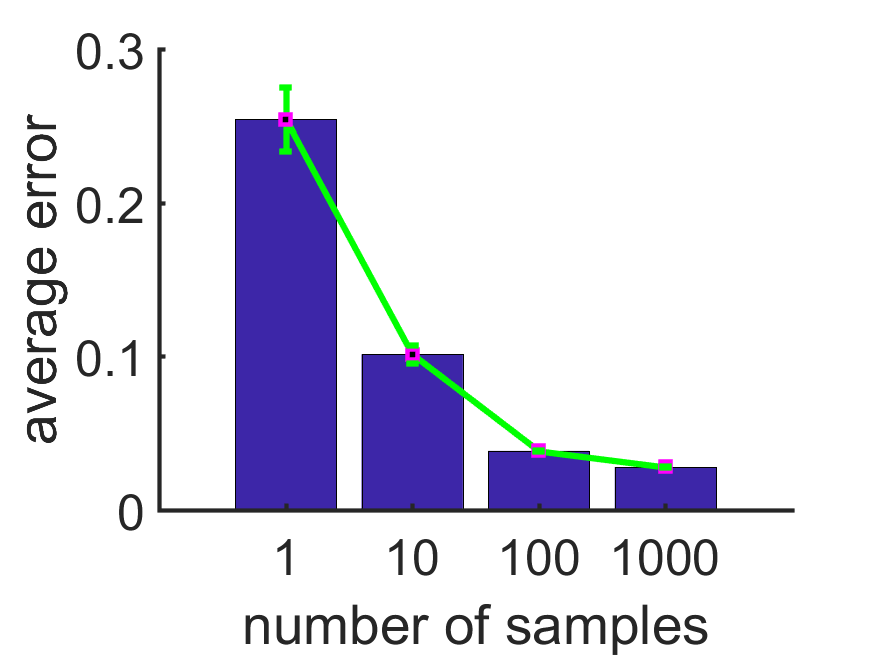}}
  \subfigure[p=8]{\includegraphics[width=0.25\linewidth]{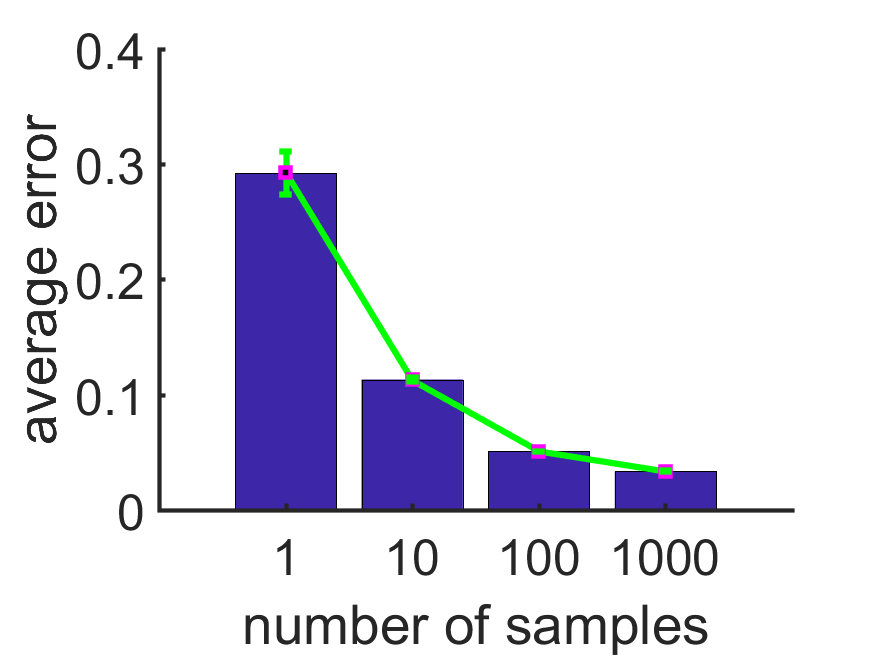}}
  \caption{Estimation accuracy of weight.}\label{fig:CIFARes}
\end{figure}
Comparing $m=100$ and $m=1000$, they both perform well in terms of estimating the accuracy, we choose $m=100$ for maintaining the efficiency of parallel computing.
\subsection{Performances on different $\tau$}
\begin{figure}[h]
  \centering
  \subfigure[p=2]{\includegraphics[width=0.25\linewidth]{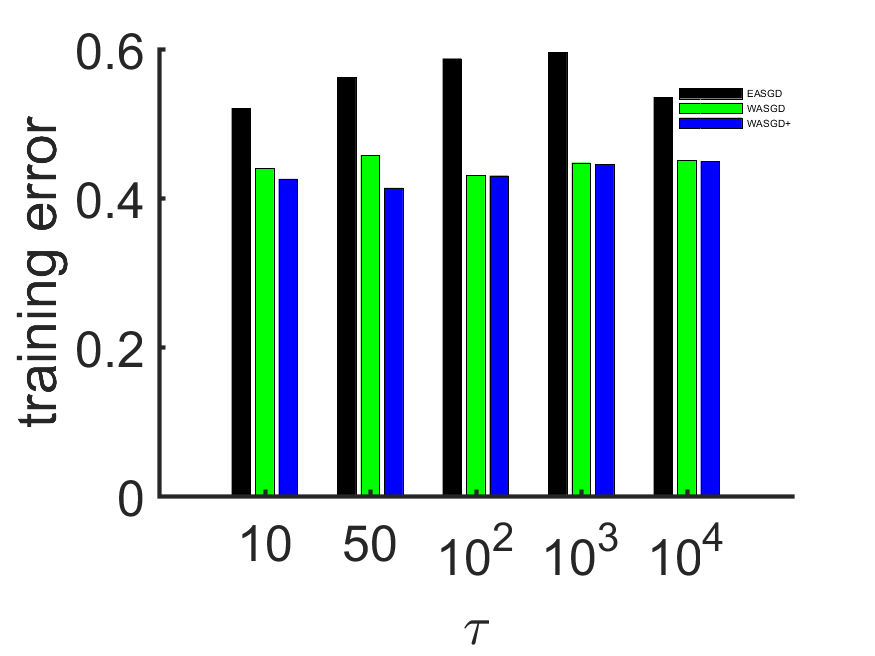}}
  \subfigure[p=2]{\includegraphics[width=0.25\linewidth]{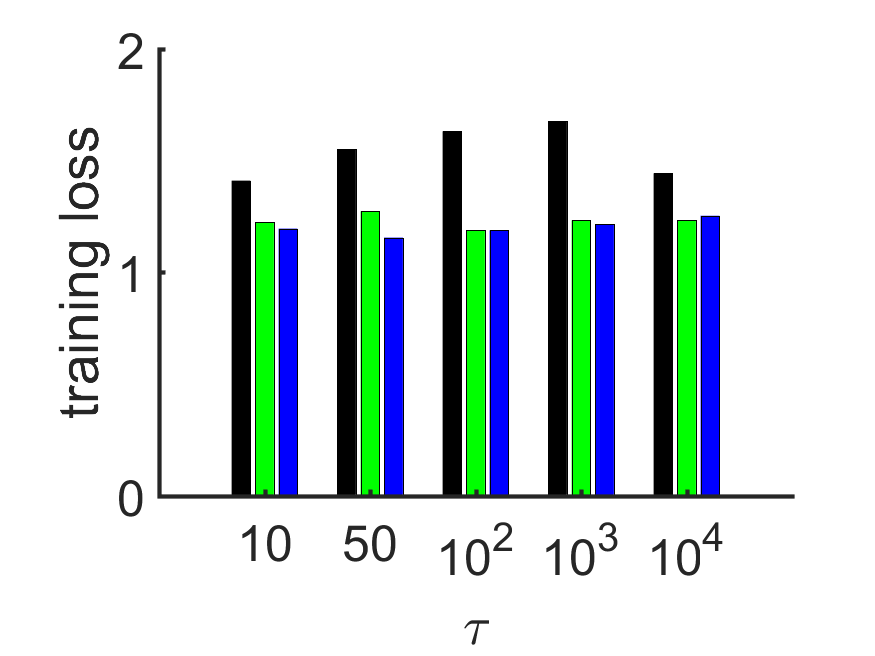}}\\
  \subfigure[p=4]{\includegraphics[width=0.25\linewidth]{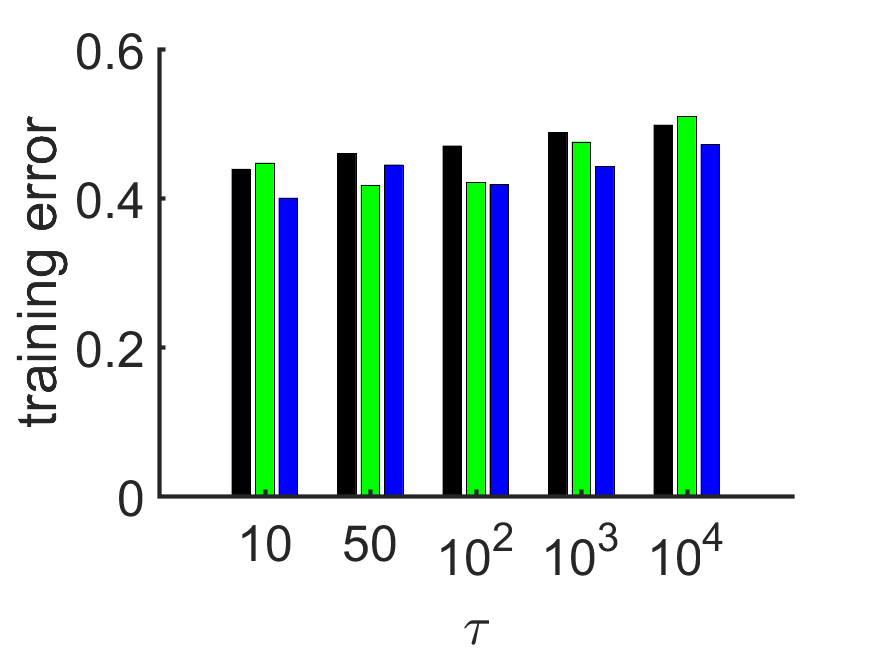}}
  \subfigure[p=4]{\includegraphics[width=0.25\linewidth]{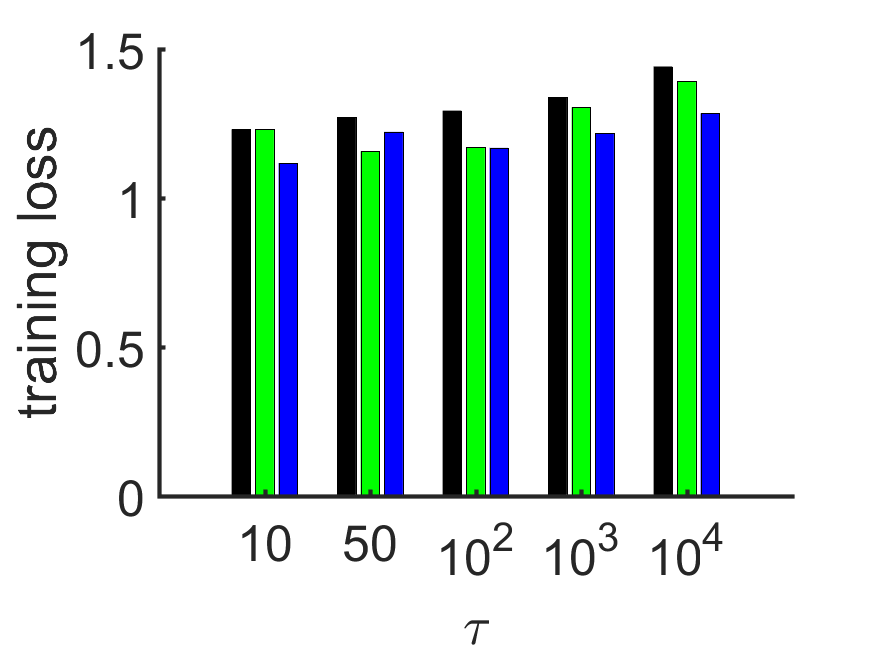}}\\
  \subfigure[p=8]{\includegraphics[width=0.25\linewidth]{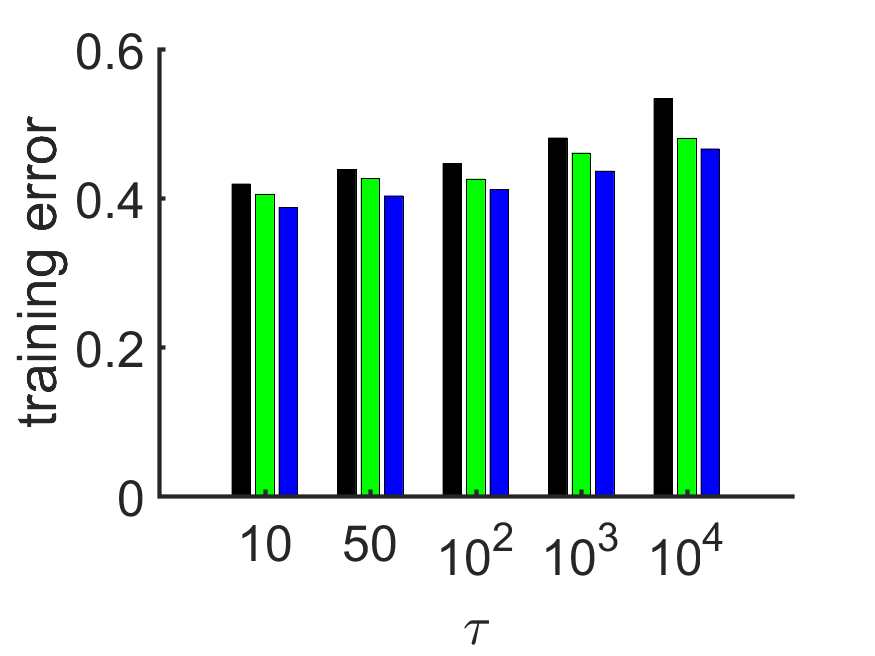}}
  \subfigure[p=8]{\includegraphics[width=0.25\linewidth]{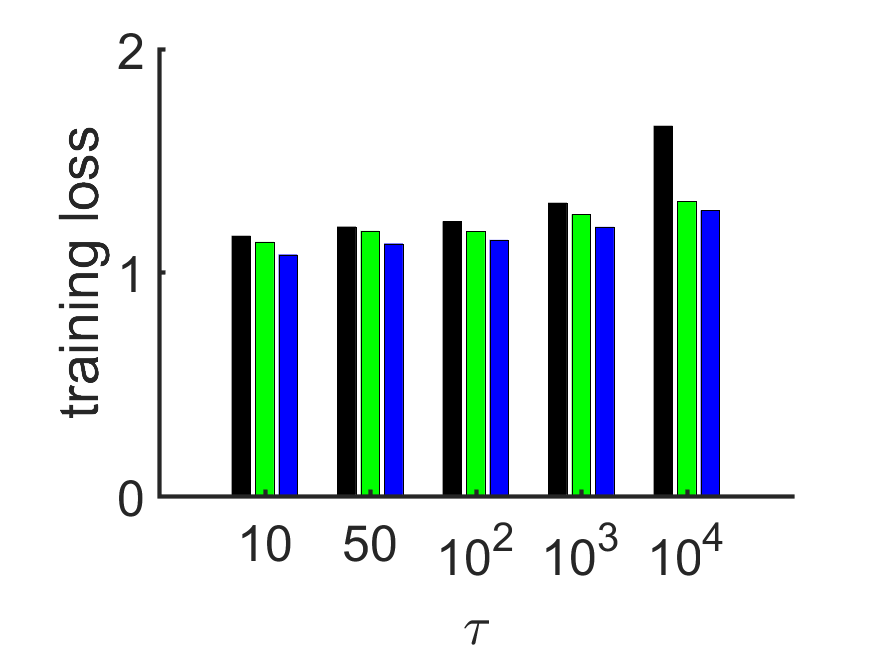}}
  \caption{Positions after two epochs of different $\tau$ on \textit{CIFAR-10}.}\label{fig:cifartau}
\end{figure}
We have multiple choices for the communication period $\tau$, smaller $\tau$ might improve the convergence rate but spends too much time on communicating. 
Larger $\tau$ will weaken the effect of parallel computing.
So we want to explore the performance on different $\tau$ and an optimal one to balance between time and convergence rate.
\begin{figure*}[!t]
  \centering
  \subfigure[p=2]{\includegraphics[width=0.22\linewidth]{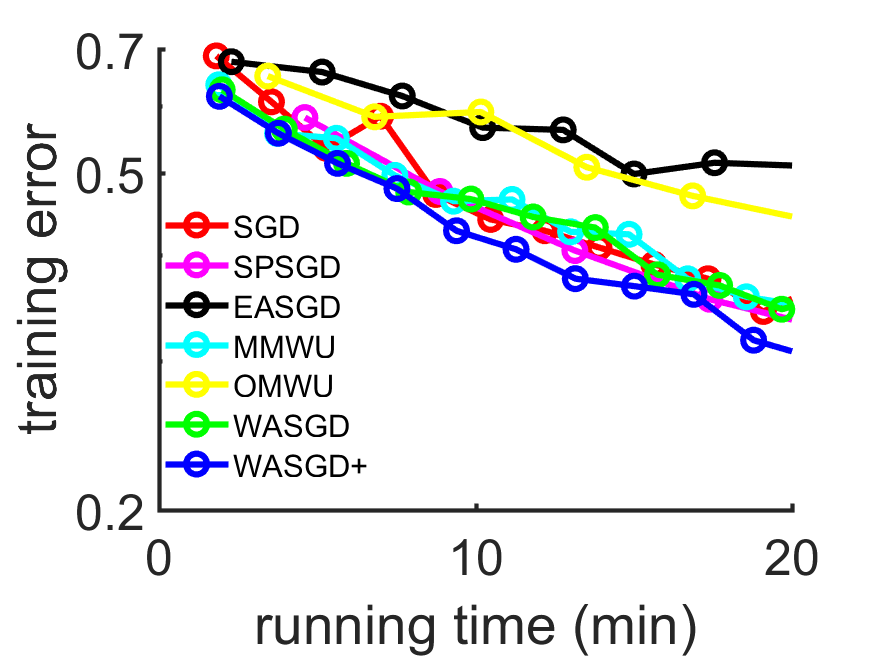}}
  \subfigure[p=2]{\includegraphics[width=0.22\linewidth]{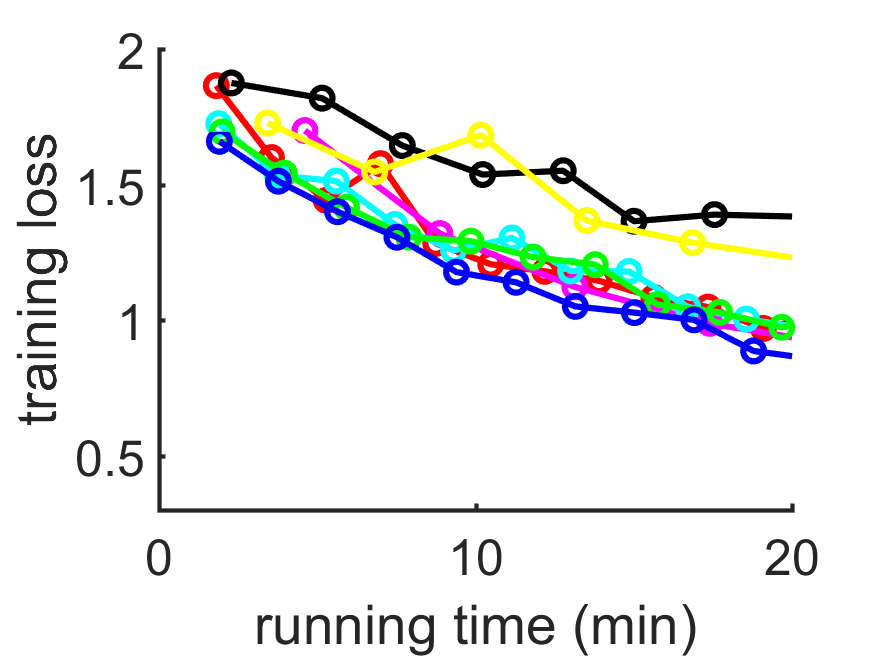}}
  \subfigure[p=2]{\includegraphics[width=0.22\linewidth]{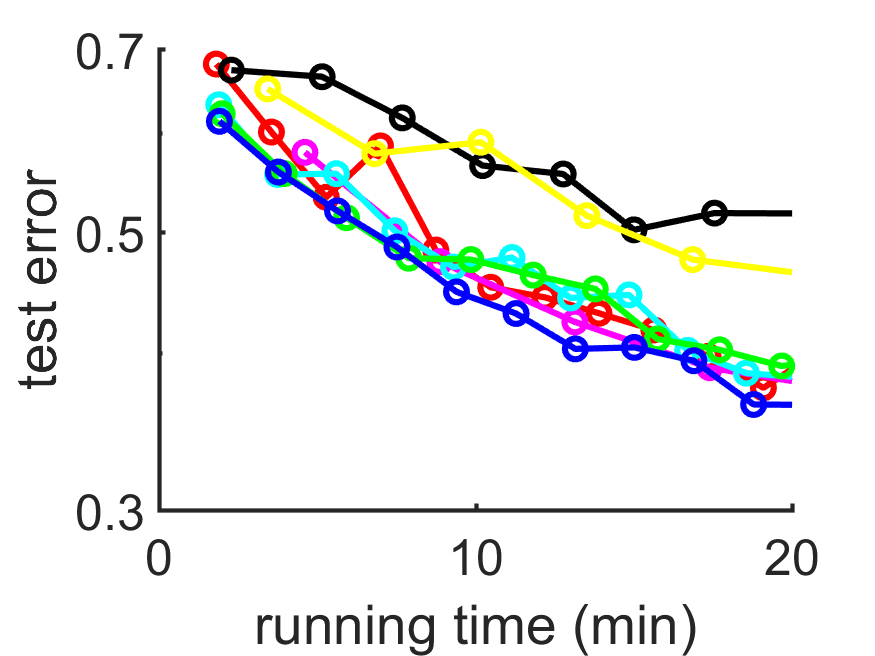}}
  \subfigure[p=2]{\includegraphics[width=0.22\linewidth]{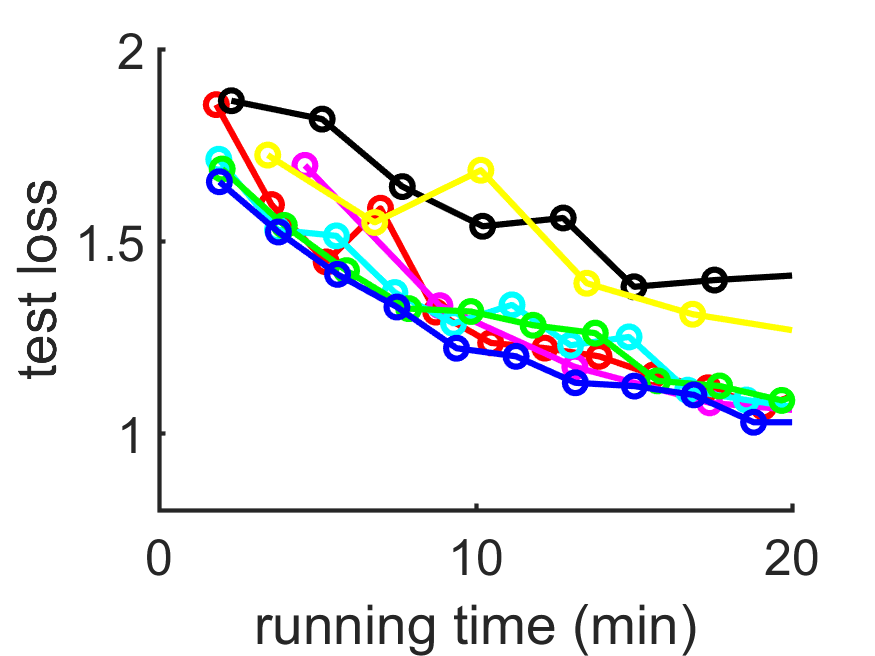}}\\
  \subfigure[p=4]{\includegraphics[width=0.22\linewidth]{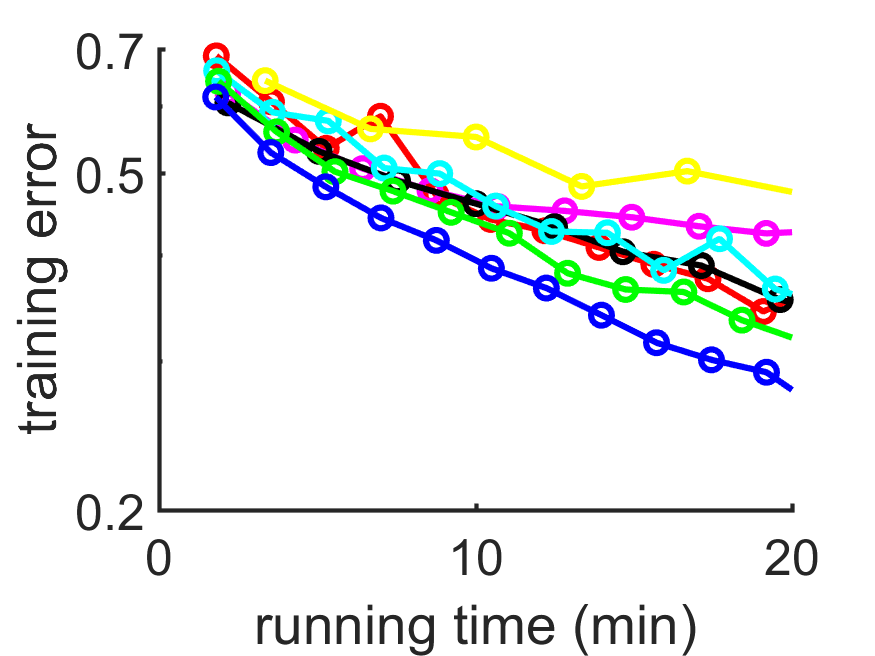}}
  \subfigure[p=4]{\includegraphics[width=0.22\linewidth]{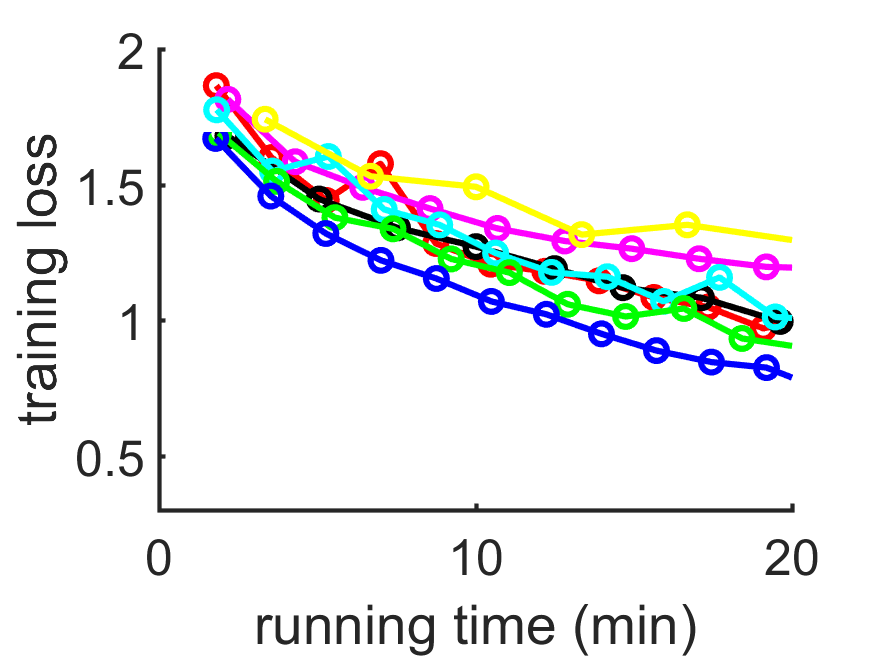}}
  \subfigure[p=4]{\includegraphics[width=0.22\linewidth]{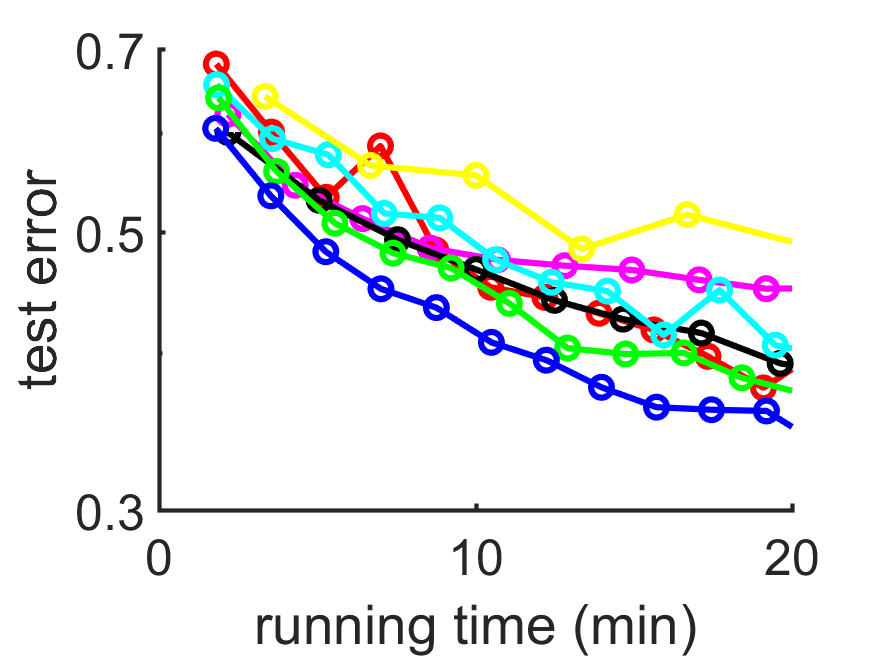}}
  \subfigure[p=4]{\includegraphics[width=0.22\linewidth]{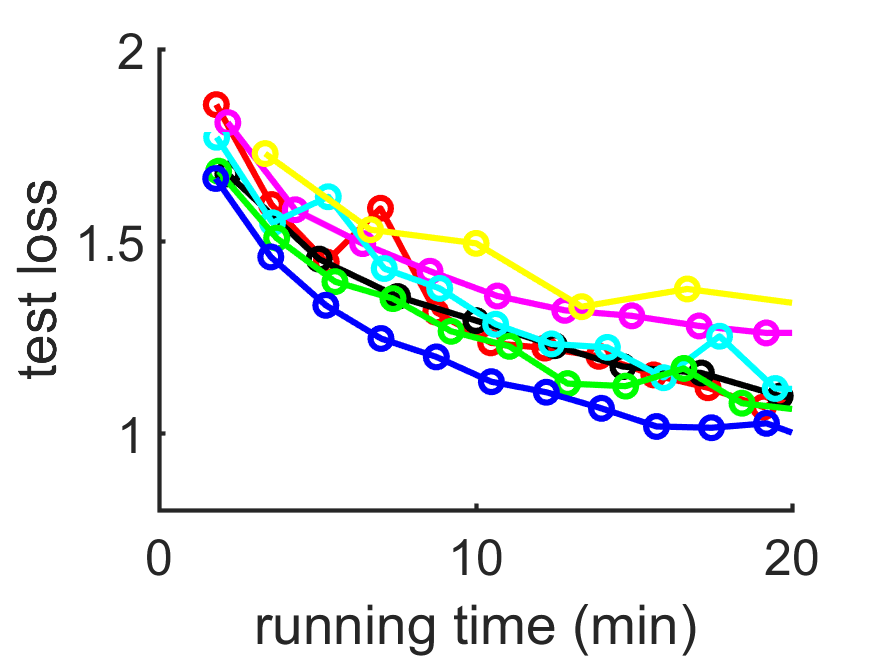}}\\
 \subfigure[p=8]{\includegraphics[width=0.22\linewidth]{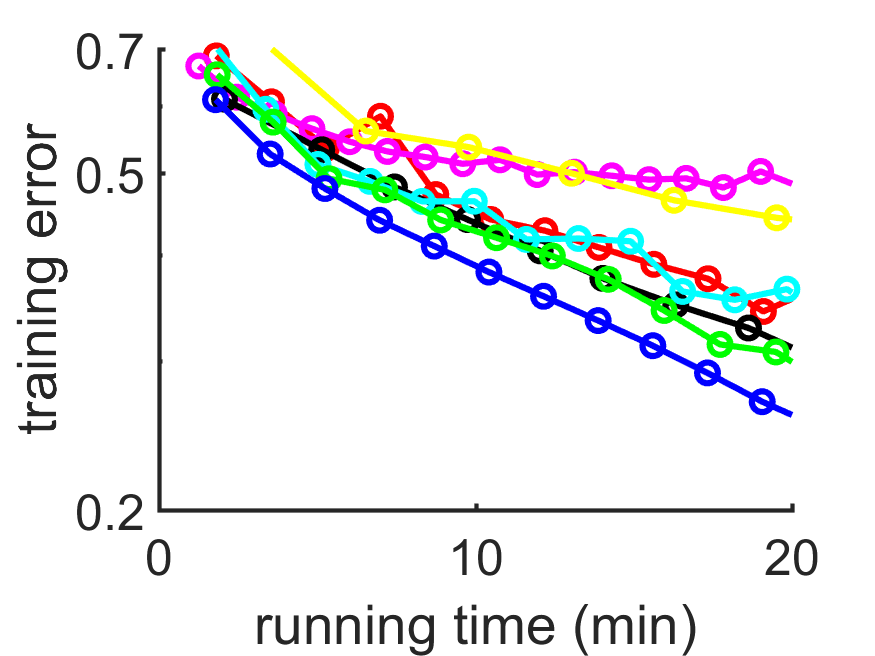}}
  \subfigure[p=8]{\includegraphics[width=0.22\linewidth]{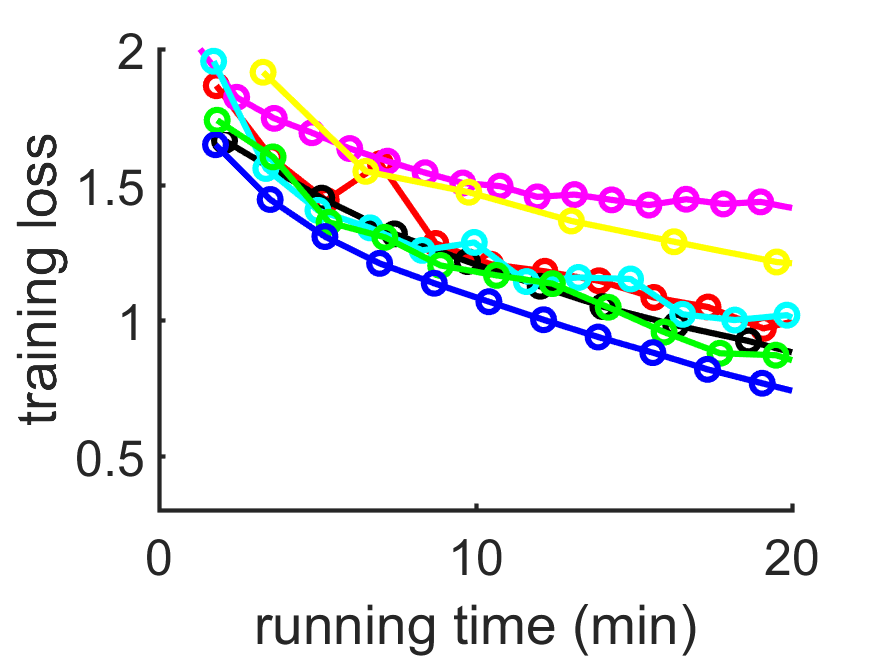}}
  \subfigure[p=8]{\includegraphics[width=0.22\linewidth]{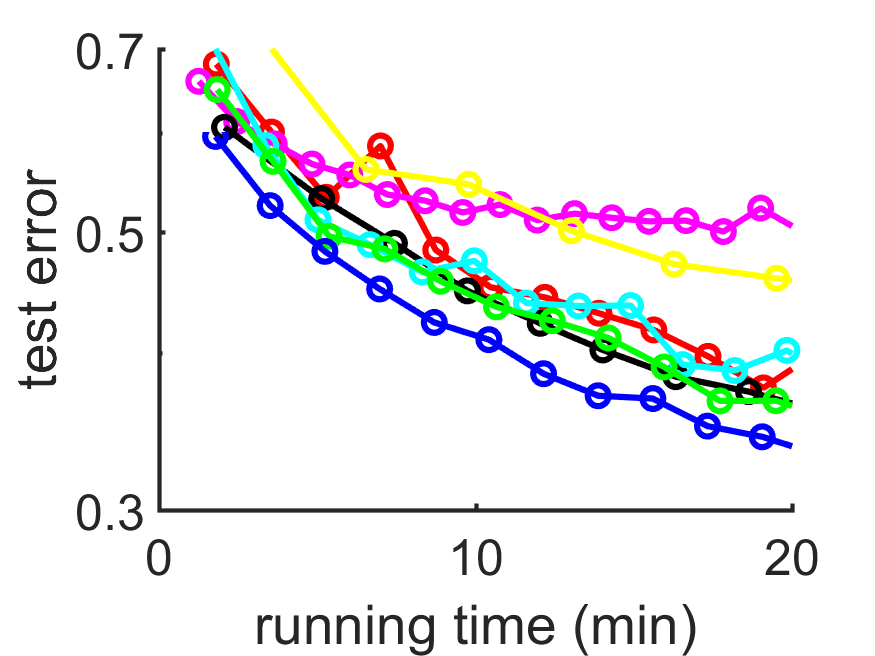}}
  \subfigure[p=8]{\includegraphics[width=0.22\linewidth]{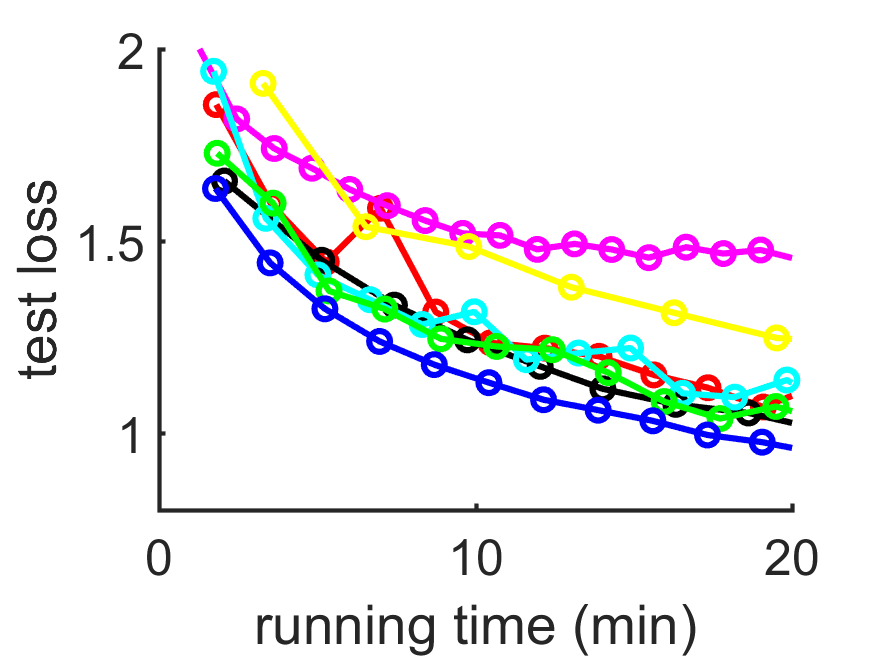}}\\
  \caption{Experiments on CIFAR-10.}\label{fig:cifar10mainex}
\end{figure*}
The author of \textit{EASGD} has tried $\tau = \{1,4,16,64\}$ and find the optimal $\tau = \{16,64\}$.
The tendency to achieve better test performance with larger $\tau$ is a strength for the \textit{EASGD} algorithm.
According to this characteristic, we tested the communication periods from the following set $\tau = \{10^1,50,10^2,10^3,10^4\}$ for \textit{EASGD}, \textit{WASGD} and \textit{WASGD+}.
In order to guarantee the fairness of the results, we compare the performance after two epochs of training.
\autoref{fig:cifartau} depicts the performance of two methods under different processors.
As shown in the picture, when $\tau$ and $p$ are the same, \textit{WASGD} can achieve better performance than \textit{EASGD}.
\textit{WASGD+} is outperforming all other benchmarks.
We also find that $\tau=1000$ in \textit{WASGD+} achieves almost the same performance as $\tau=50$ in \textit{EASGD} under the same $p$.
\begin{figure*}[!t]
  \centering
  \subfigure[p=2]{\includegraphics[width=0.21\linewidth]{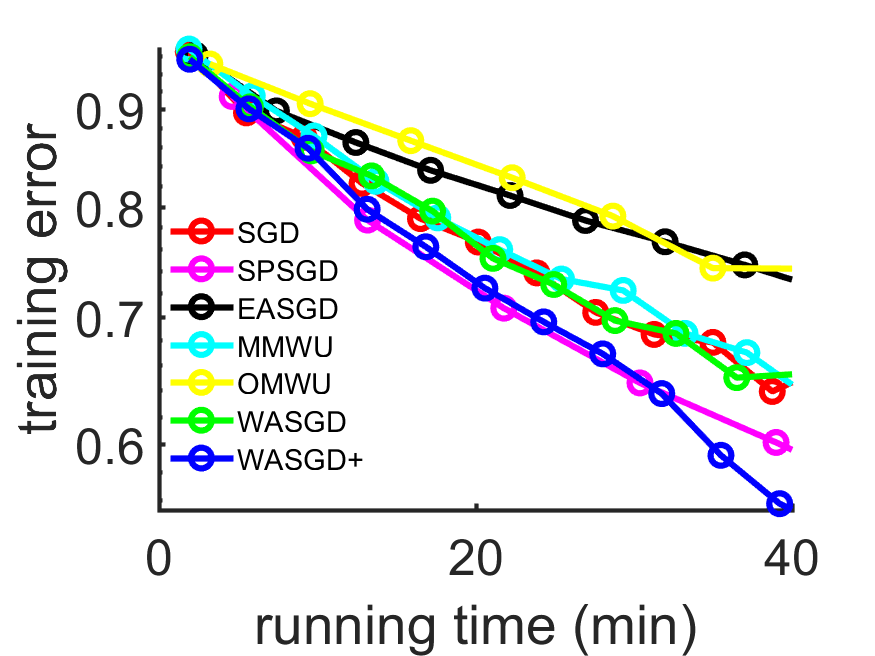}}
  \subfigure[p=2]{\includegraphics[width=0.21\linewidth]{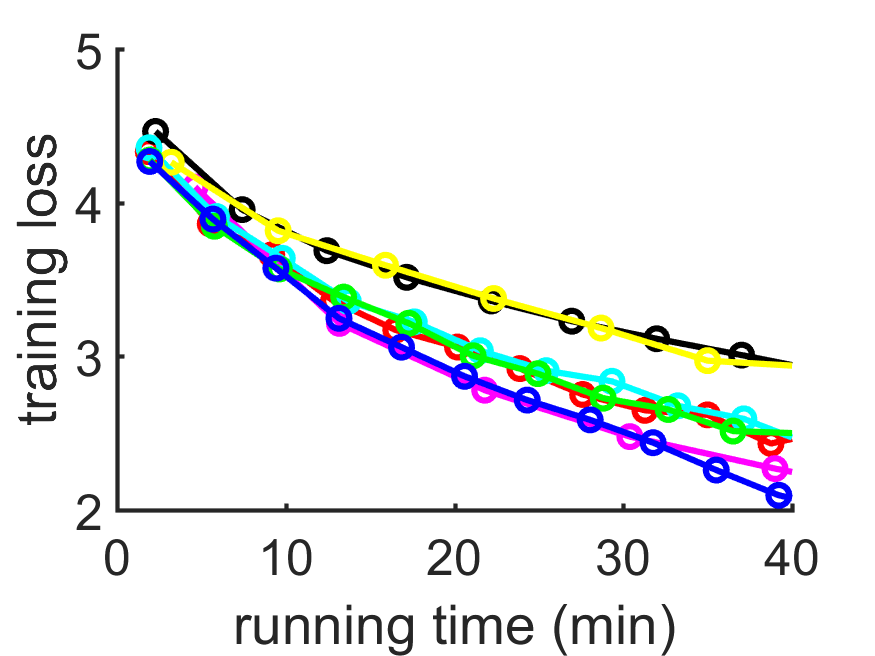}}
  \subfigure[p=2]{\includegraphics[width=0.21\linewidth]{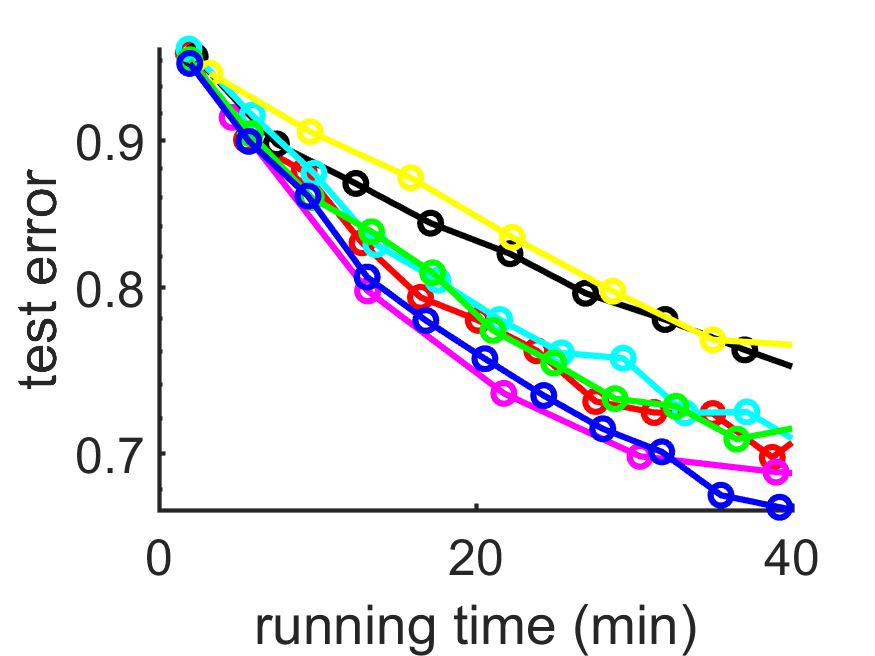}}
  \subfigure[p=2]{\includegraphics[width=0.21\linewidth]{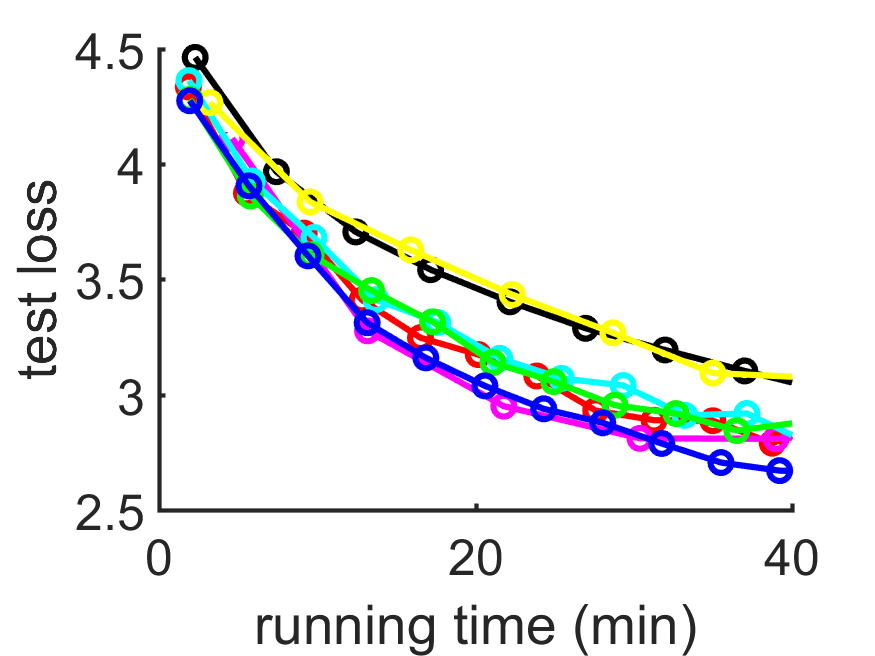}}\\
  \subfigure[p=4]{\includegraphics[width=0.21\linewidth]{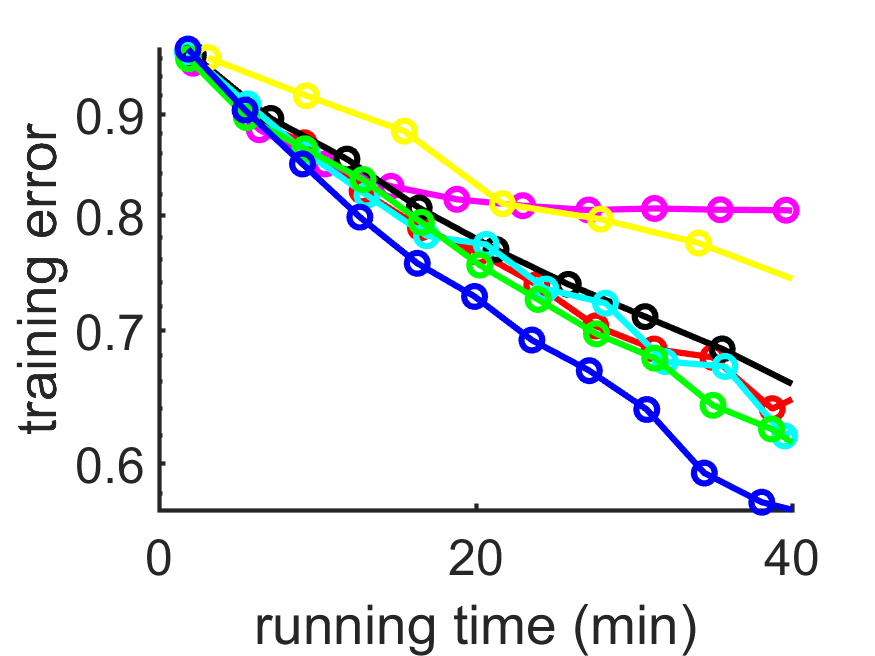}}
  \subfigure[p=4]{\includegraphics[width=0.21\linewidth]{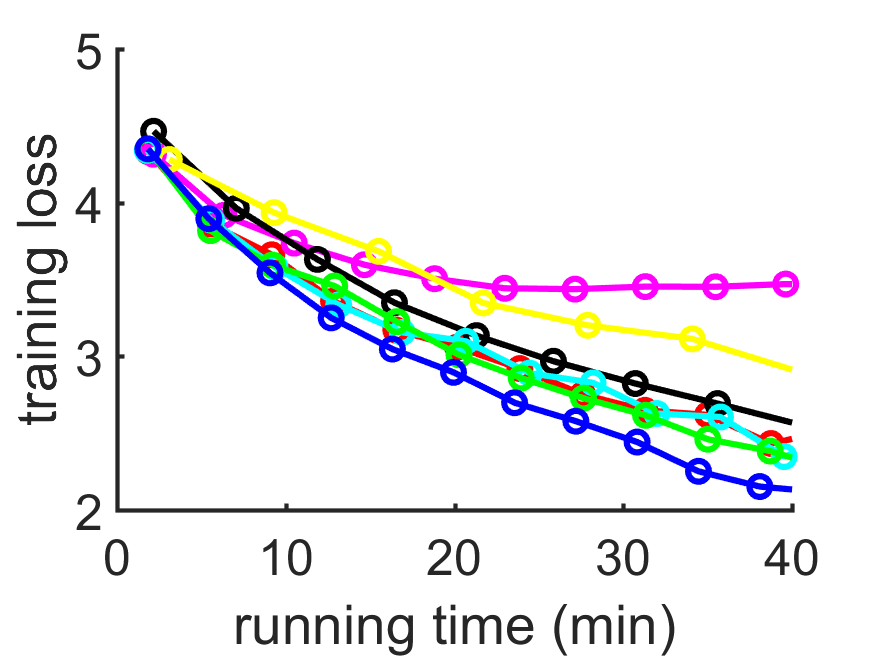}}
  \subfigure[p=4]{\includegraphics[width=0.21\linewidth]{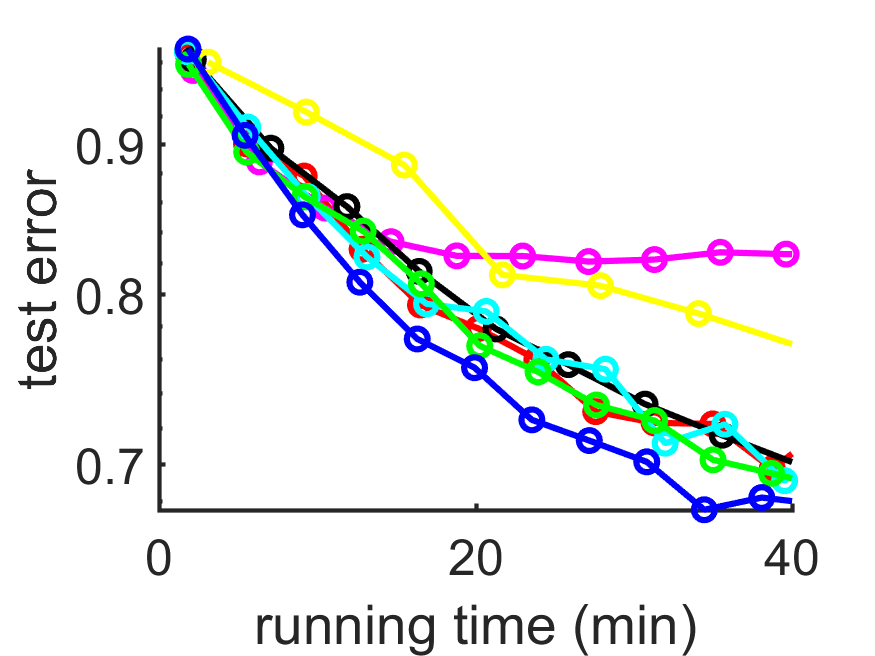}}
  \subfigure[p=4]{\includegraphics[width=0.21\linewidth]{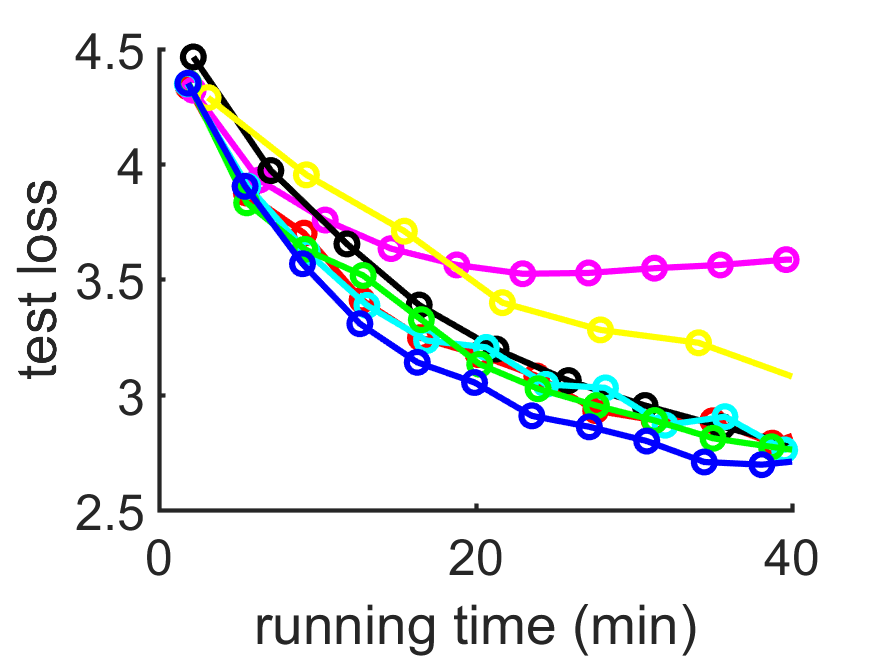}}\\
 \subfigure[p=8]{\includegraphics[width=0.21\linewidth]{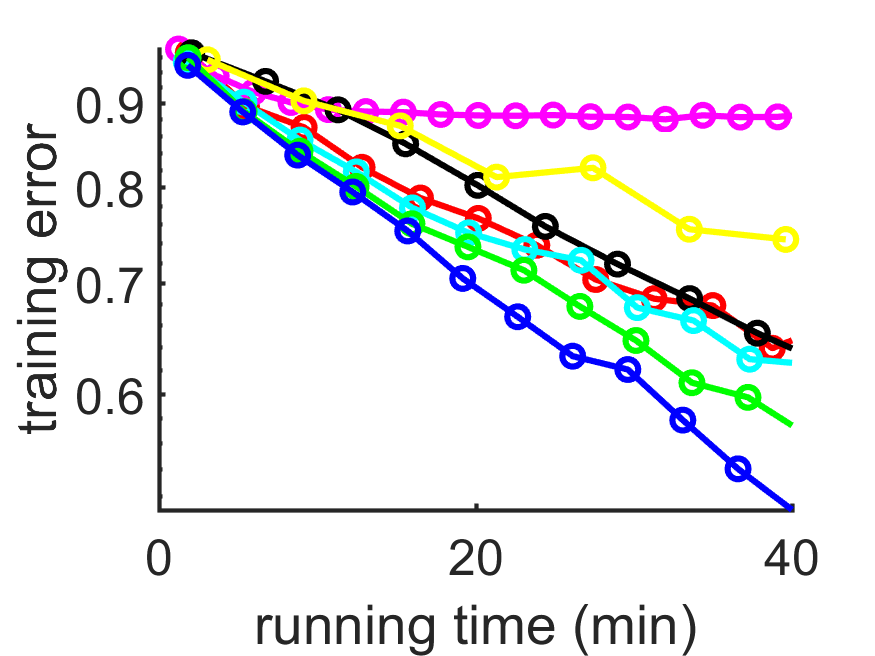}}
  \subfigure[p=8]{\includegraphics[width=0.21\linewidth]{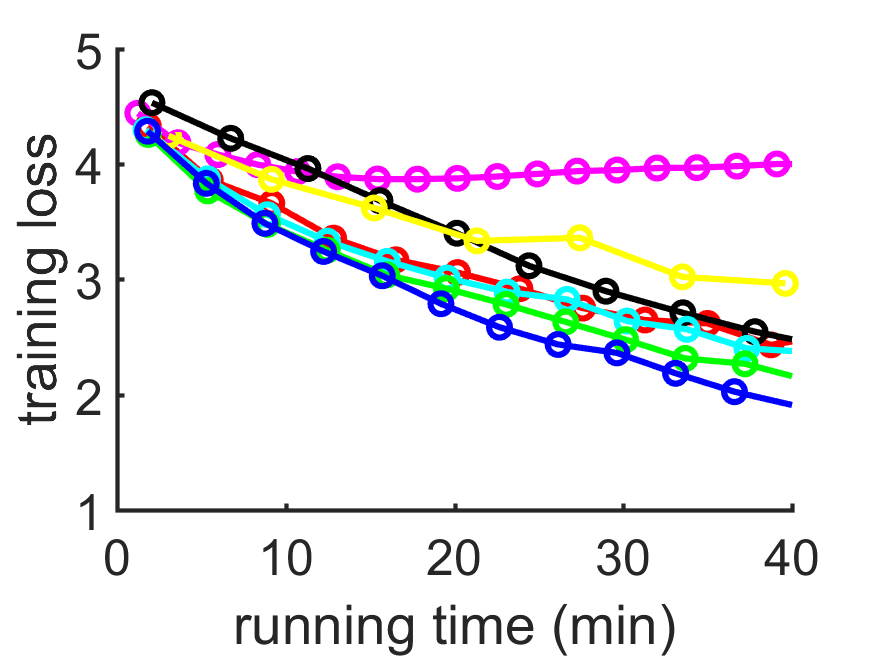}}
  \subfigure[p=8]{\includegraphics[width=0.21\linewidth]{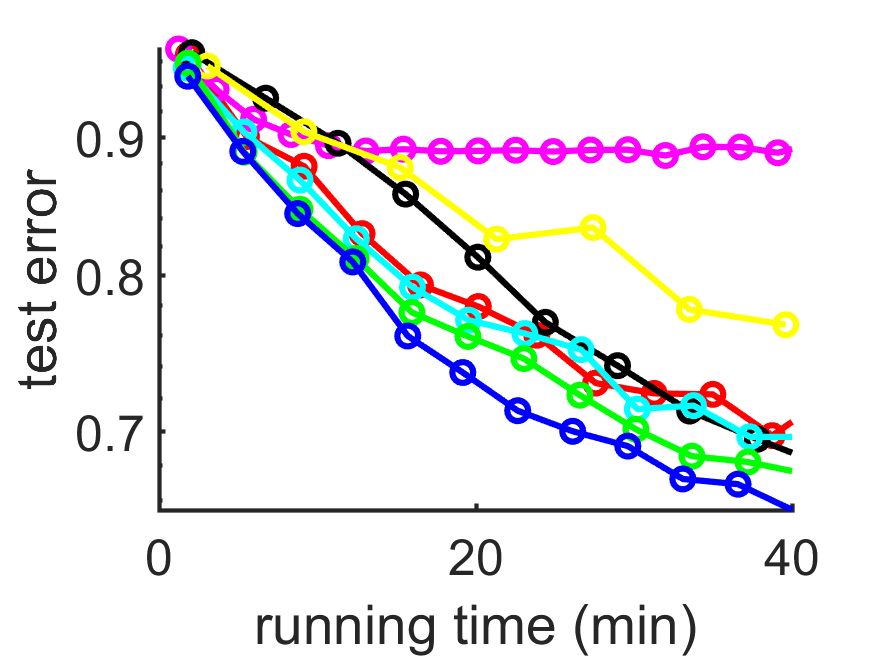}}
  \subfigure[p=8]{\includegraphics[width=0.21\linewidth]{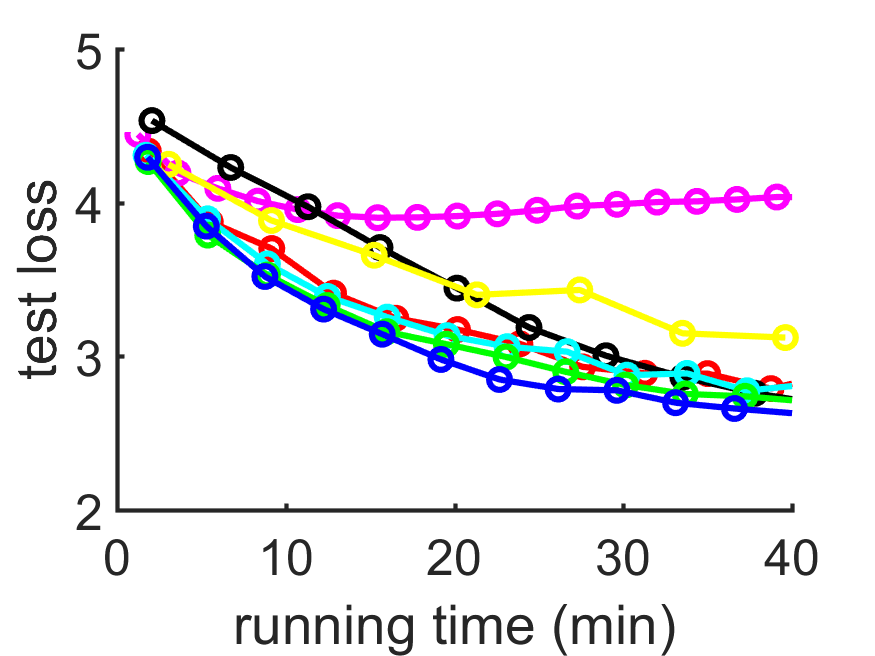}}\\
  \caption{Experiments on CIFAR-100.}\label{fig:cifar100mainex}
\end{figure*}
\subsection{Results}
\begin{figure}[tp!]
  \centering
  \subfigure[p=4]{\includegraphics[width=0.25\linewidth]{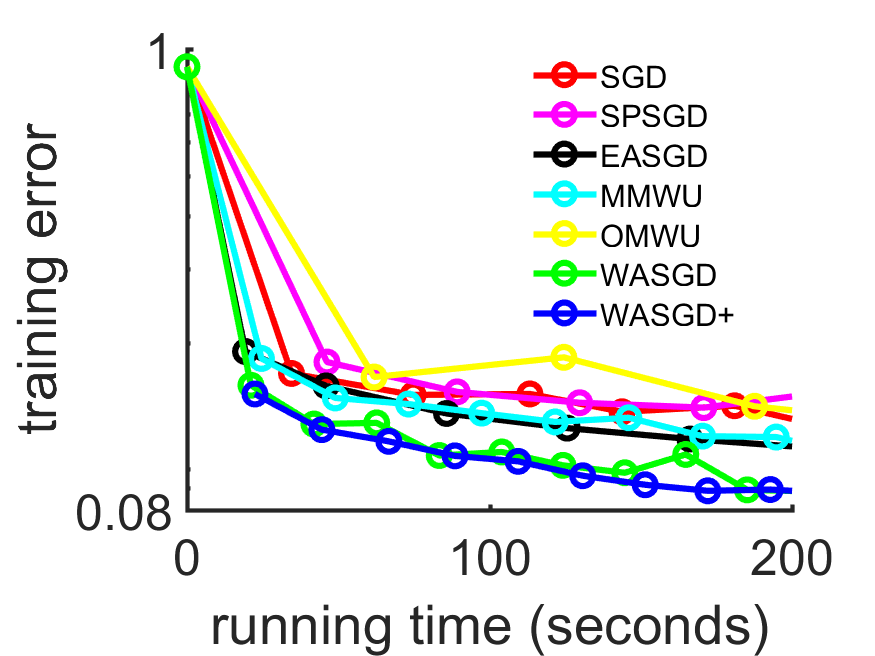}}
  \subfigure[p=4]{\includegraphics[width=0.25\linewidth]{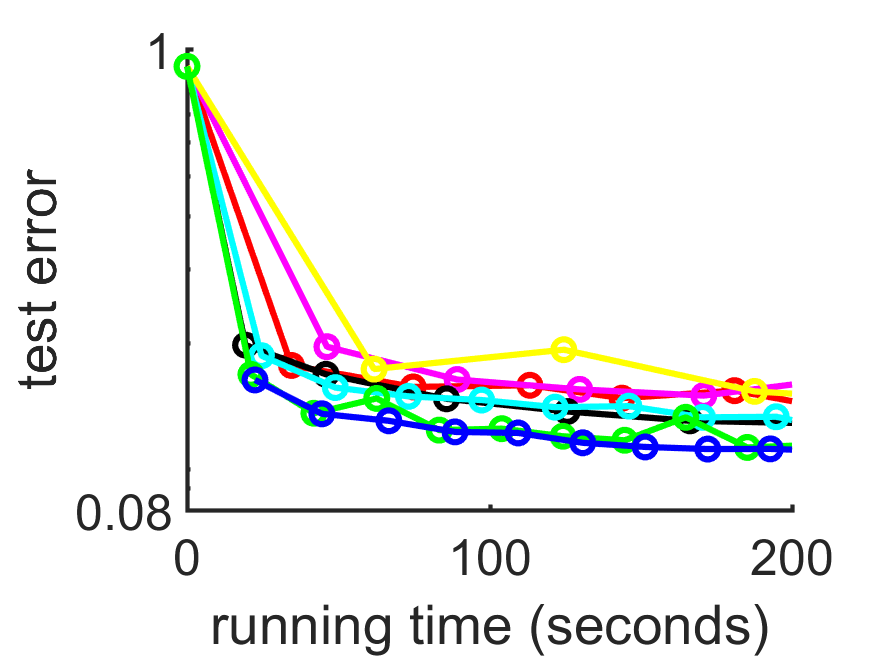}}\\
   \subfigure[p=8]{\includegraphics[width=0.25\linewidth]{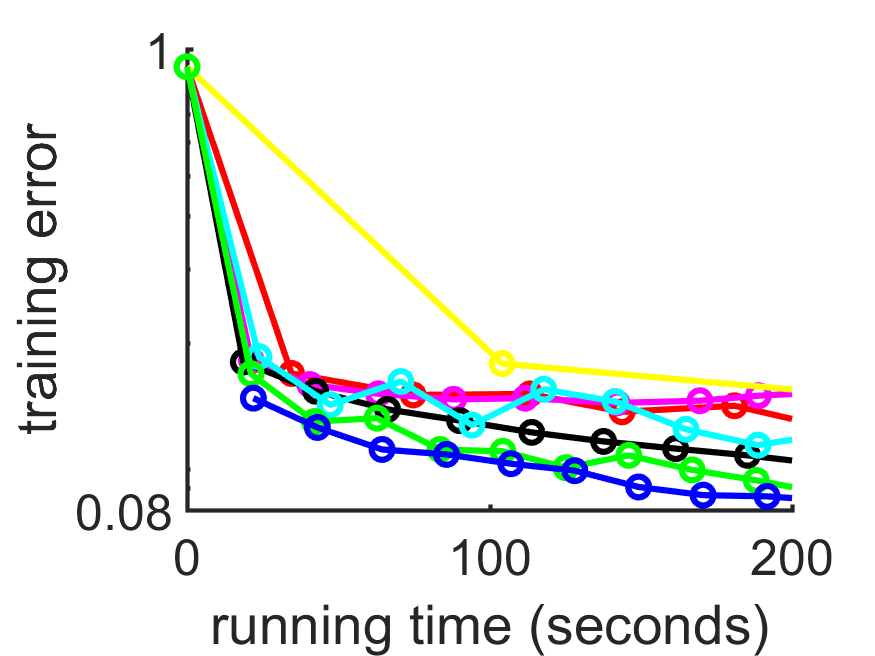}}
    \subfigure[p=8]{\includegraphics[width=0.25\linewidth]{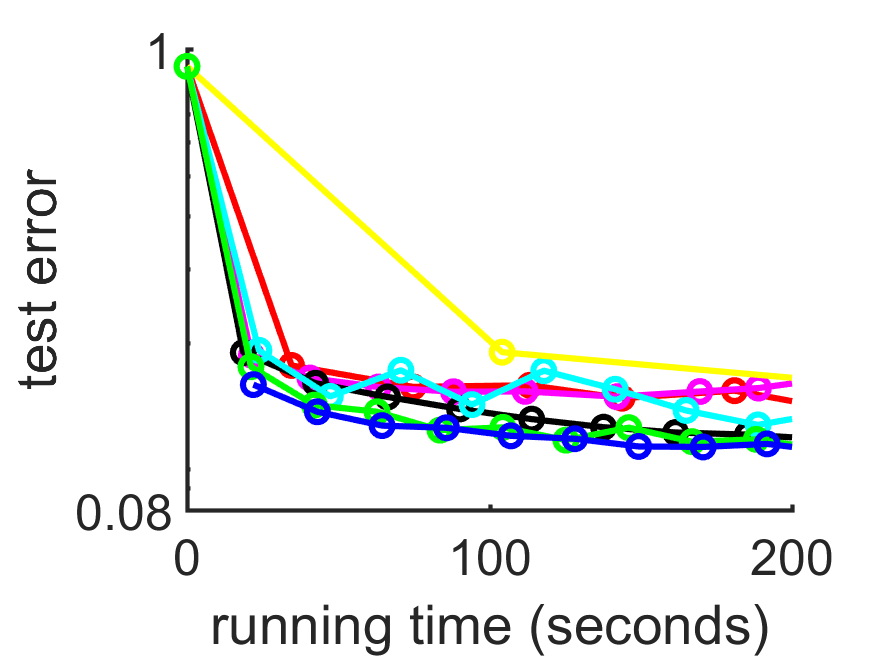}}\\
   \subfigure[p=16]{\includegraphics[width=0.25\linewidth]{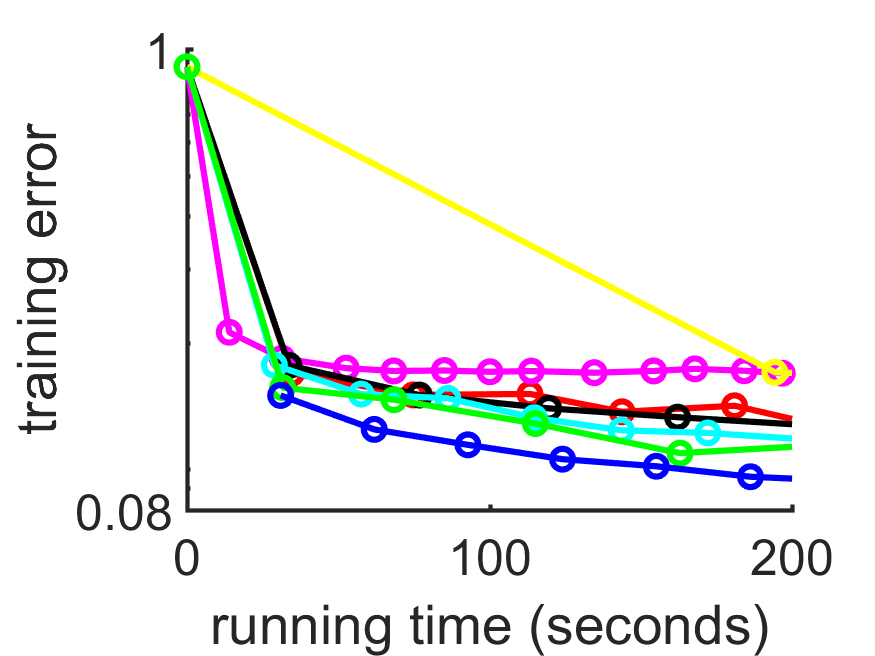}}
  \subfigure[p=16]{\includegraphics[width=0.25\linewidth]{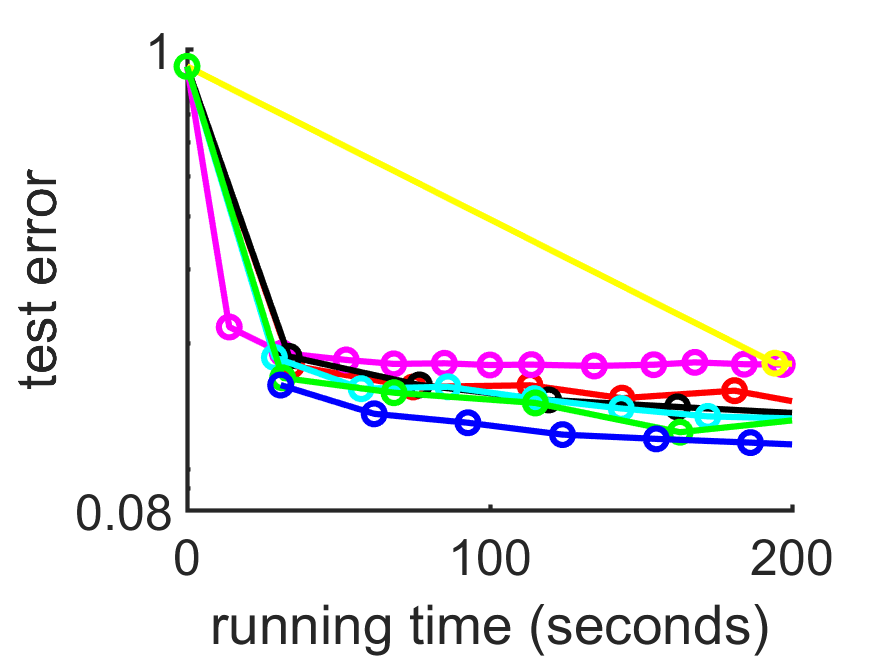}}
  \caption{Experiments on Fashion-MNIST.}\label{fig:fmnistmainex}
\end{figure}

\autoref{fig:cifar10mainex} and \autoref{fig:cifar100mainex} plot the training error, training loss, test error, and test loss of our method and the baselines with different parallel sizes on \textit{CIFAR-10}. 
As can be seen, with the increase of computing processors (workers), the performance of \textit{SPSGD} becomes unstable, due to the fact that averaging the parameters in non-convex cases leads to divergence.
\textit{MMWU} shares the same performance with the sequential SGD which means the sequential SGD is already arriving at an optimal performance of itself.
The worse performance of \textit{OMWU} is caused by the heavy burden of computing the weight based on the whole training samples.
As expected, our novel \textit{WASGD+} method consistently outperforms all the baselines. 

\autoref{fig:fmnistmainex} demonstrates the results on \textit{Fashion-MNIST} and \autoref{fig:mnistmainex} depicts the results on \textit{MNIST}.
Same conclusions are reached.
We can see that our \textit{WASGD+} method consistently outperforms all the benchmarks. 
Although \textit{EASGD} may have a better performance with a careful parameter selection, the algorithm does not offer a standard way of yielding good performance. 
Based on the results, we also find that our method is capable of maintaining a stable performance with large communication period $\tau$.

\begin{figure}[h]
  \centering
  \subfigure[p=4]{\includegraphics[width=0.25\linewidth]{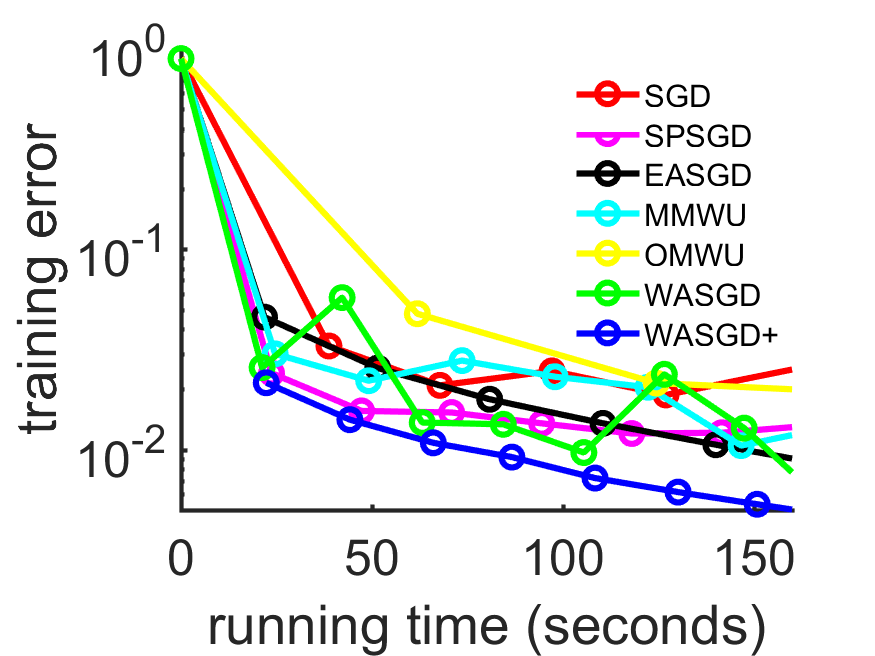}}
  \subfigure[p=4]{\includegraphics[width=0.25\linewidth]{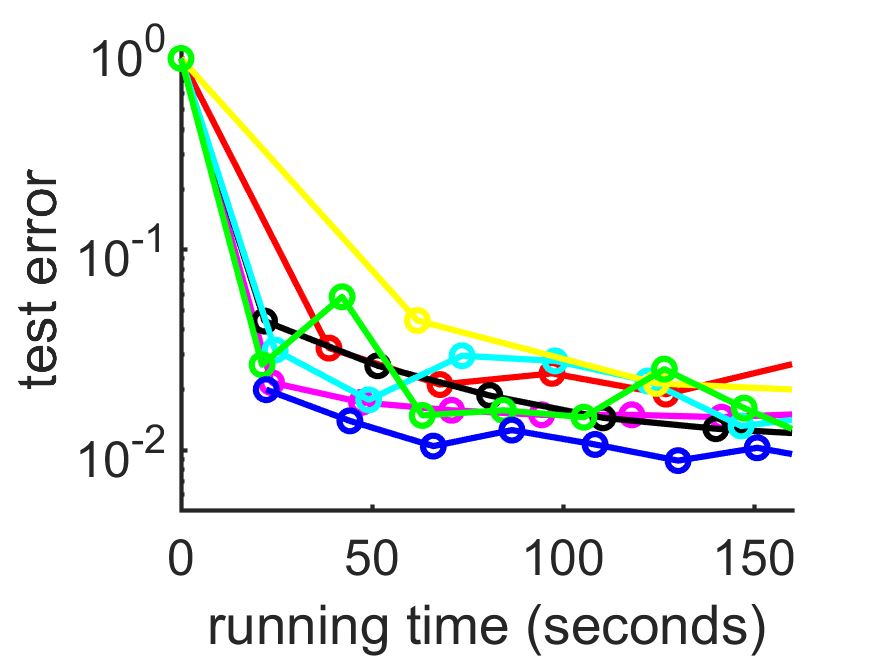}}\\
   \subfigure[p=8]{\includegraphics[width=0.25\linewidth]{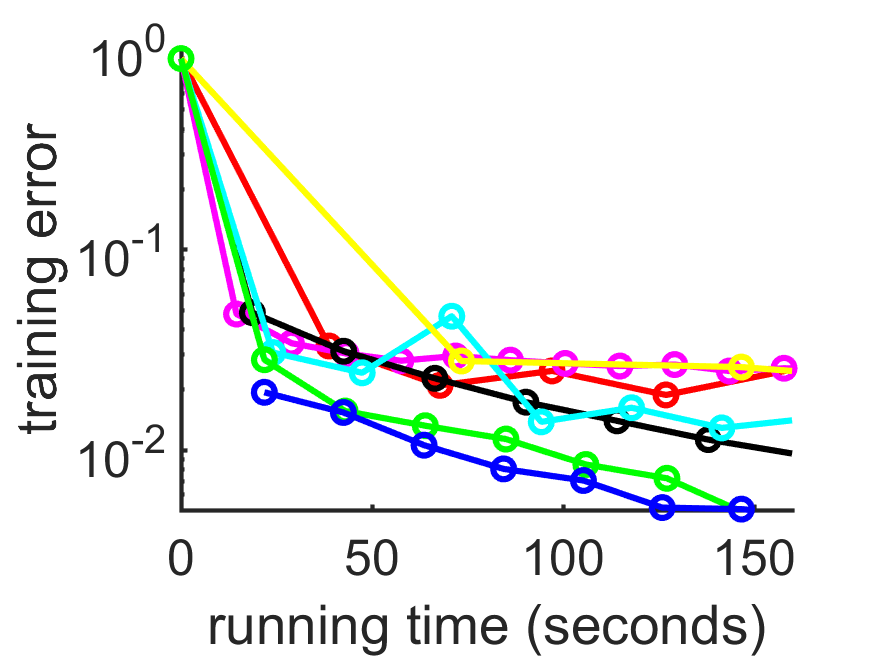}}
   \subfigure[p=8]{\includegraphics[width=0.25\linewidth]{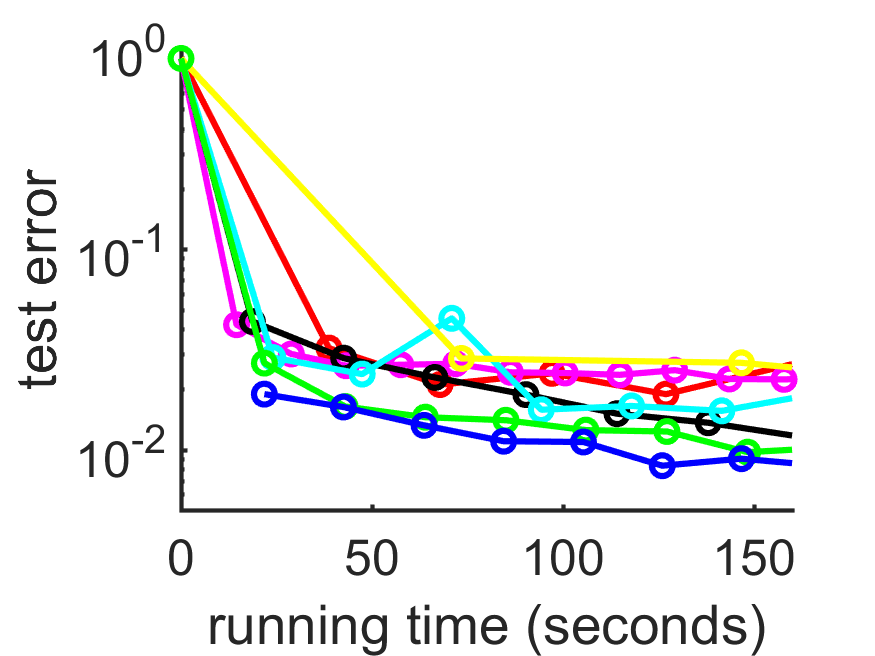}}\\
   \subfigure[p=16]{\includegraphics[width=0.25\linewidth]{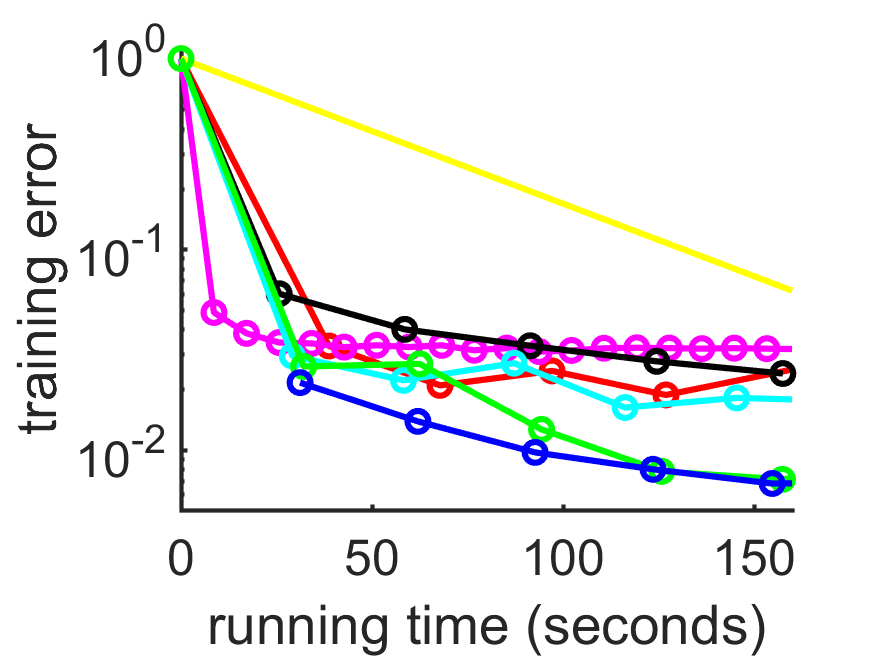}}
  \subfigure[p=16]{\includegraphics[width=0.25\linewidth]{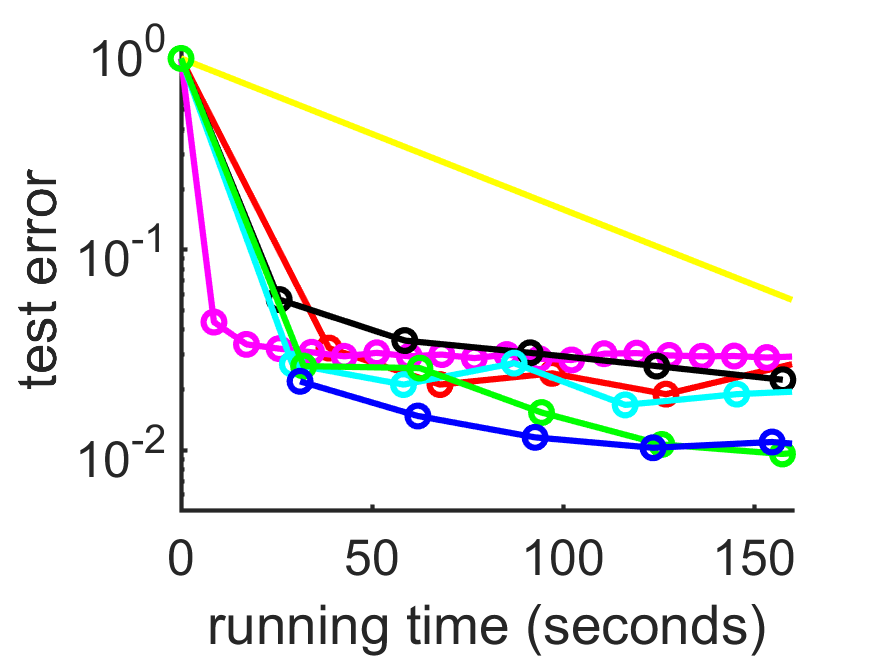}}
  \caption{Experiments on MNIST.}\label{fig:mnistmainex}
\end{figure}

\section{Related Work}
\label{sec:relatedwork}
Multicore  and  distributed  optimization  algorithms  have  aroused  much  attention  in  recent  years \cite{chen2016revisiting}.
Based on the communication method, parallel SGD can be divided into two major types, including (i) centralized algorithms, (ii) decentralized algorithms.
\subsection{Centralized algorithm}
For the implementation of centralized algorithms, the proposed strategy  \cite{zhang2015deep} is that the master will only be responsible for receiving the update parameters.
Once the local worker sends its newest parameters to the master, the master will change part of its current parameters and return which to the local worker.
After receiving the modified result from the master, the local worker also changes part of itself based on information received.
Due to the heavy burden of computing gradients and updating parameters, the case when only the master was in charge of updating the parameters was further discussed in \cite{chen2016revisiting}.
The main task of the local workers  was to calculate the gradients
based on the current parameters and sending them to the master.
In order to avoid the staleness of local workers, they also added extra $b$ backup workers in the training process.
The master should follow the "first come, first serve" principle to update the parameters.
Upon receiving enough number of gradients, the master will update the parameters and send the new parameters to all the local workers.
For those delayed results based on the previous parameters, the master will reject them. 

Following the idea that the first few steps are highly related to
the final result, a new warm up scheme \cite{goyal2017accurate} was adopted in the parallel method.
Such strategy that applies small learning rate at the beginning of training can ensure that the algorithm will approach the local optimal smoothly at the beginning and land at a proper position while converging in the end.
To devote sufficient sources on the searching process, we adopt the decentralized method in our framework.
\subsection{Decentralized algorithm}
Regarding decentralized algorithms, the work in \cite{zinkevich2010parallelized} is considered as the first parallel SGD method.
This method was also the first method attempts to introduce the ideal of weight.
For dealing with large data, it proposed to equally split the data to local workers and let each worker updates itself based on the assigned part of data.
Once the local worker finishes its job, it will send its parameters to all the other workers.
When all the local workers finish their jobs, they will average the sum of received parameters and their own parameters.
Such process can be seen as a special case of equally weighted case by setting the communication period equal to the size of assigned data which is also contained in our method.
Our method can be considered as a more generalized form of \cite{zinkevich2010parallelized}.

In order to update the parameters as soon as possible, the strategy of   \cite{recht2011hogwild} suggested to open the access of global parameters to all the local workers.
As long as the local workers get the updated direction of the global parameters, they can directly modify it.
To avoid eliminating other workers' results, gradient sparsity is required.  
A more general version of \cite{recht2011hogwild} was proposed by \cite{pan2016cyclades}.  
It extended Hogwild to a more common case by letting each worker updates the global model vector stored in a shared memory following some topological structures (e.g. bipartite graph).
By applying such topological structure, it can eliminate the effect of overlapping updated result in the dense case.  
\subsection{Other work}
For the implementation environments, some studies are conducted in distributed systems \cite{dean2012large, zhang2015deep, chen2016revisiting, goyal2017accurate}; while some are implemented in a shared memory architecture  \cite{recht2011hogwild, pan2016cyclades, zhang2016hogwild++}.

Here we also summarize several newly proposed parallel SGD methods.
The work in \cite{ming2018distributed} proposed a distributed version of SGD with variance reduction named \textit{DisSVRG}.
In order to accelerate the training process, it uses an acceleration factor to control the adaptive learning rate along with an adaptive sampling strategy.
The paradigm of \cite{alistarh2018convergence} estimated the cost of asynchronous communication by providing a closer upper and lower bounds.
By discovering the balance between the maximum delay and the SGD convergence rate, it enhances the performance of the parallel SGD.
Finally, the work in \cite{teng2018bayesian} applied the Bayesian method to mitigate the straggler effect in synchronous gradient-based optimization and achieved speedup in the parallel SGD training process.
\section{Concluding Remarks}
\label{sec:conclusion}
In this paper, we modified the parallel SGD method from our previous work \cite{guo2019weighted}, and coined this novel algorithm  \textit{WASGD+}.
In the new method, we connect the quality of samples with sample orders, providing a reasonable way of finding the relatively optimal sample orders based on the performance of all local workers. 
We also provide a more flexible weight evaluating function that can satisfy different requirements of weighing strategies (e.g. equally weighted or bcast the best one \cite{zhang2019parallel}).
The newly proposed weight evaluating function also builds a connection between two independent stochastic processes: SA and SGD.
Since SA has already been thoroughly studied, we can thereby explore the parallel SGD more efficiently.
After proving the convergence and stability of our method, we apply it to CNN, and compare to the state-of-the-art techniques in handling parallel stochastic optimization problems.
Our results suggest a consistent
superiority of our method over other existing methods under various
datasets and with different numbers of computing cores.
Taken together, these affirm the effectiveness and efficiency of our new \textit{WASGD+} method in training deep learning models.
 \bibliographystyle{ieeetr} 
\bibliography{ijcai19.bib} 

\newpage
\setcounter{page}{1}
\appendix
\section{Proof of Property 2}
\setcounter{property}{1}
\begin{property}
If we set equally weighted case as baseline and try different values of $\tilde{a}$. 
As $\tilde{a} \rightarrow \infty$, the performance of weighted case must be worse than the baseline,  while as $\tilde{a} \rightarrow 0$, the weighted case will approach the performance of the baseline. 
\end{property}
\begin{proof}
When $\tilde{a} \rightarrow \infty$, based on Property \ref{property1}, we know that the difference between the weight of smallest energy and others will be increased. 
Such strategy gradually ignores the contribution from other workers and can be seen as a specific case of sequential one.
Relied on the result from \cite{zinkevich2010parallelized}, equally weighted case performs better than sequential algorithm.

As $\tilde{a} \rightarrow 0$, the difference between the weight of each local worker will be decreased.
As the difference of weight among each worker become smaller and smaller, it will gradually transfer to the equally weighted case.
\end{proof}

\section{Additional Algorithms}
Here we show algorithms of the original \textit{WASGD} and the asynchronous version of \textit{WASGD+}.
\subsection{Main Algorithm for WASGD}
Algorithm \ref{alg:4} was used to calculate results (e.g. loss, accuracy) in \textit{WASGD} on the dataset.
The weight evaluating function was set to be $\frac{1}{h}$.
\begin{algorithm}[h]
\SetAlgoLined
\SetKwInOut{Input}{Input}
 \Input{learning rate $\eta$, decision choice $\beta$, estimation number $m$, communication period $\tau \in N$, M samples in datasets $D$, initial parameters $x^i=x$, $p$ local workers.}
 
 \While{The stopping criteria is not met}{
 \For{$l$ = 0,1,...,$\frac{M}{\tau}$}{
 $h^i\leftarrow 0$\;
 \For {k =$1,...,\tau$}{
  $x\leftarrow x^i$\;
 \If{ $k \geq \tau-m$}{
        $h^i \leftarrow h^i+loss(x^i,D\left[l\frac{M}{\tau}+A[k]\right]);$\\
       
        }
  
  \If{ \textit{k} divides $\tau$ }
  {
    Send $(h^i,x_k^i,i)$ to $p-1$ workers\;
    Wait for$(h^j,x_k^j,j)$ from $p-1$ workers\;
    Arrange $h$ and $x$ based on received index\;
    $\theta^i \leftarrow \frac{{1}/{h^i}}{\sum_{j=1}^p {1}/{h^j}}$\;
    $x^i \leftarrow {\sum_{j=1}^{p}\theta^j x^{j}}$\;
    $h^i \leftarrow 0$ \;
  }
    
    $x^i \leftarrow x^i-\eta g^i_{ite}(x,D\left[l\frac{M}{\tau}+A[k]\right])$\;
      }
 }
 }
 \caption{Synchronous WASGD: Processing by worker \textit{i}}
 \label{alg:4}
\end{algorithm}

\subsection{Asynchronous WASGD+}
In terms of asynchronous method, We add additional $b$ backup workers.
Each local worker is only responsible for updating its own parameters and go through the samples in different orders. 
The iteration will increase by one after each update.
when the iterations are the multiples of communication period $\tau$, as long as the local worker received $p-1$ results, it will begin to update the result without waiting other b local workers.
Upon it receiving an aggregation result based on the results from $p-1$ workers, the search will continue.
During communication, the estimated loss will be used to generate score and weight for the current parameters' performance. 
Algorithm \ref{alg:3} provides the main process of asynchronous version about our method.
\begin{algorithm}[h]
\SetAlgoLined
\SetKwInOut{Input}{Input}
 \Input{learning rate $\eta$, decision choice $\beta$, estimation number $m$, divided order part number $n$, divided communication part number $c$, communication period $\tau \in N$, M samples in datasets $D$, initial parameters $x^i=x$, $p+b$ local workers.}
 
 B $\leftarrow$ RecordIndex($D, m,c,\tau$)\;
 Scores, Seed $\leftarrow$ zeros($n, 1$)\;
 \While{The stopping criteria is not met}{
 \For{$l$ = 0,1,...,$n-1$}{
 $h^i,score \leftarrow 0$\;
 $A,$ Seed[$l$]=OrderGen(Scores[$l$], Seed[$l$], $\frac{M}{n}$);\\
 \For {k =$1,...,\frac{M}{n}$}{
  $x\leftarrow x^i$\;
 \If{ $k \in B$}{
        $h^i \leftarrow h^i+loss(x^i,D\left[l\frac{M}{n}+A[k]\right]);$\\
       
        }
  
  \If{ \textit{k} divides $\tau$ }
  {
    Send $(h^i,x_k^i,i)$ to $p+b-1$ workers\;
    Wait for$(h^j,x_k^j,j)$ from $p+b-1$ workers\;
    Arrange $h$ and $x$ based on received order\;
    \If{ worker $i$ receives $p-1$ results}
    {
    $\theta^i \leftarrow \frac{e^{-\tilde{a}h^i/\sum_{j=1}^p h^j}}{\sum_{g=1}^p e^{-\tilde{a}h^g/\sum_{j=1}^p h^j}}$\;
    $x^i \leftarrow x^i(1-\beta)+\beta{\sum_{j=1}^{p}\theta^j x^{j}}$\;
    $score+\leftarrow$ Judge($h^1,...h^p,i,p$)\;
    $h^i \leftarrow 0$ \;
    }
  }
    
    $x^i \leftarrow x^i-\eta g^i_{ite}(x,D\left[l\frac{M}{n}+A[k]\right])$\;
      }
   Scores[$l$]=$score$
 }
 }
 \caption{Asynchronous WASGD+: Processing by worker \textit{i}}
 \label{alg:3}
\end{algorithm}
\end{document}